\newtheorem{lemma}{Lemma}
\newtheorem{corollary}{Corollary}
\newtheorem{proposition}{Proposition}
\newtheorem{theorem}{Theorem}
\theoremstyle{definition}
\newtheorem{assumption}{Assumption}
\newtheorem{remark}{Remark}
\def\rvx{{\mathbf{x}}}
\def\rmX{{\mathbf{X}}}
\def\rmA{{\mathbf{A}}}
\def\rvg{{\mathbf{g}}}
\algrenewcommand\algorithmicrequire{\textbf{Input:}}
\algrenewcommand\algorithmicensure{\textbf{Output:}}
\begin{document}
	\title{Generalization Bounds and Model Complexity for Kolmogorov–Arnold Networks}
	\author[1]{Xianyang Zhang}
	\author[2]{Huijuan Zhou}
	\date{}
	\affil[1]{Department of Statistics, Texas A\&M University}
	\affil[2]{School of Statistics and Data Science, Shanghai University of Finance and Economics}
	
	\maketitle

\begin{abstract}
Kolmogorov–Arnold Network (KAN) is a network structure recently proposed in \cite{liu2024kan} that offers improved interpretability and a more parsimonious design in many science-oriented tasks compared to multi-layer perceptrons. This work provides a rigorous theoretical analysis of KAN by establishing generalization bounds for KAN equipped with activation functions that are either represented by linear combinations of basis functions or lying in a low-rank Reproducing Kernel Hilbert Space (RKHS). In the first case, the generalization bound accommodates various choices of basis functions in forming the activation functions in each layer of KAN and is adapted
to different operator norms at each layer. For a particular choice of operator norms, the bound scales with the $l_1$ norm of the coefficient matrices and the Lipschitz constants for
the activation functions, and it has no dependence on combinatorial parameters (e.g., number of nodes) outside of logarithmic factors. Moreover, our result does not require the boundedness assumption on the loss function and, hence, is applicable to a general class of regression-type loss functions.
In the low-rank case, the generalization bound scales polynomially with the underlying ranks as well as the Lipschitz constants of the activation functions in each layer. These bounds are empirically investigated for KANs trained with stochastic gradient descent on simulated and real data sets. The numerical results demonstrate the practical relevance of these bounds.
\end{abstract}


\section{Introduction}
The Kolmogorov-Arnold representation theorem (KART) states that if $f$ is a multivariate continuous function defined on $[0,1]^d$, then $f$ can be written as a finite composition of continuous functions of a single variable, and the binary operation of addition. More specifically, $f$ can be represented by a two-layer network in the form of
\begin{align*}
f(x_1,\dots,x_d)=\sum^{2d}_{i=0}\psi_i\left(\sum^{d}_{j=1}\psi_{i,j}(x_j)\right),    
\end{align*}
where $\psi_{i,j}:[0,1]\rightarrow \mathbb{R}$ and $\psi_i:\mathbb{R}\rightarrow\mathbb{R}$. Motivated by KART, \cite{liu2024kan} introduced the Kolmogorov–Arnold Networks (KANs), expanding on the original two-layer network to accommodate arbitrary depths. Mathematically, the KANs can be concisely described through the compositions of $L$ multivariate vector-valued functions:
\begin{align*}
\text{KAN}(\rvx)=\boldsymbol{\Psi}_{L}\circ \boldsymbol{\Psi}_{L-1}\circ\cdots \circ \boldsymbol{\Psi}_1(\rvx), \quad \rvx\in\mathbb{R}^{d_0},   
\end{align*}
where $\boldsymbol{\Psi}_i$ is a matrix of univariate functions: 
\begin{align*}
\boldsymbol{\Psi}_i=\begin{pmatrix}
\psi_{i,1,1} & \psi_{i,1,2} & \cdots & \psi_{i,1,d_{i-1}} \\
\psi_{i,2,1} & \psi_{i,2,2} & \cdots & \psi_{i,2,d_{i-1}} \\
\vdots  &    \vdots  &  \ddots  &   \vdots  \\
\psi_{i,d_i,1} & \psi_{i,d_i,2} & \cdots & \psi_{i,d_i,d_{i-1}}
\end{pmatrix}   
\end{align*}
with $\psi_{i,j,k}:\mathbb{R}\rightarrow \mathbb{R}$. Here, we have defined 
\begin{align}\label{eq-matrix}
\boldsymbol{\Psi}_i(\rvx)=\left(\sum^{d_{i-1}}_{j=1}\psi_{i,1,j}(x_{j}),\dots,\sum^{d_{i-1}}_{j=1}\psi_{i,d_i,j}(x_{j})\right)^\top    
\end{align}
for $\rvx=(x_1,\dots,x_{d_{i-1}})^\top\in\mathbb{R}^{d_{i-1}}$ and $1\leq i\leq L$. The node values for the $i$th layer are given by 
$$\rvx_i=(x_{i,1},\dots,x_{i,d_i})^\top=\boldsymbol{\Psi}_{i}\circ \boldsymbol{\Psi}_{i-1}\circ\cdots \circ \boldsymbol{\Psi}_1(\rvx)$$
where $x_{i,j}=\sum^{d_{i-1}}_{k=1}\psi_{i,j,k}(x_{i-1,k})$ is defined recursively. 

\begin{figure}[h]
    \centering
    \includegraphics[width=0.5\linewidth]{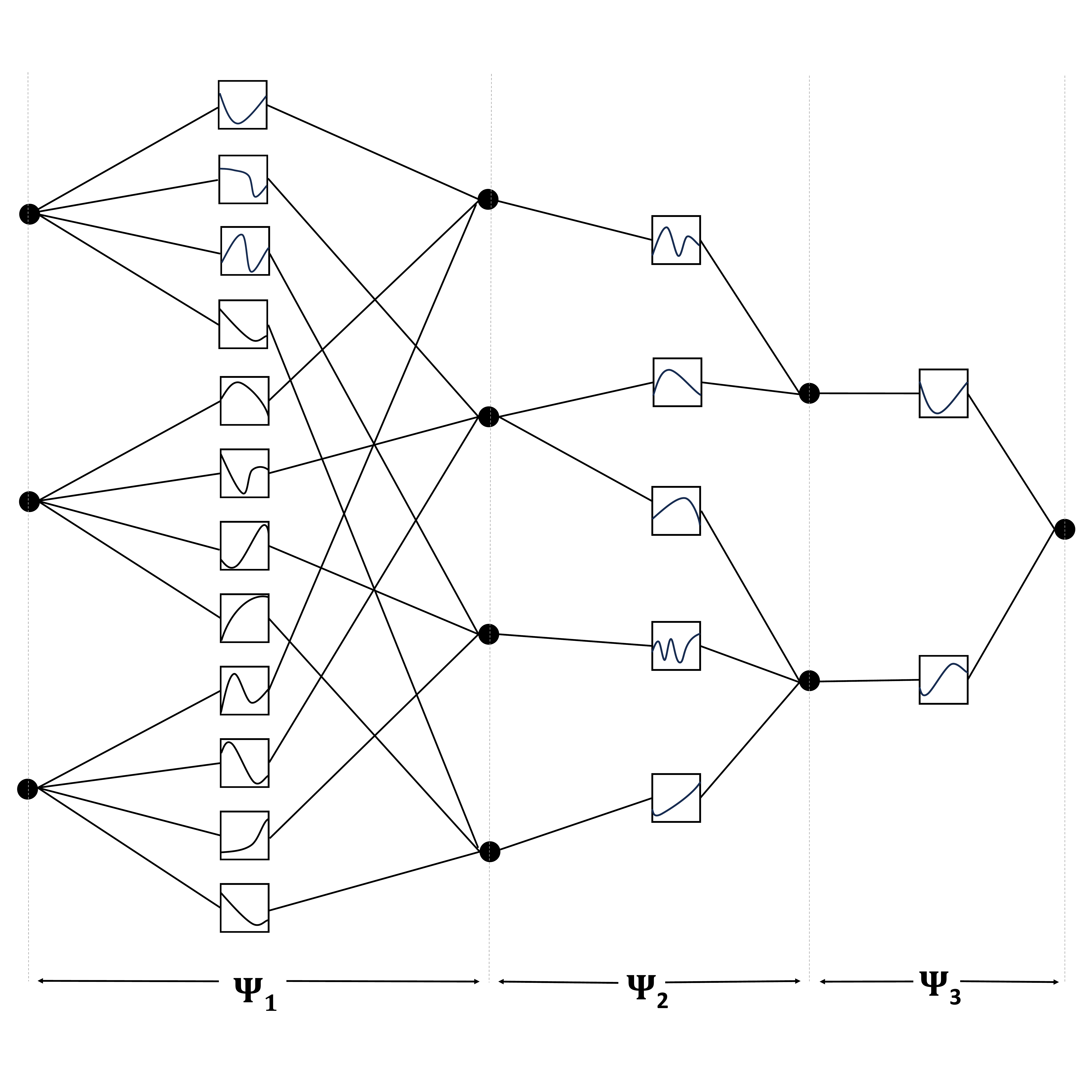}
    \caption{Illustration of Kolmogorov-Arnold Networks, where the edges are associated with one-dimensional trainable functions while the nodes perform summation.
    }
    \label{fig:kan}
\end{figure}

In contrast to multi-layer perceptions (MLPs), which use fixed activation functions on nodes, KANs utilize learnable activation functions along the edges. This architectural shift eliminates the need for conventional linear weight matrices. Instead, KANs replace each weight parameter with a trainable one-dimensional function. In KANs, nodes serve as aggregation points, summing incoming signals without applying non-linear transformations; see Figure \ref{fig:kan} for an illustration. \cite{liu2024kan} demonstrated the potential of KANs for science-oriented tasks due to its accuracy and interpretability. Subsequent research has explored the application of KANs in various domains such as graphs \citep{bresson2024kagnns,de2024kolmogorov,kiamari2024gkan}, 
partial differential equations \citep{wang2024kolmogorov,shukla2024comprehensive}, 
operator learning \citep{abueidda2024deepokan,nehma2024leveraging}, 
tabular data \citep{poeta2024benchmarking}, 
time series \citep{genet2024temporal,vaca2024kolmogorov,xu2024kolmogorov}, human activity recognition \citep{liu2024ikan}, 
neuroscience \citep{herbozo2024kan,yang2024endowing}, 
quantum science \citep{kundu2024kanqas,li2024coeff}, 
computer vision \citep{azam2024suitability,bodner2024convolutional,cheon2024kolmogorov,li2024u,seydi2024unveiling}, 
kernel learning \citep{zinage2024dkl}, 
nuclear physics \citep{liu2024complexity}, 
electrical engineering \citep{peng2024predictive}, 
and biology \citep{pratyush2024calmphoskan}. 
Unlike MLPs, KANs allow a flexible way to specify and estimate the activation functions from training samples. Various choices of activation functions have been recently investigated, including B-splines \cite{liu2024kan}, wavelets \citep{seydi2024unveiling,bozorgasl2024wav}, 
radial basis functions \citep{li2024kolmogorov,ta2024bsrbf}, 
Fourier series \citep{xu2024fourierkan}, 
finite basis \citep{howard2024finite}, 
Jacobi basis functions \citep{aghaei2024fkan}, 
polynomial basis functions \citep{seydi2024exploring}, 
and rational functions \citep{aghaei2024rkan}. 
Additional techniques for KANs have also been proposed, such as regularization \citep{altarabichi2024dropkan}, 
Kansformer (combining transformer with KANs) \citep{chen2024sckansformer}, 
adaptive grid update \citep{rigas2024adaptive}, 
federated learning \citep{zeydan2024f}, 
and Convolutional KANs \citep{bodner2024convolutional}. A survey on the recent development of KANs can be found in \cite{hou2024comprehensive}.

\subsection{Contributions}
Despite the increasing interest in applying KANs to various scientific problems and exploring the best ways to construct the network structure and training process, there is currently no recent research focusing on quantifying the complexity of KANs to the best of our knowledge. Additionally, there is a need to understand how the performance of KANs is impacted by network structure and its complexity. 
The contribution of this work can be summarized as follows.
\begin{itemize}
    \item Theorem \ref{thm-main2} below will give the rigorous statement of the generalization bound for KANs, which (i) accommodates different choices of basis functions in forming the activation functions in each layer; (ii) is adapted to
    various operator norms at each layer;
    (iii) scales with the $l_1$ norm of the coefficient matrices and the Lipschitz constants for the activation
functions for a particular choice of operator norms;  (iv) has no dependence on combinatorial parameters (e.g., number of nodes) outside of logarithmic factors; (v) does not require the boundedness assumption on the loss function.
    
    \item Theorem \ref{thm-main3} presents the generalization bound when the activation functions for each layer belong to a Reproducing Kernel Hilbert Space (RKHS) with a low-rank structure. The bound scales polynomially with the underlying ranks as well as the Lipschitz constants of the activation functions in each layer.

    \item We empirically investigate these bounds when the KANs are trained with stochastic gradient descent (SGD) on simulated and real data sets. The results shed new light on how the performance of KANs is affected by the underlying network structure and the network complexity. In particular, we observe that the complexity measure derived in Section \ref{sec:analysis} is tightly correlated with the excess loss, demonstrating the complexity measure's practical relevance. This theoretical-empirical connection suggests a potential strategy for regularization, where the complexity measure could be used as a regularizer in designing the network structure and during the training process, either explicitly or implicitly.
\end{itemize}

\subsection{Related works}
Generalization bounds for MLPs have long been studied in the literature on learning theory; see, e.g., \cite{bartlett1996valid,anthony1999neural}. 
More recently, \cite{bartlett2017spectrally} provided a margin bound and corresponding Rademacher analysis that can be adapted to various operator norms at each layer. The proof technique therein was inspired by the early result for two-layer networks in \cite{bartlett1996valid}. \cite{farrell2021deep} established non-asymptotic bounds for MLPs for a general class of nonparametric regression-type loss functions. \cite{schmidt2020nonparametric} showed that MLPs with a sparsely connected structure achieve the minimax rates of convergence (up to a logarithmic factor) under a general composition assumption on the regression function.
The aforementioned work has demonstrated that combining empirical process tools with considerations specific to deep neural networks (DNNs) can yield meaningful results. However, the practical relevance of these results for guiding DNN construction and training requires further investigation. Additionally, alternative theoretical tools have been developed to analyze DNNs, including the neural tangent kernel \cite{jacot2018neural} and random matrix theory \cite{mei2022generalization}.

The contribution of the present work is to provide a generalization bound for the novel KAN network structure, which is fundamentally different from the traditional MLPs as demonstrated in \cite{liu2024kan,liu2024ikan}. 
The current analysis uses covering numbers and is closely related to earlier covering number bounds in \cite{anthony1999neural} and the recent results in \cite{bartlett2017spectrally} for MLPs. The current proof also utilizes Maurey’s sparsification lemma with a suitable adaption to the current setting. However, it is important to highlight some key differences with existing results for MLPs. Firstly, our results can be seen as a non-trivial extension of MLP results to a more flexible network structure specified through a more general choice of activation functions. Secondly, \cite{bartlett2017spectrally} focuses on the margin-based multiclass generalization bound, which has a bounded loss function (more precisely, the ramp loss). In contrast, our results cover a wider range of loss functions (including the ramp loss for classification and a general class of regression-type loss functions), which are not required to be bounded. Thirdly, the results for KANs with a low-rank structure in Section \ref{sec-low} appear to be new, and we are not aware of comparable results for MLPs in the recent literature.

The rest of the article is organized as follows: We provide the generalization bounds for KANs equipped with activation functions that are either represented
by a set of basis functions or lying in a low-rank space in Sections \ref{sec-sparse} and \ref{sec-low}, respectively. Section \ref{sec-num} presents numerical results to support the theoretical findings. Section \ref{sec-discuss} concludes and discusses a few open problems for future research. The appendix contains all the proofs, additional discussions and numerical results.

\section{Statistical Analysis}\label{sec:analysis}
We focus on the supervised learning task, which aims to predict a response using a set of covariates. Suppose we have $n$ independent observations $(\rvx_i,y_i)\in\mathcal{X}\times \mathcal{Y}$ with $1\leq i\leq n$ generated from some distribution $P$, where $\mathcal{X}\subset \mathbb{R}^{d_0}$ and $\mathcal{Y}\subset \mathbb{R}^{d_L}$. Let $\mathcal{L}(\cdot,\cdot)$ be a loss function defined on $\mathcal{Y}\times \mathcal{Y}$. We consider the KAN-induced neural network class
\begin{align*}
\mathcal{M}=\{f(\cdot)=\boldsymbol{\Psi}_{L}\circ \boldsymbol{\Psi}_{L-1}\circ\cdots \circ \boldsymbol{\Psi}_1(\cdot): \boldsymbol{\Psi}_l\in\mathcal{F}_l,l=1,2,\dots,L\},    
\end{align*}
for some pre-specified function classes $\mathcal{F}_l$, where $\boldsymbol{\Psi}\in\mathcal{F}_l$ is a map from $\mathbb{R}^{d_{l-1}}$ to $\mathbb{R}^{d_l}$. Given any $f\in\mathcal{M}$, we define the generalization error (or risk)
$R(f)=\mathbb{E}[\mathcal{L}(f(\rvx),y)]$ for $(\rvx,y)\sim P$. 
To find the optimal predictor within $\mathcal{M}$, we minimize the empirical risk function
\begin{align}\label{eq-erm}
\min_{f\in\mathcal{M}}\frac{1}{n}\sum^{n}_{i=1}\mathcal{L}(f(\rvx_i),y_i). 
\end{align}
Our goal here is to establish a bound on $R(f)$ for every $f\in \mathcal{M}$. In particular, let $\hat{f}$ be the predictor resulting from an optimization scheme (e.g., SGD, Adam, or L-BFGS) that solves (\ref{eq-erm}). Our result implies a bound on the generalization error $R(\hat{f})$.

\subsection{Notation and definitions}
We introduce the notion of proper covering numbers. For a set $\mathcal{U}$ equipped with some norm $\|\cdot\|$, we let $\mathcal{N}(\mathcal{U},\epsilon,\|\cdot\|)$ be the smallest number of a subset $\mathcal{V}$ of $\mathcal{U}$ such that for any $u\in\mathcal{U}$, there exists a $v\in \mathcal{V}$ so that $\|v-u\|\leq \epsilon.$
Let $\rmX=[\rvx_1,\dots,\rvx_n]^\top \in \mathbb{R}^{n\times d}$ for $\rvx_i\in\mathbb{R}^d$ and $\mathcal{F}$ be a class of multivariate vector-valued functions mapping from $\mathbb{R}^d$ to $\mathbb{R}^m$. For any $\boldsymbol{\Psi}\in\mathcal{F}$, we write $\boldsymbol{\Psi}(\rmX)=[\boldsymbol{\Psi}(\rvx_1),\dots,\boldsymbol{\Psi}(\rvx_n)]^\top \in\mathbb{R}^{n\times m}.$ For a matrix $A=(a_{ij})$, we denote by $\|A\|_2=\sqrt{\sum_{i,j}a_{ij}^2}$ its Frobenius norm.


\subsection{Risk analysis: activation functions represented by basis functions}\label{sec-sparse}
Let
$\mathcal{H}_i(\rmX):=\{\boldsymbol{\Psi}_{i}\circ \cdots \circ \boldsymbol{\Psi}_1\circ \boldsymbol{\Psi}_0(\rmX): \boldsymbol{\Psi}_l\in\mathcal{F}_l,l=1,2,\dots,i\}$, where
$\boldsymbol{\Psi}_0(\rmX)=\rmX$ is the identity map. 
Given $\epsilon_i,\rho_i>0$ for $1\leq i\leq L$, we define
$s_1=\epsilon_1$ and
$s_k = \sum^{k}_{i=1}(\prod^{k}_{j=i+1}\rho_j)\epsilon_i$ for $k=2,\dots,L$, where we set 
$\prod^{b}_{j=a}\rho_j=1$ for $a>b$.
The subscript $i$ in the norm $|\cdot|_i$ below merely indicates an index and does not refer to any $l_i$ norm.

\begin{proposition}\label{prop-rate}
Suppose $\boldsymbol{\Psi}_i\in\mathcal{F}_i$ is a map from $\mathbb{R}^{d_{i-1}}$ to $\mathbb{R}^{d_i}$, which satisfies
$
|\boldsymbol{\Psi}(\rmX)-\boldsymbol{\Psi}(\rmX')|_{i}\leq \rho_i |\rmX - \rmX'|_{i-1}$ for $\rho_i > 0$ and $\rmX,\rmX'\in\mathbb{R}^{n\times d_{i-1}}$.
Then we have
\begin{equation}\label{eq-rate}
\begin{split}
&\mathcal{N}(\mathcal{H}_L(\rmX),s_L,|\cdot|_{L})
\\ \leq &\prod^{L-1}_{i=0} \sup_{\boldsymbol{\Psi}_1,\dots,\boldsymbol{\Psi}_{i}}\mathcal{N}(\{\boldsymbol{\Psi}\circ \boldsymbol{\Psi}_{i}\circ \boldsymbol{\Psi}_{i-1}\circ \cdots \circ \boldsymbol{\Psi}_0(\rmX): \boldsymbol{\Psi}\in\mathcal{F}_{i+1}\},\epsilon_{i+1},|\cdot|_{i+1}),
\end{split}
\end{equation}
where when $i=0$, the first term in the product is given by $\mathcal{N}\left(\{\boldsymbol{\Psi}(\rmX): \boldsymbol{\Psi}\in\mathcal{F}_1\},\epsilon_1,|\cdot|_1\right).$
\end{proposition}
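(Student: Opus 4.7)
The plan is to prove (\ref{eq-rate}) by induction on the depth $L$, constructing a cover of $\cH_L(\rmX)$ one layer at a time. The base case $L=1$ is immediate: $\cH_1(\rmX)=\{\boldsymbol{\Psi}(\rmX):\boldsymbol{\Psi}\in\cF_1\}$ admits, by the very definition of $\cN$, a cover of size $\cN(\{\boldsymbol{\Psi}(\rmX):\boldsymbol{\Psi}\in\cF_1\},\epsilon_1,|\cdot|_1)$ at scale $s_1=\epsilon_1$, which is exactly the $i=0$ factor on the right-hand side.

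For the inductive step, suppose a cover $\cC_{k-1}\subseteq\cH_{k-1}(\rmX)$ of $\cH_{k-1}(\rmX)$ at scale $s_{k-1}$ has already been built, with cardinality bounded by the first $k-1$ factors of the product. Since the cover lies inside $\cH_{k-1}(\rmX)$ (this is the crucial point), every $\hat\rmX\in\cC_{k-1}$ can be written as $\hat{\boldsymbol{\Psi}}_{k-1}\circ\cdots\circ\hat{\boldsymbol{\Psi}}_1(\rmX)$ for some choice $\hat{\boldsymbol{\Psi}}_j\in\cF_j$. For each such $\hat\rmX$ I pick a cover of $\{\boldsymbol{\Psi}(\hat\rmX):\boldsymbol{\Psi}\in\cF_k\}$ at scale $\epsilon_k$; its cardinality is dominated by
\begin{equation*}
\sup_{\boldsymbol{\Psi}_1,\ldots,\boldsymbol{\Psi}_{k-1}}\cN\!\left(\{\boldsymbol{\Psi}\circ\boldsymbol{\Psi}_{k-1}\circ\cdots\circ\boldsymbol{\Psi}_0(\rmX):\boldsymbol{\Psi}\in\cF_k\},\epsilon_k,|\cdot|_k\right),
\end{equation*}
which is exactly the $i=k-1$ factor in (\ref{eq-rate}): the supremum over $\boldsymbol{\Psi}_1,\ldots,\boldsymbol{\Psi}_{k-1}$ is precisely what absorbs the choice of $\hat{\boldsymbol{\Psi}}_1,\ldots,\hat{\boldsymbol{\Psi}}_{k-1}$ that determines $\hat\rmX$. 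Setting $\cC_k$ equal to the union of these per-$\hat\rmX$ covers yields $|\cC_k|\le|\cC_{k-1}|\cdot\sup(\cdots)$, and iterating gives the full product bound on $|\cC_L|$.

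It remains to verify that $\cC_k$ is an $s_k$-cover. Given an arbitrary $\boldsymbol{\Psi}_k\circ\cdots\circ\boldsymbol{\Psi}_1(\rmX)\in\cH_k(\rmX)$, set $\rmY=\boldsymbol{\Psi}_{k-1}\circ\cdots\circ\boldsymbol{\Psi}_1(\rmX)\in\cH_{k-1}(\rmX)$ and pick $\hat\rmX\in\cC_{k-1}$ with $|\rmY-\hat\rmX|_{k-1}\le s_{k-1}$, then pick $\tilde{\boldsymbol{\Psi}}_k\in\cF_k$ from the $\epsilon_k$-cover at $\hat\rmX$ with $|\boldsymbol{\Psi}_k(\hat\rmX)-\tilde{\boldsymbol{\Psi}}_k(\hat\rmX)|_k\le\epsilon_k$. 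The Lipschitz hypothesis on $\cF_k$ together with the triangle inequality then gives
\begin{equation*}
\bigl|\boldsymbol{\Psi}_k(\rmY)-\tilde{\boldsymbol{\Psi}}_k(\hat\rmX)\bigr|_k\le\rho_k|\rmY-\hat\rmX|_{k-1}+\epsilon_k\le\rho_k s_{k-1}+\epsilon_k=s_k,
\end{equation*}
where the last equality is the recursion $s_k=\rho_k s_{k-1}+\epsilon_k$ implicit in the definition of $s_k$. This closes the induction.

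The proof is essentially bookkeeping; the only point requiring real care is the use of \emph{properness} of the cover (i.e., $\cC_{k-1}\subseteq\cH_{k-1}(\rmX)$), which is what lets me bound the $k$-th layer covering number by a supremum over \emph{all} admissible depth-$(k-1)$ compositions, hence by the corresponding factor of (\ref{eq-rate}). I expect keeping the indexing between $\hat{\boldsymbol{\Psi}}_1,\ldots,\hat{\boldsymbol{\Psi}}_{k-1}$ and the sup variables $\boldsymbol{\Psi}_1,\ldots,\boldsymbol{\Psi}_{k-1}$ straight, rather than any analytic step, to be the main item to watch in the formal write-up.
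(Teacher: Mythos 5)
Your proposal is correct and follows essentially the same argument as the paper: a layer-by-layer construction of proper covers, with the cardinality of each per-point cover absorbed into the supremum over the preceding layers, and the covering radius tracked via the recursion $s_k=\rho_k s_{k-1}+\epsilon_k$ together with the Lipschitz property and the triangle inequality. No gaps.
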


Proposition \ref{prop-rate} establishes an upper bound on $\mathcal{N}(\mathcal{H}_L(\rmX),s_L,|\cdot|_{L})$ in terms of the covering numbers $$\mathcal{N}(\{\boldsymbol{\Psi}\circ \boldsymbol{\Psi}_{i}\circ \boldsymbol{\Psi}_{i-1}\circ \cdots \circ \boldsymbol{\Psi}_0(\rmX):\boldsymbol{\Psi}\in\mathcal{F}_{i+1}\},\epsilon_{i+1},|\cdot|_{i+1}).$$ 
It is built upon an iterative argument that generalizes those for MLPs \citep{anthony1999neural,bartlett2017spectrally} and is adapted to various operator norms (i.e., $|\cdot|_i$) at each layer. 
To better understand these quantities, we will start by presenting a result when the activation functions can be written as linear combinations of a set of basis functions. This is a common practice, and the literature has investigated several choices of basis functions, such as B-splines, wavelets, radial basis, Fourier series, finite basis, etc.; see, e.g., \cite{liu2024kan,seydi2024unveiling,li2024kolmogorov,xu2024fourierkan,howard2024finite}.
The proof of the following result relies on Maurey's sparsification lemma stated in the appendix; also see Lemma 1 of \cite{zhang2002covering}.


\begin{proposition}\label{prop-cover}
Suppose $\boldsymbol{\Psi}(\rvx)=(\psi_1(\rvx),\dots,\psi_m(\rvx))$ with 
$\psi_i(\rvx)=\sum^{p}_{j=1}\beta_{ij}g_{ij}(\rvx)$ for $1\leq i\leq m$. Here $\{g_{ij}:i=1,2,\dots,m;j=1,2,\dots,p\}$ is a set of basis functions defined on $\mathbb{R}^d$. Let $\mathbf{B}=(\beta_{ij})\in\mathbb{R}^{m\times p}$ and $\mathbf{G}(\rmX)=(g_{ij}(\rvx_k))\in\mathbb{R}^{m\times p\times n}$. Define $\|\mathbf{B}\|_r=(\sum^{m}_{i=1}\sum^{p}_{j=1}|\beta_{ij}|^r)^{1/r}$ and $$\|\mathbf{G}(\rmX)\|_{v,s}=\left\{\sum^{m}_{i=1}\sum^{p}_{j=1}\left(\sum^{n}_{k=1}|g_{ij}(\rvx_k)|^v\right)^{s/v}\right\}^{1/s}$$ 
for $0< v\leq 2$ and $r,s>0$ with $1/r + 1/s=1$.
We assume that $\|\mathbf{B}\|_r\leq b_{m,p,n}$ and $\|\mathbf{G}(\rmX)\|_{v,s}\leq c_{m,p,n}$, where $b_{m,p,n},c_{m,p,n}$ are both allowed to grow with $m,p$ and $n$. Then, we have
\begin{align*}
\log \mathcal{N}\left(\left\{\boldsymbol{\Psi}(\rmX)\in\mathbb{R}^{n\times m}:\boldsymbol{\Psi}\in\mathcal{F}\right\},\epsilon,\|\cdot\|_2\right)\leq \frac{b_{m,p,n}^2c_{m,p,n}^2}{\epsilon^2}\log(2mp).   
\end{align*}
\end{proposition}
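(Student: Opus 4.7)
My plan is to derive the covering number bound from Maurey's sparsification lemma (Lemma 1 of \cite{zhang2002covering}, which the authors reproduce in the appendix), applied to the decomposition
\[
\boldsymbol{\Psi}(\rmX)=\sum_{i=1}^{m}\sum_{j=1}^{p}\beta_{ij}A_{ij}
\]
in $\mathbb{R}^{n\times m}$, where the atom $A_{ij}$ is the matrix whose $i$-th column equals $(g_{ij}(\rvx_1),\dots,g_{ij}(\rvx_n))^\top$ and whose remaining columns vanish. A preliminary reduction handles the parameter $v$: because $\|y\|_v\geq\|y\|_2$ for any $y\in\mathbb{R}^n$ and any $0<v\leq 2$, the hypothesis $\|\mathbf{G}(\rmX)\|_{v,s}\leq c_{m,p,n}$ automatically forces $(\sum_{i,j}\|A_{ij}\|_2^s)^{1/s}\leq c_{m,p,n}$, so I can work directly with the Frobenius norm of each atom.

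Next I would invoke the probabilistic step of Maurey. For an integer $k\geq 1$, I will construct a random unbiased approximant $W=k^{-1}\sum_{t=1}^{k}X_t$ of $\boldsymbol{\Psi}(\rmX)$, where the $X_t$ are i.i.d.\ random atoms drawn from a sampling law $\{q_{ij}\}$ on $\{1,\dots,m\}\times\{1,\dots,p\}$ tailored, via H\"older duality between $\|\cdot\|_r$ and $\|\cdot\|_s$, so that a direct second-moment calculation yields
\[
\mathbb{E}\|W-\boldsymbol{\Psi}(\rmX)\|_2^2 \leq \tfrac{1}{k}\,\mathbb{E}\|X_1\|_2^2 \leq \tfrac{1}{k}\,\|\mathbf{B}\|_r^2\,\|\mathbf{G}(\rmX)\|_{2,s}^2 \leq \tfrac{b_{m,p,n}^2\, c_{m,p,n}^2}{k}.
\]
Choosing $k=\lceil b_{m,p,n}^2 c_{m,p,n}^2/\epsilon^2\rceil$ and applying the probabilistic method produces at least one realization $W^{\star}$ with $\|W^{\star}-\boldsymbol{\Psi}(\rmX)\|_2\leq\epsilon$.

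The last step is to count the distinct $W^{\star}$. Each admissible realization is encoded by a length-$k$ sequence of index pairs in $\{1,\dots,m\}\times\{1,\dots,p\}$ together with $k$ signs in $\{+1,-1\}$, giving at most $(2mp)^k$ possibilities; the resulting finite set of matrices is (up to a harmless snapping-to-class operation needed to make the cover proper) an $\epsilon$-cover of $\{\boldsymbol{\Psi}(\rmX):\boldsymbol{\Psi}\in\mathcal{F}\}$ in Frobenius norm. Taking logarithms delivers
\[
\log\mathcal{N}\bigl(\{\boldsymbol{\Psi}(\rmX):\boldsymbol{\Psi}\in\mathcal{F}\},\epsilon,\|\cdot\|_2\bigr)\leq k\log(2mp)\leq \frac{b_{m,p,n}^2\, c_{m,p,n}^2}{\epsilon^2}\log(2mp),
\]
which is exactly the asserted bound.

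The principal obstacle I anticipate is calibrating the sampling law $\{q_{ij}\}$ together with the companion rescaling defining $X_t$ so that H\"older's inequality (for the dual pair $1/r+1/s=1$) collapses the second-moment calculation into the clean product $\|\mathbf{B}\|_r^2\|\mathbf{G}(\rmX)\|_{2,s}^2$ \emph{while} keeping the effective atom alphabet of cardinality $O(mp)$, so that the final $\log$ factor is not inflated. For the transparent case $r=1,s=\infty$ this succeeds with $q_{ij}=|\beta_{ij}|/\|\mathbf{B}\|_1$ and a constant scalar rescaling, but for general $(r,s)$ the rescaling becomes index-dependent and a mild quantization of the coefficients is needed to retain the $O(mp)$ atom count. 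I plan to bypass this technicality by appealing directly to the packaged form of Maurey's lemma in \cite{zhang2002covering}, which handles both ingredients uniformly, rather than reworking the sampling argument from scratch.
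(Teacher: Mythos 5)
Your proposal follows essentially the same route as the paper: Maurey's sparsification lemma applied to an alphabet of $2mp$ signed atoms indexed by $(i,j)$ and a sign, with H\"older duality between $r$ and $s$ producing the $b_{m,p,n}^2c_{m,p,n}^2/k$ error and $(2mp)^k$ count. The one step you flag as your ``principal obstacle'' --- calibrating an index-dependent rescaling for general $(r,s)$ without inflating the atom alphabet --- is not handled for you by the packaged lemma in \cite{zhang2002covering} (that lemma still requires \emph{you} to exhibit the representation $U=\sum_l a_lV_l$ with controlled $\|\mathbf{a}\|_1$ and $\max_l\|V_l\|$), but it closes in one line and needs no quantization of the coefficients: take $V_l=\pm E_{ij}$ where $E_{ij}=A_{ij}/b_{ij}$ with $b_{ij}=(\sum_{k=1}^n|g_{ij}(\rvx_k)|^v)^{1/v}$, and absorb the normalization into the coefficient, $a_l=b_{ij}|\beta_{ij}|$. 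Then $\max_l\|V_l\|_2\le 1$ because $\|y\|_2\le\|y\|_v$ for $0<v\le 2$, the alphabet is still exactly $2mp$ (the rescaling lives in the coefficients, not in the atoms), and H\"older with exponents $(r,s)$ gives $\|\mathbf{a}\|_1=\sum_{i,j}b_{ij}|\beta_{ij}|\le\|\mathbf{B}\|_r\|\mathbf{G}(\rmX)\|_{v,s}\le b_{m,p,n}c_{m,p,n}$. Feeding this into the deterministic form of Maurey's lemma with $k=\lceil b_{m,p,n}^2c_{m,p,n}^2/\epsilon^2\rceil$ yields the claim exactly as in your final display; your probabilistic sampling variant with $q_{ij}\propto b_{ij}|\beta_{ij}|$ works equally well once the atoms are normalized this way.
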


With the above results, we now provide a bound on the covering number of $\mathcal{H}_L(\rmX)$.
We make the following assumptions.

\begin{assumption}\label{as1}
Assume that $\|\rmX\|_2 \leq D$ for some constant $D>0$.      
\end{assumption}

\begin{assumption}\label{as2}
Suppose $\boldsymbol{\Psi}_l$ belongs to the following function space 
\begin{align*}
\mathcal{F}_l=&\Bigg\{\boldsymbol{\Psi}=(\psi_1,\dots,\psi_{d_l}): \psi_{i}(\cdot)=\sum^{p_l}_{j=1}\beta_{ij}^{(l)}g_{ij}^{(l)}(\cdot),\|\boldsymbol{\Psi}(\rvx)-\boldsymbol{\Psi}(\rvx')\|_2\leq \rho_{l}\|\rvx-\rvx'\|_2,
\\ & \|\boldsymbol{\Psi}(\mathbf{0})\|_2\leq C_l, \|\mathbf{B}_l\|_1=\sum^{d_l}_{i=1}\sum^{p_l}_{j=1}|\beta_{ij}^{(l)}|\leq B_l\Bigg\},    
\end{align*}
for some constants $\rho_l,B_l,C_l>0.$
Further, assume that 
\begin{align}\label{eq-lip}
|g_{ij}^{(l)}(\rvx)-g_{ij}^{(l)}(\rvx')|\leq c_{ij}^{(l)}\|\rvx-\rvx'\|_2.   
\end{align}    
\end{assumption}

We make a set of remarks regarding the above assumptions.

\begin{remark}
{\rm 
Note that the bound on the $l_2$ norm of $\mathbf{X}$ is allowed to grow in our theorem. Moreover, Assumption \ref{as1} can be relaxed by requiring the bound to hold with high probability, which can be justified when the distribution of $\mathbf{X}$ has a sufficiently light tail, e.g., sub-Guassian or sub-exponential tails.
}    
\end{remark}

\begin{remark}
{\rm The form of $\boldsymbol{\Psi}$ in Assumption \ref{as2} is indeed more general than the additive structure in (\ref{eq-matrix}), as $g_{ij}^{(l)}$ is a multivariate function. Consider $\boldsymbol{\Psi} = (\psi_1, \dots, \psi_{d_l})$, where
\begin{align}\label{eq-basis}
\psi_i(\rvx)=\sum_{j=1}^{d_{l-1}} \psi_{i,j}(x_j), \quad
\psi_{i,j}(x_j)= \sum_{k=1}^{b_l} \beta_{i,j,k} g_{j,k}(x_j),
\end{align}
and ${g_{j,k}}$ is a set of univariate basis functions. This can be viewed as a special form of the activation function specified in Assumption \ref{as2}, where the set of $p_l$ basis functions $g_{ij}^{(l)}$ is given by $\{g_{j,k}:1\leq j\leq d_{l-1},1\leq k\leq b_l\}$.
}    
\end{remark}

\begin{remark}
{\rm 
When $\{g_{ij}^{(l)}\}_j$ is a set of univariate B-spline basis functions with degree $p$ and the knots $\{\xi_{i,j}^{(l)}\}_{j}$, we have
$c_{ij}^{(l)}\leq 2p/\Delta_i^{(l)},$
where $\Delta_i^{(l)}=\max_j (\xi^{(l)}_{i,j+p}-\xi^{(l)}_{i,j})$.
}    
\end{remark}

\begin{remark}
{\rm 
Let $\|\mathbf{B}_l\|_0=|\{\beta_{ij}^{(l)}\neq 0 : 1\leq i\leq d_l,1\leq j\leq p_l\}|$ be the number of non-zero coefficients in $\mathbf{B}_l$. Suppose $\max_{i,j}|\beta_{ij}^{(l)}|\leq B_{\max}$. Then we have $\|\mathbf{B}_l\|_1\leq B_{\max}\|\mathbf{B}_l\|_0$.
}    
\end{remark}

\begin{remark}\label{rem-lip}
{\rm 
The tightest Lipschitz constant 
for $\boldsymbol{\Psi}$ is given by
\begin{align*}
\rho^*=\sup_{\rvx\neq \rvx'}\frac{\|\boldsymbol{\Psi}(\rvx)-\boldsymbol{\Psi}(\rvx')\|_2}{\|\rvx-\rvx'\|_2}.    
\end{align*}
When $\boldsymbol{\Psi}(\rvx)=\mathbf{A}\rvx +\mathbf{b}$ is linear, $\rho^*$ is simply the spectral norm of the matrix $\mathbf{A}$ (denoted by $\|\mathbf{A}\|_\sigma$). When $\boldsymbol{\Psi}$ takes the form in (\ref{eq-basis}), we can write $\boldsymbol{\Psi}(\rvx)=\rmA\rvg(\rvx)$, where $\rmA\in \mathbb{R}^{d_l\times (d_{l-1}b_l)}$ with the $(i, (j-1)b_l+k)$th element of $\rmA$ being $\beta_{i,j,k}$, and
$\rvg(\rvx)\in\mathbb{R}^{d_{l-1}b_l}$ with the $((j-1)b_l+k)$th element of $\rvg(\rvx)$ being $g_{j,k}(x_j)$. We can deduce that 
$$\rho^*\le \|\rmA\|_\sigma\sqrt{\sum_{k=1}^{b_l}a_k^2}$$ 
if $|g_{j,k}(x)-g_{j,k}(x')|\le a_k|x-x'|$. Moreover, under Condition (\ref{eq-lip}), $\rho^*\le\|\rmA\|_\sigma c_l\sqrt{b_l}$, where we define $c_l=\max_{i,j}c_{ij}^{(l)}$.}
\end{remark}


Let $C=\max_l C_l$. Further, define
$\tilde{\alpha}=\sum_{i=1}^L\alpha_i$ with
\begin{align*}
\alpha_{i}=B_i^{2/3} c_{i}^{2/3}\left(\prod^{L}_{j=i+1}\rho_j\right)^{2/3}\left(C\sum^{i-1}_{j=0}\prod^{i}_{k=i-j+1}\rho_k+D\prod^{i}_{k=1}\rho_k \right)^{2/3}.
\end{align*}
The value of $\tilde{\alpha}$ indicates how complex the network is. It is influenced by the $l_1$ norm of the coefficient matrices (i.e., $B_i$) and is also affected by the product of the Lipschitz constants ($\rho_l$) for the activation functions at each layer. Note that for MLPs, the corresponding Lipschitz constants are equal to the spectral norms of the weight matrices. When $C=0$ (i.e., $C_l=0$ for all $1\leq l\leq L$), $\alpha_i=(B_ic_i D \prod_{j=1}^L\rho_j)^{2/3}$ and 
$$\tilde{\alpha}=\left(D\prod_{j=1}^L\rho_j\right)^{2/3}\sum^{L}_{i=1}(B_ic_i)^{2/3}.$$
\begin{theorem}\label{thm-cover}
Under Assumptions \ref{as1}-\ref{as2},
\begin{align*}
&\log\mathcal{N}(\mathcal{H}_L(\rmX),\epsilon,\|\cdot\|_{2})
\leq \frac{\Tilde{\alpha}^3\log(2\tilde{d}\tilde{p})}{\epsilon^2}, 
\end{align*}
where $\tilde{d}=\max_i d_i$ and $\tilde{p}=\max_i p_i$.
\end{theorem}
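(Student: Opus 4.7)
The plan is to chain together Propositions \ref{prop-rate} and \ref{prop-cover}, control the forward propagation of the norms at each layer, and then optimize the per-layer tolerances $\epsilon_l$ to produce the $2/3$-exponents appearing in $\alpha_i$.

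First I would invoke Proposition \ref{prop-rate} with every $|\cdot|_i$ taken to be the Frobenius norm $\|\cdot\|_2$ and the Lipschitz constants $\rho_i$ given by Assumption \ref{as2}. This converts the covering problem for $\mathcal{H}_L(\rmX)$ at scale $s_L$ into the sum $\sum_{l=1}^{L}\log\cN_l$, where $\cN_l$ is the worst-case, over preceding layers, covering number of the single-layer class $\{\boldsymbol{\Psi}(\rmX_{l-1}):\boldsymbol{\Psi}\in\cF_l\}$ at scale $\epsilon_l$.

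Next I would bound the forward propagation $\|\rmX_{l-1}\|_2$ that enters each layer. Using $\|\boldsymbol{\Psi}_l(\rmX)-\boldsymbol{\Psi}_l(\mathbf{0})\|_2\le \rho_l\|\rmX\|_2$ row-wise together with $\|\boldsymbol{\Psi}_l(\mathbf{0})\|_2\le C_l$, an easy induction starting from $\|\rmX_0\|_2\le D$ yields a bound $M_{l-1}$ whose explicit form is exactly the factor $C\sum_{j=0}^{l-1}\prod_{k=l-j+1}^{l}\rho_k+D\prod_{k=1}^{l}\rho_k$ inside $\alpha_l$ (absorbing $\sqrt n$-type constants into the $C_l$'s when needed). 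I would then apply Proposition \ref{prop-cover} at each layer with the choice $r=1$, $s=\infty$, $v=2$: Assumption \ref{as2} supplies $b_{d_l,p_l,n}=B_l$ directly, while the Lipschitz condition (\ref{eq-lip}) plus the bound $M_{l-1}$ gives
\begin{equation*}
\|\mathbf{G}^{(l)}(\rmX_{l-1})\|_{2,\infty}=\max_{i,j}\sqrt{\sum_{k=1}^{n}\big(g_{ij}^{(l)}(\rvx_{l-1,k})\big)^2}\ \lesssim\ c_l\,M_{l-1},
\end{equation*}
with $c_l=\max_{i,j}c_{ij}^{(l)}$. Thus $\log\cN_l\le \dfrac{B_l^2c_l^2 M_{l-1}^2}{\epsilon_l^2}\log(2d_l p_l)\le \dfrac{B_l^2c_l^2M_{l-1}^2}{\epsilon_l^2}\log(2\tilde d\tilde p)$.

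The crux is then the calibration of the $\epsilon_l$'s. Writing $a_l:=B_l c_l M_{l-1}$ and $\beta_l:=\prod_{j=l+1}^{L}\rho_j$, the constraint from Proposition \ref{prop-rate} is $s_L=\sum_{l=1}^{L}\beta_l\epsilon_l\le \epsilon$, and I would minimize $\sum_{l=1}^{L}a_l^2/\epsilon_l^2$ subject to this via Lagrange multipliers. The stationarity condition $\epsilon_l^3\propto a_l^2/\beta_l$ yields the well-known three-thirds identity
\begin{equation*}
\min \sum_{l=1}^{L}\frac{a_l^2}{\epsilon_l^2}\ =\ \frac{1}{\epsilon^2}\Big(\sum_{l=1}^{L}(a_l\beta_l)^{2/3}\Big)^3,
\end{equation*}
and $(a_l\beta_l)^{2/3}$ matches the definition of $\alpha_l$. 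Combining this with Step 1 gives the claimed bound $\tilde\alpha^3\log(2\tilde d\tilde p)/\epsilon^2$.

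The main obstacle I expect is the Lagrange-multiplier optimization step, both in carrying it out cleanly and in verifying that the resulting $\epsilon_l$'s produce exactly the exponents and product structure in $\alpha_l$; a secondary nuisance is the forward-propagation accounting in Step 2, where the additive contribution from $\|\boldsymbol{\Psi}_l(\mathbf{0})\|_2$ has to be tracked through all subsequent layers to yield the geometric-sum factor in $\alpha_l$. Everything else is routine bookkeeping once Propositions \ref{prop-rate} and \ref{prop-cover} are in hand.
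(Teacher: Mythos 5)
Your proposal is correct and follows essentially the same route as the paper's proof: chain Proposition \ref{prop-rate} with Proposition \ref{prop-cover} layer by layer (with $r=1$, $s=\infty$, $v=2$), bound the forward-propagated norm by induction to get the factor $C\sum_{j=0}^{l-1}\prod_{k=l-j+1}^{l}\rho_k+D\prod_{k=1}^{l}\rho_k$, and allocate the per-layer tolerances; the only detail worth making explicit in your Step 2 is that the paper passes to the centered maps $\bar{\boldsymbol{\Psi}}=\boldsymbol{\Psi}-\boldsymbol{\Psi}(\mathbf{0})$ so that the Lipschitz condition (\ref{eq-lip}) alone controls $\max_{i,j}\bigl(\sum_{k}|g_{ij}^{(l)}(\rvx_k^{(l-1)})-g_{ij}^{(l)}(\mathbf{0})|^2\bigr)^{1/2}$, since (\ref{eq-lip}) does not bound the uncentered values $g_{ij}^{(l)}(\rvx)$. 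Your Lagrange-multiplier calibration yields exactly the paper's choice $\epsilon_i=\alpha_i\epsilon/(\tilde{\alpha}\prod_{j=i+1}^{L}\rho_j)$, which the paper simply writes down and verifies rather than derives.
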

It is worth noting that the above bound has no dependence on combinatorial parameters such as the number of nodes and the number of basis functions for each activation function outside of the logarithmic factor. Built upon the above results, we now derive a bound on the generalization error $R(f)$. We impose the following assumption on the loss function.

\begin{assumption}\label{as3}
Given $v\in\mathcal{Y}$,
$\mathcal{L}(\cdot,v)$ is 
Lipschitz in the sense that
$|\mathcal{L}(u,v)-\mathcal{L}(u',v)|\leq B(v)\|u-u'\|_2$ for any $u,u'\in\mathcal{Y}$ and $B(\cdot):\mathcal{Y}\rightarrow \mathbb{R}_{>0}$. Further suppose $\mathcal{L}(f(\cdot),\cdot)\in [0,M]$ for any $f\in\mathcal{M}$.    
\end{assumption}

\begin{theorem}\label{thm-main}
Under Assumptions \ref{as1}-\ref{as3}, we have with probability greater than $1-\epsilon$,
\begin{align*}
R(f)\leq &\frac{1}{n}\sum^{n}_{i=1}\mathcal{L}(f(\rvx_i),y_i) + \frac{144\sqrt{\zeta}\{\log(nM/(3\sqrt{\zeta}))\vee 1\}}{n}
\\&+\sqrt{\frac{4M^2\log(2/\epsilon)}{n}}+\frac{32M\log(2/\epsilon)}{3n}     
\end{align*}
for any $f\in \mathcal{M}$, where $\zeta=\tilde{\alpha}^3\log(2\tilde{d}\tilde{p})\max_i B^2(y_i)$.  
\end{theorem}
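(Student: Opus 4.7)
The plan is to control $\sup_{f \in \mathcal{M}}\{R(f) - \hat{R}(f)\}$, where $\hat{R}(f) := n^{-1}\sum_{i=1}^n \mathcal{L}(f(\rvx_i), y_i)$, by combining a concentration inequality for the supremum of a bounded empirical process with a chaining bound on its expectation that invokes the covering-number estimate of Theorem \ref{thm-cover}.

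First, I introduce the loss class $\mathcal{G} = \{(\rvx, y) \mapsto \mathcal{L}(f(\rvx), y) : f \in \mathcal{M}\}$. Since Assumption \ref{as3} guarantees $g \in [0, M]$ for every $g \in \mathcal{G}$, a Talagrand--Bousquet concentration inequality for the supremum of a uniformly bounded empirical process, with the crude variance bound $\mathrm{Var}(g) \le M^2$, yields with probability at least $1 - \epsilon$,
\begin{align*}
\sup_{g \in \mathcal{G}}\{P g - P_n g\} \le \mathbb{E}\Bigl[\sup_{g \in \mathcal{G}}\{P g - P_n g\}\Bigr] + \sqrt{\frac{4 M^2 \log(2/\epsilon)}{n}} + \frac{32 M \log(2/\epsilon)}{3 n},
\end{align*}
which already reproduces the last two summands of the stated bound.

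Second, I bound the expected supremum by symmetrization followed by Dudley's entropy integral in the empirical $L_2$ metric. Symmetrization gives
\begin{align*}
\mathbb{E}\Bigl[\sup_{g \in \mathcal{G}}\{P g - P_n g\}\Bigr] \le 2\,\mathbb{E}\Bigl[\sup_{g \in \mathcal{G}}\, n^{-1} \sum_{i=1}^n \sigma_i g(\rvx_i, y_i)\Bigr]
\end{align*}
with i.i.d.\ Rademacher signs $\sigma_i$. For the metric transfer from $\mathcal{G}$ to $\mathcal{M}$, the Lipschitz hypothesis $|\mathcal{L}(u, v) - \mathcal{L}(u', v)| \le B(v)\|u - u'\|_2$ implies that any $\epsilon / B_n$ cover of $\mathcal{M}$ in the empirical vector-valued $L_2$ norm, with $B_n := \max_i B(y_i)$, induces an $\epsilon$ cover of $\mathcal{G}$. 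Since the Frobenius norm $\|\cdot\|_2$ used in Theorem \ref{thm-cover} equals $\sqrt{n}$ times the empirical $L_2$ norm, this rescaling produces
\begin{align*}
\log \mathcal{N}(\mathcal{G}, \epsilon, L_2(P_n)) \le \frac{\tilde{\alpha}^3 \log(2 \tilde{d} \tilde{p})\, B_n^2}{n \epsilon^2} = \frac{\zeta}{n \epsilon^2}.
\end{align*}
Dudley's entropy integral then yields, for any $\alpha > 0$,
\begin{align*}
\mathbb{E}\Bigl[\sup_{g \in \mathcal{G}} n^{-1} \sum_{i=1}^n \sigma_i g(\rvx_i, y_i)\Bigr] \le 4\alpha + \frac{12}{\sqrt{n}} \int_{\alpha}^{M} \sqrt{\log \mathcal{N}(\mathcal{G}, \epsilon, L_2(P_n))}\, d\epsilon \le 4\alpha + \frac{12 \sqrt{\zeta}}{n} \log(M/\alpha).
\end{align*}
Selecting $\alpha = 3\sqrt{\zeta}/n$ optimizes the bound up to constants, and after accounting for the symmetrization factor of $2$ and the outer multiplicative constants, reproduces the $144\sqrt{\zeta}\{\log(nM/(3\sqrt{\zeta})) \vee 1\}/n$ term; the $\vee 1$ handles the degenerate regime $M \le 3\sqrt{\zeta}/n$ in which the chaining integral collapses and only the truncation level $4\alpha$ contributes.

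The main obstacle is constant bookkeeping: one must track the factor of $2$ from symmetrization, the Lipschitz-contraction rescaling by $B_n$, the Frobenius-to-empirical-$L_2$ rescaling by $\sqrt{n}$, and the explicit constants arising from Bousquet's inequality and from optimizing $\alpha$ in Dudley, so that the final expression matches the stated constants $144$ and $32/3$ precisely. The use of covering numbers (rather than Talagrand's Rademacher contraction) for the Lipschitz step sidesteps the subtleties of contraction for vector-valued classes and keeps the argument within the covering-number framework already developed in Theorem \ref{thm-cover}.
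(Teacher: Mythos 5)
Your skeleton---cover the loss class $\mathcal{M}_\mathcal{L}(\rmX)$ via the Lipschitz property of $\mathcal{L}$, run Dudley's entropy integral with the cutoff $a \asymp \sqrt{\zeta}/n$ (in empirical-$L_2$ scaling), then apply a Talagrand-type concentration inequality---is exactly the paper's route, and the covering-number and Dudley computations match the paper's almost verbatim up to the Frobenius-versus-$L_2(P_n)$ normalization.

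The one place where your argument does not close is the concentration step. You concentrate $\sup_g(Pg-P_ng)$ around its \emph{expectation} and then symmetrize, which leaves you with $2\,\mathbb{E}[\mathcal{R}_n(\mathcal{G})]$, an expectation over the data. But the Dudley bound you derived is \emph{data-dependent}: $\zeta$ contains $\max_i B^2(y_i)$, which is random (Assumption \ref{as3} does not posit a uniform bound on $B$). You may bound the conditional Rademacher complexity $\mathcal{R}(\mathcal{M}_\mathcal{L}(\rmX))$ by $24\sqrt{\zeta}\{\log(nM/(3\sqrt{\zeta}))\vee 1\}/n$, but you cannot substitute this inside $\mathbb{E}[\mathcal{R}_n]$ without picking up $\mathbb{E}[\max_i B(y_i)\cdots]$ rather than the realized value that appears in the theorem (and whose data-dependence is essential for the truncation argument in Theorem \ref{thm-main2}). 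The paper sidesteps this by invoking Theorem 2.1 of Bartlett--Bousquet--Mendelson (Lemma \ref{lem-bound}), which is stated directly in terms of the \emph{empirical} Rademacher complexity at the cost of a factor $6$ rather than $2$; that is also exactly where the constants come from: $144 = 6\times 24$, the term $\sqrt{4M^2\log(2/\epsilon)/n}$ is the BBM deviation term with $r=2M^2$, and $32M\log(2/\epsilon)/(3n)$ is its Bernstein tail. Your accounting ($2\times 24=48$) therefore cannot literally produce $144$, and, more importantly, the symmetrization route needs either an additional high-probability control of $\max_i B(y_i)$ or to be replaced by the empirical-complexity form of the concentration inequality, as the paper does.
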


In Section \ref{sec-add-discuss}, we compare this bound with a corresponding bound for MLPs in \cite{bartlett2017spectrally}.

Assumption \ref{as3} requires the loss to be bounded, which is satisfied by the ramp loss in multiclass classification. However, this excludes many other unbounded loss functions. By using a truncation argument, we can relax the boundedness condition on $\mathcal{L}$ in Assumption \ref{as3}. This relaxation would allow the result below to cover a general class of nonparametric regression-type loss functions, including squared loss, pinball loss, and Huber loss.

\begin{assumption}\label{as4}
Given $v\in\mathcal{Y}$,
$\mathcal{L}(\cdot,v)$ is 
Lipschitz in the sense that
$|\mathcal{L}(u,v)-\mathcal{L}(u',v)|\leq B(v)\|u-u'\|_2$ for any $u,u'\in\mathcal{Y}$ and $B(\cdot):\mathcal{Y}\rightarrow \mathbb{R}_{>0}$. Further suppose $\sup_{f\in\mathcal{M}}|\mathcal{L}(f(\cdot),\cdot)|\leq G(\cdot,\cdot)$ and $\mathbb{E}[G^s(\rvx,y)]<C'<\infty$ for $(\rvx,y)\sim P$ and some $s>1$, and $\mathbb{E}[B^{s'}(y_i)]<C''<\infty$ for $s'>0.$    
\end{assumption}

\begin{theorem}\label{thm-main2}
Under Assumptions \ref{as1},\ref{as2} and \ref{as4}, we have with probability greater than $1-\epsilon-\tau-\eta$,
\begin{align*}
R(f)\leq & \frac{1}{n}\sum^{n}_{i=1}\mathcal{L}(f(\rvx_i),y_i) + \frac{144\sqrt{\zeta_0}\{\log(n^{(2s+1)/(2s)}/(3\sqrt{\zeta_0}))\vee 1\}}{n}
\\&+\frac{2\sqrt{\log(2/\epsilon)}}{n^{(s-1)/(2s)}}+\frac{32\log(2/\epsilon)}{3n^{(2s-1)/(2s)}} + \frac{2C'}{\eta n^{(s-1)/(2s)}}     
\end{align*}
for $\epsilon,\tau,\eta>0$ and any $f\in \mathcal{M}$, where $\zeta_0=\tilde{\alpha}^3\log(2\tilde{d}\tilde{p})(nC''/\tau)^{2/s'}$.  
\end{theorem}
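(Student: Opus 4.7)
The plan is to reduce the unbounded-loss setting of Theorem \ref{thm-main2} to the bounded-loss setting already handled by Theorem \ref{thm-main} via a truncation argument. Fix $M=n^{1/(2s)}$ and introduce the truncated loss $\cL^M(u,v):=\min\{\cL(u,v),M\}$. Truncation by a constant preserves the Lipschitz constant $B(v)$ in $u$ and enforces $\cL^M\in[0,M]$, so $\cL^M$ satisfies Assumption \ref{as3} with bound $M$. A direct substitution shows that applying Theorem \ref{thm-main} to $\cL^M$ converts the terms $\sqrt{4M^2\log(2/\epsilon)/n}$, $32M\log(2/\epsilon)/(3n)$, and $\log(nM/(3\sqrt{\zeta}))$ into the advertised $2\sqrt{\log(2/\epsilon)}/n^{(s-1)/(2s)}$, $32\log(2/\epsilon)/(3n^{(2s-1)/(2s)})$, and $\log(n^{(2s+1)/(2s)}/(3\sqrt{\zeta_0}))$, respectively, since $M^2/n=n^{-(s-1)/s}$, $M/n=n^{-(2s-1)/(2s)}$, and $nM=n^{(2s+1)/(2s)}$.

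The next task is to tame the factor $\zeta$ of Theorem \ref{thm-main}, which contains the random quantity $\max_i B^2(y_i)$ that is not uniformly bounded under Assumption \ref{as4}. A Markov-plus-union bound using $\bbE[B^{s'}(y_1)]\le C''$ yields
\begin{align*}
\bbP\!\left(\max_{1\le i\le n} B(y_i) > (nC''/\tau)^{1/s'}\right)
\le \frac{nC''}{\bigl((nC''/\tau)^{1/s'}\bigr)^{s'}}
= \frac{nC''}{nC''/\tau}
= \tau,
\end{align*}
so with probability $\ge 1-\tau$ one has $\zeta\le \tilde{\alpha}^3\log(2\tilde d\tilde p)(nC''/\tau)^{2/s'}=\zeta_0$. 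Conditional on this event, Theorem \ref{thm-main} applied to $\cL^M$ gives, with an additional failure probability $\epsilon$, a bound on $R^M(f):=\bbE[\cL^M(f(\rvx),y)]$ in terms of $(1/n)\sum_i\cL^M(f(\rvx_i),y_i)$ plus the stated complexity and tail terms. Since $\cL^M\le\cL$ pointwise, the empirical truncated loss is at most $(1/n)\sum_i\cL(f(\rvx_i),y_i)$ for free. The remaining truncation bias $R(f)-R^M(f)=\bbE[(\cL(f(\rvx),y)-M)_+]$ is controlled using the uniform-in-$f$ majorization $\cL\le G$ together with the elementary estimate $\ind\{G>M\}\le(G/M)^{s-1}$, producing the deterministic inequality $\bbE[(\cL-M)_+]\le \bbE[G^s]/M^{s-1}\le C'/M^{s-1}$. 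To reproduce the exact $2C'/(\eta n^{(s-1)/(2s)})$ contribution in the statement, Markov's inequality is also applied to a nonnegative random variable with mean at most $C'/M^{s-1}$ (such as the empirical remainder $(1/n)\sum_i(G(\rvx_i,y_i)-M)_+$), giving a probability-$(1-\eta)$ bound of $C'/(\eta M^{s-1})$; the deterministic and high-probability pieces combine to produce the factor of $2$. Union-bounding the three failure events $\epsilon,\tau,\eta$ and substituting $M=n^{1/(2s)}$ then completes the proof.

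The main obstacle is not the truncation idea itself but the bookkeeping: one must verify that every occurrence of the boundedness constant in Theorem \ref{thm-main} propagates cleanly under the substitution $M=n^{1/(2s)}$, reproducing the precise exponents $(s-1)/(2s)$, $(2s-1)/(2s)$, and $(2s+1)/(2s)$ in the denominators and log-factor. A secondary subtlety is the conditioning step: because the event $\{\max_i B(y_i)\le (nC''/\tau)^{1/s'}\}$ is measurable with respect to $(y_1,\ldots,y_n)$, one has to check that the Rademacher/covering argument underlying Theorem \ref{thm-main} remains valid on this event, which it does because that argument is carried out pathwise in $\rmX$ and $(y_i)$ and only uses the response values through the Lipschitz constants $B(y_i)$.
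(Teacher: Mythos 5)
Your proposal is correct and follows essentially the same route as the paper's proof: truncate at $M=n^{1/(2s)}$, run the Theorem \ref{thm-main} machinery on the truncated loss class, control $\max_i B^2(y_i)$ by Markov plus a union bound to replace $\zeta$ with $\zeta_0$, and bound the population and empirical truncation remainders via $\mathbb{E}[G\mathbf{1}\{G>M\}]\leq C'/M^{s-1}$ and a further Markov step, which together yield the factor $2C'/(\eta n^{(s-1)/(2s)})$. The bookkeeping of exponents and the union bound over the three failure events match the paper exactly.
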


As an implication of the above theorem, consider the empirical risk minimization problem in (\ref{eq-erm}), where $\mathcal{M}=\{f(\cdot)=\boldsymbol{\Psi}_{L}\circ \boldsymbol{\Psi}_{L-1}\circ\cdots \circ \boldsymbol{\Psi}_1(\cdot): \boldsymbol{\Psi}_l\in\mathcal{F}_l,l=1,2,\dots,L\}$ with $\mathcal{F}_l$ specified in Assumption \ref{as2}.
Due to the uniformity over the class $\mathcal{M}$, the risk bound in Theorem \ref{thm-main2} holds for any solution to Problem (\ref{eq-erm}) resulting from an optimization scheme (assuming that the resulting solution still belongs to $\mathcal{M}$). In Section \ref{sec-add-improve}, we derive an improved high probability bound when $G(\mathbf{x},y)$ and $B(y)$ both have sub-exponential tails.

Let $f^*=\text{argmin}_{f\in\mathcal{M}}\mathbb{E}[\mathcal{L}(f(\rvx),y)]$ for $(\rvx,y)\sim P$ and $R(f^*)$ be the corresponding generalization error. As a byproduct of Theorem \ref{thm-main2}, we obtain a bound on the excess risk.
\begin{corollary}\label{cor1}
Let $\widehat{f}\in \mathcal{M}$ satisfy that $\sum^{n}_{i=1}\mathcal{L}(\widehat{f}(\rvx_i),y_i)\leq \sum^{n}_{i=1}\mathcal{L}(f^*(\rvx_i),y_i)$. Suppose Assumptions \ref{as1},\ref{as2} and \ref{as4} hold, where $s\geq 2$ in Assumption \ref{as4}. We have with probability greater than $1-\epsilon-\tau-2\eta$,
\begin{align*}
R(\widehat{f}) - R(f^*) \leq &   \frac{144\sqrt{\zeta_0}\{\log(n^{(2s+1)/(2s)}/(3\sqrt{\zeta_0}))\vee 1\}}{n}
+(1+\eta^{-1/2})\sqrt{\frac{2(C')^{2/s}\log(2/\epsilon)}{n}}
\\&+\frac{32\log(2/\epsilon)}{3n^{(2s-1)/(2s)}} + \frac{2C'}{\eta n^{(s-1)/(2s)}}, 
\end{align*}
for $\epsilon,\tau,\eta>0$, where $\zeta_0=\tilde{\alpha}^3\log(2\tilde{d}\tilde{p})(nC''/\tau)^{2/s'}$. 
\end{corollary}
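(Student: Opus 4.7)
}

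The plan is to follow the standard excess-risk decomposition
\[
R(\widehat{f}) - R(f^*) = \underbrace{\bigl[R(\widehat{f}) - R_n(\widehat{f})\bigr]}_{\text{(I)}} + \underbrace{\bigl[R_n(\widehat{f}) - R_n(f^*)\bigr]}_{\text{(II)}} + \underbrace{\bigl[R_n(f^*) - R(f^*)\bigr]}_{\text{(III)}},
\]
where $R_n(f)=n^{-1}\sum_{i=1}^n \mathcal{L}(f(\rvx_i),y_i)$. Term (II) is non-positive by the assumption that $\widehat{f}$ beats $f^*$ on the empirical objective, so it may be dropped. The remaining work is to bound (I) uniformly over $\cM$ (needed because $\widehat{f}$ is data-dependent) and to bound (III) pointwise (since $f^*$ is deterministic), with the improvement over Theorem \ref{thm-main2} driven entirely by the pointwise argument in (III) under $s\ge 2$.

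For (I), I would invoke Theorem \ref{thm-main2} directly at $f=\widehat{f}\in\cM$. This contributes, with probability at least $1-\epsilon-\tau-\eta$, the covering term $\tfrac{144\sqrt{\zeta_0}\{\log(n^{(2s+1)/(2s)}/(3\sqrt{\zeta_0}))\vee 1\}}{n}$, the Bernstein-type remainder $\tfrac{32\log(2/\epsilon)}{3n^{(2s-1)/(2s)}}$, and the Markov-truncation remainder $\tfrac{2C'}{\eta n^{(s-1)/(2s)}}$, all of which appear verbatim in the corollary.

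For (III), the key observation is that $s\ge 2$ makes the variance of $\mathcal{L}(f^*(\rvx),y)$ finite: by Jensen's inequality applied to the convex map $t\mapsto t^{s/2}$,
\[
\Var\!\bigl(\mathcal{L}(f^*(\rvx),y)\bigr)\le \bbE[G^2(\rvx,y)] \le \bigl(\bbE[G^s(\rvx,y)]\bigr)^{2/s}\le (C')^{2/s}.
\]
I would then truncate $F:=\mathcal{L}(f^*(\rvx),y)$ at a level $M=M(\eta)$ chosen so that $nP(F>M)\le \eta$ (Markov's inequality with the $s$th moment gives $M\asymp (nC'/\eta)^{1/s}$), so that on an event of probability $\ge 1-\eta$ one has $R_n(f^*)=R_n(F\wedge M)$. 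Apply Bernstein's inequality to the bounded i.i.d.\ variables $F\wedge M$, whose variance is still at most $(C')^{2/s}$; together with $\bbE[F\wedge M]\le R(f^*)$, this yields
\[
R_n(f^*)-R(f^*)\le \sqrt{\tfrac{2(C')^{2/s}\log(2/\epsilon)}{n}} + (\text{lower-order truncation term}),
\]
and the truncation term, together with the extra $\eta^{-1/2}$-scale allowance from the tail event, is what produces the $(1+\eta^{-1/2})\sqrt{2(C')^{2/s}\log(2/\epsilon)/n}$ factor in the stated bound. A final union bound over the two high-probability events gives total failure probability at most $\epsilon+\tau+2\eta$, as claimed.

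The main obstacle is matching the exact $(1+\eta^{-1/2})\sqrt{2(C')^{2/s}\log(2/\epsilon)/n}$ form rather than the cruder Chebyshev bound $\sqrt{(C')^{2/s}/(n\eta)}$ one would get without truncation plus Bernstein; the art lies in choosing the truncation level $M(\eta)$ so that the resulting second-order $M\log(2/\epsilon)/n$ term can be absorbed into the leading $\eta^{-1/2}\sqrt{\log(2/\epsilon)/n}$ scaling rather than producing an extra $\log$-factor. Once that calibration is performed, the rest of the proof is just combining the uniform bound from Theorem \ref{thm-main2} with the pointwise bound and taking the union.
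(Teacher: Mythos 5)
Your decomposition and union bound match the paper's, but there is a concrete gap in your treatment of term (I): you cannot invoke Theorem \ref{thm-main2} ``directly'' at $\widehat{f}$, because that theorem's bound contains the term $2\sqrt{\log(2/\epsilon)}/n^{(s-1)/(2s)}$, which you silently drop from your list of contributed terms. That term equals $\sqrt{4M^2\log(2/\epsilon)/n}$ with the truncation level $M=n^{1/(2s)}$, it is of order $n^{-(s-1)/(2s)}$, which is asymptotically larger than $n^{-1/2}$, and so it cannot be absorbed into the $(1+\eta^{-1/2})\sqrt{2(C')^{2/s}\log(2/\epsilon)/n}$ term of the corollary. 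What is actually needed (and what the paper does, in Remark \ref{rm2}) is to re-run the proof of Theorem \ref{thm-main2} applying Lemma \ref{lem-bound} with the variance proxy $r=(C')^{2/s}$ in place of $r=2M^2$: since $s\ge 2$, one has $\mathrm{var}(\mathcal{L}(f(\rvx),y)\wedge M)\le \mathbb{E}[G^2(\rvx,y)]\le(\mathbb{E}[G^s(\rvx,y)])^{2/s}\le (C')^{2/s}$ \emph{uniformly} over $f\in\mathcal{M}$. You write down exactly this Jensen inequality, but only for the single function $f^*$ in term (III); the role of the hypothesis $s\ge 2$ is precisely to sharpen the uniform deviation bound, and that is where the ``$1$'' part of the coefficient $(1+\eta^{-1/2})$ comes from.

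For term (III), your truncation-plus-Bernstein plan is both more elaborate than necessary and not carried to completion: the paper simply applies Chebyshev's inequality to $n^{-1}\sum_{i}\mathcal{L}(f^*(\rvx_i),y_i)-R(f^*)$, whose variance is at most $(C')^{2/s}$, with threshold $\varepsilon=\sqrt{(C')^{2/s}/(n\eta)}$; this costs probability $\eta$ and contributes $\eta^{-1/2}\sqrt{(C')^{2/s}/n}\le \eta^{-1/2}\sqrt{2(C')^{2/s}\log(2/\epsilon)/n}$, i.e.\ the ``$\eta^{-1/2}$'' part of the coefficient. A Bernstein route would naturally yield a $\sqrt{\log(1/\eta)}$-type dependence rather than $\eta^{-1/2}$ (this is essentially what Corollary \ref{cor1-improve} achieves, but it requires sub-exponential tails), and with only an $s$-th moment the calibration of the truncation bias and of the $M\log(2/\epsilon)/n$ remainder --- which you yourself flag as ``the main obstacle'' --- is left unresolved in your sketch. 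Once (I) is repaired via Remark \ref{rm2}, the elementary Chebyshev bound for (III) finishes the proof.
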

The condition $\sum^{n}_{i=1}\mathcal{L}(\widehat{f}(\rvx_i),y_i)\leq \sum^{n}_{i=1}\mathcal{L}(f^*(\rvx_i),y_i)$ 
only requires $\widehat{f}$ to have an empirical loss smaller than that induced by $f^*$ and it does not necessitate that $\widehat{f}$ be the global minimizer of Problem (\ref{eq-erm}).

\subsection{Risk analysis: activation functions in low-rank Sobolev Space}\label{sec-low}
This subsection studies the case where the activation function $\boldsymbol{\Psi}_l$ belongs to an RKHS with a low-rank structure.
Note that when $\boldsymbol{\Psi}_l(\rvx)=\mathbf{A}\rvx+\mathbf{b}$ is an affine map, the low-rank assumption translates into a low-rank condition on the matrix $\mathbf{A}$. 

To proceed, we let $\mathcal{N}_{\mathcal{K}}(\Omega)$ be an RKHS on $\Omega\subseteq \mathbb{R}^d$ associated with a reproducing kernel $\mathcal{K}$. Denote by $\|\cdot\|_{\mathcal{N}_\mathcal{K}}$ the corresponding RKHS norm. We focus on the scenario where $\mathcal{K}$ is the isotropic Matérn kernel function, i.e.,
\begin{align*}
\mathcal{K}(\rvx)=\frac{1}{\Gamma(\nu-d/2)2^{\nu-d/2-1}}\|\rvx\|_2^{\nu-d/2}\mathcal{K}_{\nu-d/2}(\|\rvx\|_2),    
\end{align*}
after a proper reparametrization, where $\mathcal{K}_{\nu-d/2}$ is the modified Bessel function of the second kind, and $\|\cdot\|_2$ denotes the Euclidean norm. The parameter $\nu>d/2$ controls the smoothness of functions in $\mathcal{N}_{\mathcal{K}}(\Omega)$. 
Corollary 10.13 in \cite{wendland2004scattered}, and the extension theorem in \citep{devore1993besov} imply that the corresponding RKHS coincides with a Sobolev space with smoothness $\nu$. To model vector-valued functions, we consider the Cartesian product of $\mathcal{N}_{\mathcal{K}}(\Omega)$, denoted by $\mathcal{N}_{\mathcal{K}}^{\otimes m}(\Omega)$, given by 
\begin{align*}
\mathcal{N}_{\mathcal{K}}^{\otimes m}(\Omega)=\left\{\boldsymbol{\Psi}=(\psi_1,\dots,\psi_m): \psi_i\in \mathcal{N}_{\mathcal{K}}(\Omega), 1\leq i\leq m\right\}.    
\end{align*}
Let $\text{span}(\psi_1,\dots,\psi_m)$ be the linear space spanned by $\{\psi_i\}^{m}_{i=1}$ and $\text{dim}(\text{span}(\psi_1,\dots,\psi_m))$ be its dimension. 
For $\boldsymbol{\Psi}=(\psi_1,\dots,\psi_m)\in\mathcal{N}_{\mathcal{K}}^{\otimes m}$, write $\|\boldsymbol{\Psi}\|_{\mathcal{N}_{\mathcal{K}}^{\otimes m}}^2=\sum^{m}_{i=1}\|\psi_i\|_{\mathcal{N}_{\mathcal{K}}}^2$. Define
\begin{align*}
\mathcal{A}_r=\left\{\boldsymbol{\Psi}=(\psi_1,\dots,\psi_m)\in\mathcal{N}_{\mathcal{K}}^{\otimes m}: \text{dim}(\text{span}(\psi_1,\dots,\psi_m))\leq r \right\}    
\end{align*}
and $\mathcal{A}_r(R)=\{\boldsymbol{\Psi}\in \mathcal{A}_r: \|\boldsymbol{\Psi}\|_{\mathcal{N}_{\mathcal{K}}^{\otimes m}}\leq R\}$. Recall that $\boldsymbol{\Psi}(\rmX)=[\boldsymbol{\Psi}(\rvx_1),\dots,\boldsymbol{\Psi}(\rvx_n)]^\top $ and $\|\boldsymbol{\Psi}(\rmX)\|_2^2=\sum^{m}_{i=1}\sum^{n}_{j=1}\psi_i^2(\rvx_j)$.

We state the following bound on the metric entropy of $\left\{\boldsymbol{\Psi}(\rmX)\in\mathbb{R}^{n\times m}:\boldsymbol{\Psi}\in\mathcal{A}_r(R)\right\}$ whose proof relies on the equivalence between the RKHS and the Sobolev space.

\begin{proposition}[Proposition A.3 of \cite{wang2020reduced}]\label{prop-wang}
{\rm 
Under the above setups, we have
\begin{align*}
&\log \mathcal{N}\left(\left\{\boldsymbol{\Psi}(\rmX)\in\mathbb{R}^{n\times m}:\boldsymbol{\Psi}\in\mathcal{A}_r(R)\right\},\epsilon,\|\cdot\|_2\right)
\\ \leq & m r \log\left(1 + \frac{\widetilde{C} R\sqrt{rn}}{\epsilon}\right) + r\left(\frac{\widetilde{C} R\sqrt{rn}}{\epsilon}\right)^{d/\nu},   
\end{align*}
where $\widetilde{C}$ is a positive constant. 
}    
\end{proposition}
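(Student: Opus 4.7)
The plan is to exploit the low-rank structure by factorization. Any $\boldsymbol{\Psi}=(\psi_1,\dots,\psi_m)\in\mathcal{A}_r(R)$ admits a representation $\psi_i=\sum_{j=1}^{r}a_{ij}\phi_j$, where $\{\phi_j\}_{j=1}^{r}$ is an orthonormal basis of $\mathrm{span}(\psi_1,\dots,\psi_m)$ with respect to the RKHS inner product, and the coefficient matrix $A=(a_{ij})\in\mathbb{R}^{m\times r}$ satisfies $\|A\|_2^2=\|\boldsymbol{\Psi}\|_{\mathcal{N}_{\mathcal{K}}^{\otimes m}}^2\le R^2$. Writing $\boldsymbol{\Phi}(\rmX)\in\mathbb{R}^{n\times r}$ for the evaluation matrix with entries $\phi_j(\rvx_k)$, we have $\boldsymbol{\Psi}(\rmX)=\boldsymbol{\Phi}(\rmX)A^{\top}$, so the covering problem splits into covering admissible $\boldsymbol{\Phi}(\rmX)$ (arising from orthonormal $r$-tuples in the RKHS unit ball) and covering the Frobenius ball $\{A\in\mathbb{R}^{m\times r}:\|A\|_2\le R\}$.

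For the basis side, I would invoke the RKHS--Sobolev equivalence cited in the excerpt (Wendland's Corollary 10.13 together with the extension theorem of DeVore--Sharpley): since $\nu>d/2$, the unit ball of $\mathcal{N}_{\mathcal{K}}(\Omega)$ sits in a Sobolev ball of smoothness $\nu$, and the classical Sobolev entropy estimate gives $\log\mathcal{N}(\{\phi:\|\phi\|_{\mathcal{N}_{\mathcal{K}}}\le 1\},\delta,\|\cdot\|_\infty)\le C_0\delta^{-d/\nu}$. Because the reproducing property yields $\|\phi(\rmX)-\tilde{\phi}(\rmX)\|_2\le\sqrt{n}\,\|\phi-\tilde{\phi}\|_\infty$, a product construction over $j=1,\dots,r$ produces a $\delta$-net of admissible $r$-tuples with log-cardinality at most $C_0 r\delta^{-d/\nu}$. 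On the coefficient side, the standard volumetric bound gives $\log\mathcal{N}(\{A\in\mathbb{R}^{m\times r}:\|A\|_2\le R\},\gamma,\|\cdot\|_2)\le mr\log(1+3R/\gamma)$.

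To glue the two covers, I apply the triangle inequality
\[
\|\boldsymbol{\Phi}(\rmX)A^\top-\tilde{\boldsymbol{\Phi}}(\rmX)\tilde{A}^\top\|_2\le\|\boldsymbol{\Phi}(\rmX)\|_{\mathrm{op}}\|A-\tilde A\|_2+\|\boldsymbol{\Phi}(\rmX)-\tilde{\boldsymbol{\Phi}}(\rmX)\|_2\|\tilde A\|_2.
\]
Since $\sup_x\mathcal{K}(x,x)\le c_0^2$ on the bounded domain, $\|\boldsymbol{\Phi}(\rmX)\|_{\mathrm{op}}\le\|\boldsymbol{\Phi}(\rmX)\|_2\le c_0\sqrt{nr}$ and $\|\tilde A\|_2\le R+\gamma$. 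Calibrating $\delta\asymp\epsilon/(R\sqrt{nr})$ and $\gamma\asymp\epsilon/\sqrt{nr}$ then yields a total $\epsilon$-cover of $\{\boldsymbol{\Psi}(\rmX):\boldsymbol{\Psi}\in\mathcal{A}_r(R)\}$ whose log-cardinality matches the stated bound after absorbing constants into $\widetilde{C}$. The main technical obstacle I anticipate is the non-uniqueness and $\boldsymbol{\Psi}$-dependence of $\{\phi_j\}$: the orthonormal basis varies with $\boldsymbol{\Psi}$, so the covering must be defined over $r$-tuples of basis functions rather than individual functions. I would address this by first fixing a single $\delta$-net of $r$-tuples inside the product unit ball $\{(\phi_1,\ldots,\phi_r):\|\phi_j\|_{\mathcal{N}_{\mathcal{K}}}\le 1\}$ (which still has entropy of order $r\delta^{-d/\nu}$ by factorization), and only then selecting a coefficient net relative to each tuple in that net.
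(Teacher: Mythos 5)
The paper does not actually prove this proposition---it is imported verbatim as Proposition A.3 of Wang and Zhou (2020)---but your reconstruction (orthonormal-basis factorization $\boldsymbol{\Psi}(\rmX)=\boldsymbol{\Phi}(\rmX)A^{\top}$, Sobolev entropy for the basis tuple via the RKHS--Sobolev equivalence, a volumetric bound for the coefficient ball, and a triangle-inequality gluing with the calibration $\delta\asymp\epsilon/(R\sqrt{nr})$, $\gamma\asymp\epsilon/\sqrt{nr}$) is sound and is essentially the argument used in that reference, including your fix for the $\boldsymbol{\Psi}$-dependence of the basis by covering the product of unit balls. The one small point to tidy is that your construction produces an external cover, whereas the paper's covering numbers are proper (and the proof of Proposition \ref{prop-rate} explicitly uses properness); converting an external $\epsilon/2$-cover into a proper $\epsilon$-cover of no larger cardinality resolves this at the cost of a constant absorbed into $\widetilde{C}$.
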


\begin{assumption}\label{as5}
{\rm 
Suppose $\boldsymbol{\Psi}_l$ belongs to the following function space 
$$\mathcal{F}_l=\left\{\boldsymbol{\Psi}=(\psi_1,\dots,\psi_{d_l})\in A_{r_l}(R_l):\|\boldsymbol{\Psi}(\rvx)-\boldsymbol{\Psi}(\rvx')\|_2\leq \rho_{l}\|\rvx-\rvx'\|_2 \right\},$$
where $\rho_l,r_l$ and $R_l$ are some positive constants. 
}    
\end{assumption}

\begin{proposition}\label{prop-rate2}
Set $b_i=\widetilde{C}R_i\sqrt{r_i n}$ and $\tilde{b}=\sum^{L}_{i=1}b_i$. Under Assumption \ref{as5}, we have
\begin{align*}
\log\mathcal{N}(\mathcal{H}_L(\rmX),\epsilon,\|\cdot\|_{2})
\leq \sum^{L}_{i=1} d_i r_{i}\left(\frac{\tilde{b}\prod^{L}_{j=i+1}\rho_j}{\epsilon}\right)^{(d_{i-1}/\nu)\vee 1}.    
\end{align*}
\end{proposition}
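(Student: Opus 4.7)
The plan is to combine the layerwise recursion in Proposition \ref{prop-rate} with the single-layer RKHS covering estimate in Proposition \ref{prop-wang}, and then choose the per-layer tolerances $\epsilon_1,\dots,\epsilon_L$ so that the aggregated resolution $s_L$ telescopes to the target $\epsilon$.

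First, I would specialize Proposition \ref{prop-rate} by taking the operator norm at every layer to be the Frobenius norm $\|\cdot\|_2$, with the Lipschitz constants $\rho_i$ provided by Assumption \ref{as5}. Taking logarithms of the product bound yields
\begin{align*}
\log\mathcal{N}(\mathcal{H}_L(\rmX), s_L, \|\cdot\|_2) \leq \sum_{i=1}^{L}\sup_{\boldsymbol{\Psi}_1,\dots,\boldsymbol{\Psi}_{i-1}} \log \mathcal{N}\bigl(\{\boldsymbol{\Psi}\circ \boldsymbol{\Psi}_{i-1}\circ\cdots\circ\boldsymbol{\Psi}_0(\rmX): \boldsymbol{\Psi}\in\mathcal{F}_i\},\,\epsilon_i,\,\|\cdot\|_2\bigr).
\end{align*}
For any frozen choice of $\boldsymbol{\Psi}_1,\dots,\boldsymbol{\Psi}_{i-1}$, the matrix $\rmZ_{i-1}:=\boldsymbol{\Psi}_{i-1}\circ\cdots\circ\boldsymbol{\Psi}_0(\rmX)\in\mathbb{R}^{n\times d_{i-1}}$ is deterministic, and $\mathcal{F}_i\subseteq\mathcal{A}_{r_i}(R_i)$ as a class of maps $\mathbb{R}^{d_{i-1}}\to\mathbb{R}^{d_i}$. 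Proposition \ref{prop-wang}, applied with input dimension $d=d_{i-1}$, output dimension $m=d_i$, rank $r=r_i$, radius $R=R_i$, and data matrix $\rmZ_{i-1}$, then bounds the $i$-th summand by
\begin{align*}
d_i r_i \log\!\bigl(1+b_i/\epsilon_i\bigr) + r_i \bigl(b_i/\epsilon_i\bigr)^{d_{i-1}/\nu},\qquad b_i=\widetilde{C}R_i\sqrt{r_i n}.
\end{align*}

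Next I would allocate the tolerances so as to match the form of the target bound. Setting $\epsilon_i = (b_i/\tilde{b})\cdot\epsilon\big/\prod_{j=i+1}^{L}\rho_j$ for each $i$, the fact that $\sum_{i=1}^L b_i/\tilde{b}=1$ gives $s_L=\sum_{i=1}^L(\prod_{j=i+1}^L\rho_j)\epsilon_i=\epsilon$, and simultaneously $b_i/\epsilon_i = \tilde{b}\prod_{j=i+1}^L\rho_j/\epsilon$, which is precisely the ratio appearing in the statement. Plugging this into the layerwise bound, using $\log(1+x)\le x$ and the elementary inequality $x^{d/\nu}\vee x \leq x^{(d/\nu)\vee 1}$ valid for $x\geq 1$ (the claim is vacuous when the ratio $b_i/\epsilon_i$ falls below $1$), the two per-layer contributions collapse into a single term of the required form $d_i r_i(\tilde{b}\prod_{j=i+1}^L\rho_j/\epsilon)^{(d_{i-1}/\nu)\vee 1}$, up to absorbing harmless multiplicative constants.

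The main delicacy is the tolerance allocation in the third step: the weights $b_i/\tilde b$ must be chosen to both (i) make $s_L$ telescope exactly to $\epsilon$ and (ii) equalize the ratio $b_i/\epsilon_i$ across layers so that a single common quantity $\tilde b\prod_{j=i+1}^L\rho_j/\epsilon$ appears in every summand. Once this is set up, steps 1, 2, and 4 are mechanical, relying only on Propositions \ref{prop-rate} and \ref{prop-wang} together with standard inequalities.
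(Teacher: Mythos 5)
Your proposal is correct and follows essentially the same route as the paper's proof: specialize Proposition \ref{prop-rate} to the Frobenius norm, invoke Proposition \ref{prop-wang} layerwise with the frozen inputs $\boldsymbol{\Psi}_{i-1}\circ\cdots\circ\boldsymbol{\Psi}_0(\rmX)$, choose $\epsilon_i = b_i\epsilon/(\tilde{b}\prod_{j=i+1}^{L}\rho_j)$ so that $s_L=\epsilon$ and the ratio $b_i/\epsilon_i$ is common across layers, and collapse the two per-layer terms via $\log(1+x)\leq x$ and $x\vee x^{d/\nu}\leq x^{(d/\nu)\vee 1}$. Your explicit remark about absorbing a multiplicative constant when merging the $d_ir_ix$ and $r_ix^{d_{i-1}/\nu}$ terms is, if anything, slightly more careful than the paper's presentation.
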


\begin{theorem}\label{thm-main3}
Define 
$$\xi=\sum^{L}_{i=1} d_i r_{i}\left(\max_iB(y_i)\tilde{b}\prod^{L}_{j=i+1}\rho_j\right)^{(d_{i-1}/\nu)\vee 1},$$
where $\tilde{b}=\sum^{L}_{i=1}b_i$ with $b_i=\widetilde{C}R_i\sqrt{r_i n}$.
Suppose $\tilde{d}:=\max_i d_i>\nu.$ Under Assumptions \ref{as3} and \ref{as5}, we have with probability greater than $1-\epsilon$,
\begin{align*}
R(f)\leq& \frac{1}{n}\sum^{n}_{i=1}\mathcal{L}(f(\rvx_i),y_i) + \frac{6\widetilde{C}'(\xi)^{\nu/\tilde{d}}}{n^{(\nu/\tilde{d}+1)/2}(\tilde{d}/\nu-1)^{\nu/\tilde{d}}}
\\&+\sqrt{\frac{4M^2\log(2/\epsilon)}{n}}+\frac{32M\log(2/\epsilon)}{3n}    
\end{align*}
for any $f\in \mathcal{M}$ and some constant $\widetilde{C}'>0$.
\end{theorem}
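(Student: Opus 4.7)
The plan is to follow the structure of the proof of Theorem~\ref{thm-main}, only replacing the basis-function covering estimate (Theorem~\ref{thm-cover}) with the low-rank RKHS estimate from Proposition~\ref{prop-rate2}. Under Assumption~\ref{as3} the loss is bounded by $M$, so symmetrization together with a Talagrand-type concentration inequality produces, with probability at least $1-\epsilon$ and uniformly in $f\in\mathcal{M}$,
\begin{equation*}
R(f)\leq \frac{1}{n}\sum_{i=1}^{n}\mathcal{L}(f(\mathbf{x}_i),y_i)+2\,\mathbb{E}[\mathfrak{R}_n(\mathcal{L}\circ\mathcal{M})]+\sqrt{\frac{4M^2\log(2/\epsilon)}{n}}+\frac{32M\log(2/\epsilon)}{3n},
\end{equation*}
where $\mathfrak{R}_n$ denotes the empirical Rademacher complexity. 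The two tail terms already match the statement verbatim, so the only remaining task is to establish $\mathbb{E}[\mathfrak{R}_n(\mathcal{L}\circ\mathcal{M})]\leq 3\tilde{C}'\xi^{\nu/\tilde{d}}n^{-(\nu/\tilde{d}+1)/2}(\tilde{d}/\nu-1)^{-\nu/\tilde{d}}$.

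Next I would lift the entropy estimate from $\mathcal{M}$ to $\mathcal{L}\circ\mathcal{M}$. Because $\mathcal{L}(\cdot,v)$ is $B(v)$-Lipschitz, any $\epsilon$-cover of $\mathcal{H}_L(\mathbf{X})$ in $\|\cdot\|_2$ induces a $(\max_i B(y_i))\epsilon$-cover of the loss class in the same norm, which combined with Proposition~\ref{prop-rate2} gives
\begin{equation*}
\log\mathcal{N}(\mathcal{L}\circ\mathcal{M},\epsilon,\|\cdot\|_2)\leq \sum_{i=1}^{L}d_ir_i\left(\frac{\max_iB(y_i)\,\tilde{b}\prod_{j=i+1}^{L}\rho_j}{\epsilon}\right)^{(d_{i-1}/\nu)\vee 1}.
\end{equation*}
The hypothesis $\tilde{d}>\nu$ forces the largest exponent $p^{*}:=\tilde{d}/\nu$ to exceed one, and for small $\epsilon$ the right-hand side collapses to a term of the form $\xi/\epsilon^{p^{*}}$. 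Plugging into Dudley's chaining,
\begin{equation*}
\mathbb{E}[\mathfrak{R}_n(\mathcal{L}\circ\mathcal{M})]\leq \inf_{\delta>0}\left\{4\delta+\frac{12}{n}\int_{\delta}^{\infty}\sqrt{\log\mathcal{N}(\mathcal{L}\circ\mathcal{M},\epsilon,\|\cdot\|_2)}\,d\epsilon\right\},
\end{equation*}
and optimizing $\delta$ so as to balance the $4\delta$ term against the entropy tail yields the main rate; the divergent prefactor $1/(p^{*}/2-1)$ from $\int_\delta\epsilon^{-p^{*}/2}\,d\epsilon$ propagates through the minimization and, raised to the power $\nu/\tilde{d}$, becomes the $(\tilde{d}/\nu-1)^{-\nu/\tilde{d}}$ in the final denominator.

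The principal obstacle is managing the polynomially divergent entropy integral near zero: once $p^{*}>2$ the integrand $\epsilon^{-p^{*}/2}$ is not integrable at the origin, so the truncation scale $\delta$ must be tuned precisely and the regimes $p^{*}\leq 2$ versus $p^{*}>2$ may need separate handling if tight constants are desired. A secondary subtlety is that Proposition~\ref{prop-rate2} delivers a sum of $L$ polynomials with different exponents $(d_{i-1}/\nu)\vee 1$, so the chaining argument must either dominate the sum by a single monomial indexed by the maximal exponent $\tilde{d}/\nu$ or split the Dudley integral scale by scale, absorbing lower-order terms into the universal constant $\tilde{C}'$. Finally, since $\tilde{b}=\sum_i\tilde{C}R_i\sqrt{r_in}$ already carries a $\sqrt{n}$ factor, this $n$-dependence must be propagated carefully through the Dudley optimization so that it compounds with the $1/n$ outside the integral into the claimed overall rate $n^{-(\nu/\tilde{d}+1)/2}$.
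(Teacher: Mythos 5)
Your skeleton is the same as the paper's: lift Proposition \ref{prop-rate2} to the loss class via the Lipschitz property in Assumption \ref{as3} (giving $\log\mathcal{N}(\mathcal{M}_{\mathcal{L}}(\rmX),\epsilon,\|\cdot\|_2)\leq \xi/\epsilon^{\tilde d/\nu}$), control the empirical Rademacher complexity by the entropy integral of Lemma \ref{lem-integral}, and conclude with Lemma \ref{lem-bound} applied with $r=2M^2$ --- which is where the two tail terms and the factor $6$ in front of the complexity term come from (your symmetrization constant $2$ should be $6$, but that is cosmetic at sketch level).

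The substantive issue is the Dudley step, and it is exactly the point you flag without resolving. You integrate $\sqrt{\log\mathcal{N}}\lesssim \sqrt{\xi}\,\epsilon^{-p/2}$ with $p=\tilde d/\nu$, which is the literal form of Lemma \ref{lem-integral}; the paper's proof instead integrates $\xi\,\epsilon^{-p}$ with no square root, and the advertised rate is precisely what that computation produces: minimizing $4a/\sqrt n+\tfrac{12\xi}{n}\tfrac{a^{1-p}}{p-1}$ gives $a\asymp(\xi/\sqrt n)^{1/p}$ and a value $\asymp \xi^{\nu/\tilde d}\,n^{-(\nu/\tilde d+1)/2}$, with the $(\tilde d/\nu-1)$ factor surviving into the denominator. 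Your version gives $a\asymp(\sqrt\xi/\sqrt n)^{2/p}$ and a value $\asymp \xi^{\nu/\tilde d}\,n^{-1/2-\nu/\tilde d}$ when $p>2$ --- a strictly sharper rate that still implies the theorem --- but when $1<p\leq 2$ (i.e.\ $\nu<\tilde d\leq 2\nu$, which the hypothesis $\tilde d>\nu$ allows) the integrand $\epsilon^{-p/2}$ is integrable at the origin, the prefactor $1/(p/2-1)$ is zero, negative, or infinite, and the optimization degenerates to a bound of order $\sqrt{\xi}\,(M\sqrt n)^{1-p/2}/n$, which is not of the stated form and is not obviously dominated by it (note $\xi$ itself carries powers of $\sqrt n$ through $\tilde b$). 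So as written your plan proves something different from the theorem in that regime, and the "separate handling" you mention is the actual content that is missing. To land on the theorem as stated you would either have to justify replacing $\sqrt{\log\mathcal N}$ by $\log\mathcal N$ over the relevant range (so that the exponent stays $p>1$ and the truncation analysis is uniform, which is what the paper implicitly does), or accept and state the modified bound your computation yields, with the two regimes treated explicitly.
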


Theorem \ref{thm-main3} generalizes the results for low-rank kernel ridge regression to a multi-layer network structure induced by KANs with activation functions belonging to an RKHS. 
The generalization bound here scales polynomially with the underlying ranks as well as the Lipschitz constants of the activation functions in each layer. Also, it has no explicit dependence on combinatorial parameters. 

\begin{remark}\label{rm1}
{\rm 
Given a set of (fixed) activation functions $\boldsymbol{\Phi}_l \in\mathcal{N}_{\mathcal{K}}^{\otimes {d_l}}$ for $1\leq l\leq L$, 
we define the function class
$$\widetilde{\mathcal{F}}_l=\left\{\widetilde{\boldsymbol{\Psi}}=\boldsymbol{\Phi}_l+\boldsymbol{\Psi}:\boldsymbol{\Psi}\in  A_{r_l}(R_l),\|\widetilde{\boldsymbol{\Psi}}(\rvx)-\widetilde{\boldsymbol{\Psi}}(\rvx')\|_2\leq \rho_{l}\|\rvx-\rvx'\|_2\right\}.$$
Then, the conclusion in Theorem \ref{thm-main3} remains true when the activation function in layer $l$ belongs to $\widetilde{\mathcal{F}}_l$. Suppose $\boldsymbol{\Phi}_l$ is an activation obtained from pre-training, and we perform a fine-tuning to find the activation functions for some new downstream tasks. Then, we require the update after the fine-tuning process to lie on a low-rank space $A_{r_l}(R_l)$. This kind of strategy has been shown to be effective for fine-tuning large language models; see, e.g., \citep{hu2021lora}.
}    
\end{remark}
To conclude this section, we present the following results that are parallel to Theorem \ref{thm-main2} and Corollary \ref{cor1}. Let
\begin{equation}\label{eq-xi}
\xi_0=\sum^{L}_{i=1} d_i r_{i}\left\{\left(\frac{nC''}{\tau}\right)^{2/s'}\tilde{b}\prod^{L}_{j=i+1}\rho_j\right\}^{(d_{i-1}/\nu)\vee 1},    
\end{equation}
where $\tilde{b}=\sum^{L}_{i=1}b_i$ with $b_i=\widetilde{C}R_i\sqrt{r_i n}$.

\begin{theorem}\label{thm-main4}
Under Assumptions \ref{as4} and \ref{as5}, we have with probability greater than $1-\epsilon-\tau-\eta$,
\begin{align*}
R(f)\leq& \frac{1}{n}\sum^{n}_{i=1}\mathcal{L}(f(\rvx_i),y_i) + \frac{6\widetilde{C}'(\xi_0)^{\nu/\tilde{d}}}{n^{(\nu/\tilde{d}+1)/2}(\tilde{d}/\nu-1)^{\nu/\tilde{d}}}
\\&+\frac{2\sqrt{\log(2/\epsilon)}}{n^{(s-1)/(2s)}}+\frac{32\log(2/\epsilon)}{3n^{(2s-1)/(2s)}} + \frac{2C'}{\eta n^{(s-1)/(2s)}}           
\end{align*}
for any $f\in \mathcal{M}$ and $\epsilon,\tau,\eta,\widetilde{C}'>0$.
\end{theorem}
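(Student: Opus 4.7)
The plan is to follow the proof of Theorem \ref{thm-main2} but swap in the low-rank RKHS covering bound from Proposition \ref{prop-rate2} (equivalently, the internal machinery of Theorem \ref{thm-main3}) in place of the basis-function covering bound from Theorem \ref{thm-cover}. In short, Theorem \ref{thm-main4} relates to Theorem \ref{thm-main3} exactly as Theorem \ref{thm-main2} relates to Theorem \ref{thm-main}: I would relax boundedness of $\mathcal{L}$ to the $s$-moment condition on $G$ by a truncation argument, and relax boundedness of the Lipschitz constants $B(\cdot)$ to the $s'$-moment condition by Markov's inequality.

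First I would introduce the truncated loss $\mathcal{L}^T(u,v)=\min(\mathcal{L}(u,v),T)$ for a free parameter $T>0$; this loss lies in $[0,T]$ and inherits the Lipschitz constant $B(v)$ in $u$ from $\mathcal{L}$. Writing $R^T(f)=\mathbb{E}[\mathcal{L}^T(f(\rvx),y)]$ and noting the pointwise inequality $\mathcal{L}^T\le\mathcal{L}$, I would apply Theorem \ref{thm-main3} with $M$ replaced by $T$ to obtain, with probability at least $1-\epsilon$ and uniformly in $f\in\mathcal{M}$,
\begin{equation*}
R^T(f)\leq\frac{1}{n}\sum_{i=1}^n\mathcal{L}(f(\rvx_i),y_i)+\frac{6\widetilde{C}'\xi^{\nu/\tilde{d}}}{n^{(\nu/\tilde{d}+1)/2}(\tilde{d}/\nu-1)^{\nu/\tilde{d}}}+\sqrt{\frac{4T^2\log(2/\epsilon)}{n}}+\frac{32T\log(2/\epsilon)}{3n},
\end{equation*}
where $\xi$ is the Theorem \ref{thm-main3} complexity evaluated at the sample. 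Concretely, this step reuses the chaining/Dudley bound driven by the covering estimate in Proposition \ref{prop-rate2}, together with the Lipschitz composition $\mathcal{L}^T\circ f$ whose envelope is now $T$ rather than $M$.

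Next I would handle the random quantity $\max_i B(y_i)$ hidden inside $\xi$. By a union bound with Markov's inequality applied to the $s'$-moment assumption $\mathbb{E}[B^{s'}(y)]\le C''$, we have $\max_i B(y_i)\le(nC''/\tau)^{1/s'}$ with probability at least $1-\tau$; substituting this bound into each summand of $\xi$, while respecting the exponents $(d_{i-1}/\nu)\vee 1$, produces exactly $\xi_0$ in (\ref{eq-xi}). For the truncation bias, the inequality $R(f)-R^T(f)\le\mathbb{E}[G\,\mathbbm{1}\{G>T\}]\le C'/T^{s-1}$ follows from $\mathbb{E}[G^s]\le C'$, and pushing this through a Markov step (as in the proof of Theorem \ref{thm-main2}) yields the high-probability form $2C'/(\eta T^{s-1})$ with probability at least $1-\eta$. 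Finally, choosing $T=n^{1/(2s)}$ makes $\sqrt{T^2\log(2/\epsilon)/n}$, $T\log(2/\epsilon)/n$ and $C'/T^{s-1}$ collapse to the second, third and fourth tail terms of the stated bound, and a union bound over the three events of probabilities $\epsilon,\tau,\eta$ gives the announced total probability $1-\epsilon-\tau-\eta$.

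The main obstacle is the bookkeeping in the Markov step for $\max_i B(y_i)$: the exponent $(d_{i-1}/\nu)\vee 1$ in the definition of $\xi$ varies across layers, so the substitution $\max_i B(y_i)\mapsto(nC''/\tau)^{1/s'}$ must be propagated carefully through each summand to land on the closed form (\ref{eq-xi}). The other steps are essentially mechanical once Theorems \ref{thm-main2} and \ref{thm-main3} are in hand, but this per-layer accounting is where the low-rank case genuinely differs from the basis-function case, and it is the place where any unstated constants implicit in $\widetilde{C}'$ actually get absorbed.
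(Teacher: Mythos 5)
Your proposal is correct and follows essentially the same route as the paper, which likewise defines the truncated class $\mathcal{M}_{\mathcal{L},M}(\rmX)$, reruns the covering/Rademacher argument of Theorem \ref{thm-main3} on it, and then disposes of the truncation bias and the random $\max_i B(y_i)$ exactly as in the proof of Theorem \ref{thm-main2} with $M=n^{1/(2s)}$. The per-layer substitution of $(nC''/\tau)^{1/s'}$ for $\max_i B(y_i)$ inside $\xi$ to obtain $\xi_0$ is precisely the bookkeeping the paper leaves implicit.
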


\begin{corollary}\label{cor2}
Let $\widehat{f}\in \mathcal{M}$ satisfy that $\sum^{n}_{i=1}\mathcal{L}(\widehat{f}(\rvx_i),y_i)\leq \sum^{n}_{i=1}\mathcal{L}(f^*(\rvx_i),y_i)$. Suppose Assumptions \ref{as1},\ref{as2} and \ref{as4} hold, where $s\geq 2$ in Assumption \ref{as4}. We have with probability greater than $1-\epsilon-\tau-2\eta$,
\begin{align*}
R(\widehat{f}) - R(f^*) \leq &  \frac{6\widetilde{C}'(\xi_0)^{\nu/\tilde{d}}}{n^{(\nu/\tilde{d}+1)/2}(\tilde{d}/\nu-1)^{\nu/\tilde{d}}}
+(1+\eta^{-1/2})\sqrt{\frac{2(C')^{2/s}\log(2/\epsilon)}{n}}
\\&+\frac{32\log(2/\epsilon)}{3n^{(2s-1)/(2s)}} + \frac{2C'}{\eta n^{(s-1)/(2s)}}, 
\end{align*}
for $\epsilon,\tau,\eta>0$, where $\xi_0$ is defined in (\ref{eq-xi}). 
\end{corollary}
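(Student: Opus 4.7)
The plan is to follow the same strategy as the proof of Corollary~\ref{cor1}, with Theorem~\ref{thm-main4} playing the role that Theorem~\ref{thm-main2} plays there. Writing $\widehat R(g):=n^{-1}\sum_{i=1}^n \mathcal L(g(\rvx_i),y_i)$, I would begin from the standard excess-risk decomposition
\begin{align*}
R(\widehat f)-R(f^*)
=\bigl[R(\widehat f)-\widehat R(\widehat f)\bigr]
+\bigl[\widehat R(\widehat f)-\widehat R(f^*)\bigr]
+\bigl[\widehat R(f^*)-R(f^*)\bigr],
\end{align*}
and observe that the middle bracket is non-positive by the empirical-optimality hypothesis on $\widehat f$, so only the first and third brackets need separate control.

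The first bracket is handled by directly invoking Theorem~\ref{thm-main4} at $f=\widehat f$. Uniformity of that bound over $\mathcal M$ is what allows substitution of the data-dependent $\widehat f$, producing on an event of probability at least $1-\epsilon-\tau-\eta$ the leading low-rank covering term $6\widetilde C'(\xi_0)^{\nu/\tilde d}/[n^{(\nu/\tilde d+1)/2}(\tilde d/\nu-1)^{\nu/\tilde d}]$ together with the $32\log(2/\epsilon)/(3n^{(2s-1)/(2s)})$ and $2C'/(\eta n^{(s-1)/(2s)})$ tail terms that appear in the stated conclusion.

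The third bracket is concentration of an i.i.d.\ mean around its expectation for the fixed function $f^*$. The hypothesis $s\ge 2$ is crucial here: Jensen's inequality yields $\Var[\mathcal L(f^*(\rvx),y)]\le(\mathbb E[|\mathcal L(f^*(\rvx),y)|^s])^{2/s}\le(C')^{2/s}$, which is the ``variance proxy'' inside the square root of the target bound. I would then combine (i) a Chebyshev-type bound that uses this variance bound to produce the $\eta^{-1/2}\sqrt{(C')^{2/s}/n}$ contribution with probability at least $1-\eta$, and (ii) a Bernstein-type inequality applied to a carefully truncated version of $\mathcal L(f^*(\rvx_i),y_i)$ to produce the $\sqrt{2(C')^{2/s}\log(2/\epsilon)/n}$ Gaussian deviation with probability at least $1-\epsilon$, choosing the truncation threshold so that the Bernstein ``$T\log/n$'' residual is dominated by terms already present. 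Union-bounding with the event from Theorem~\ref{thm-main4} yields the total probability $1-\epsilon-\tau-2\eta$.

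The main technical obstacle, exactly as in Corollary~\ref{cor1}, is the concentration step for the fixed function $f^*$: one must pick the truncation threshold so as to trade the Bernstein tail and the truncation-bias contribution against each other and produce the clean final form $(1+\eta^{-1/2})\sqrt{2(C')^{2/s}\log(2/\epsilon)/n}$ rather than a spurious polynomial penalty of order $n^{-(s-1)/(2s)}$. Everything after that is routine bookkeeping that parallels the proof of Corollary~\ref{cor1} line-by-line, with no KAN-specific modification beyond the substitution of the low-rank Theorem~\ref{thm-main4} bound for the basis-function Theorem~\ref{thm-main2} bound.
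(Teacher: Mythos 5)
Your overall skeleton (risk decomposition, non-positivity of the middle empirical term, Chebyshev for the fixed $f^*$, union bound giving $1-\epsilon-\tau-2\eta$) matches the paper's route through Corollary \ref{cor1}, but there is a concrete gap in where you deploy the variance bound $\Var[\mathcal{L}(f(\rvx),y)]\leq (C')^{2/s}$. You propose to handle the first bracket by ``directly invoking Theorem \ref{thm-main4} at $f=\widehat f$,'' listing only the covering term and the two tail terms as its output. But Theorem \ref{thm-main4} also contains the term $2\sqrt{\log(2/\epsilon)}/n^{(s-1)/(2s)}$, which comes from applying Lemma \ref{lem-bound} with $r=2M^2$ and $M=n^{1/(2s)}$; this term is of order $n^{-(s-1)/(2s)}$, strictly larger than $\sqrt{\log(2/\epsilon)/n}$ for finite $s$, and it does not appear in (and is not dominated by) the bound of Corollary \ref{cor2}. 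Invoking the theorem verbatim therefore does not prove the stated inequality. The fix — and what the paper actually does via Remark \ref{rm2} — is to rerun Lemma \ref{lem-bound} \emph{inside} the proof of Theorem \ref{thm-main4} with the improved variance proxy $r=(C')^{2/s}$ (valid by Jensen when $s\geq 2$, since the truncated losses are dominated by $G$), which replaces $2\sqrt{\log(2/\epsilon)}/n^{(s-1)/(2s)}$ by $\sqrt{2(C')^{2/s}\log(2/\epsilon)/n}$, the ``$1$'' part of the $(1+\eta^{-1/2})$ factor. You correctly derive the Jensen bound but attach it to the wrong bracket: it belongs to the uniform (Talagrand-type) deviation over $\mathcal{M}$, not to the pointwise concentration of $\widehat R(f^*)$.

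Relatedly, your plan for the third bracket is both heavier than needed and would not quite deliver the stated constants: a Bernstein inequality on a truncated $\mathcal{L}(f^*(\rvx_i),y_i)$ consumes an additional failure probability (pushing the total past $\epsilon+\tau+2\eta$ unless you rebudget) and reintroduces a truncation bias of order $C'/T^{s-1}$, i.e.\ exactly the polynomial penalty you are trying to avoid, since Assumption \ref{as4} only provides an $s$-th moment. The paper simply applies Chebyshev to $\widehat R(f^*)-R(f^*)$ with variance $(C')^{2/s}$, obtaining $\eta^{-1/2}\sqrt{(C')^{2/s}/n}$ with probability $1-\eta$, and absorbs this (using $\log(2/\epsilon)\geq 1/2$) into the $\eta^{-1/2}$ part of $(1+\eta^{-1/2})\sqrt{2(C')^{2/s}\log(2/\epsilon)/n}$. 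With these two corrections your argument coincides with the paper's.
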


\begin{remark}
{\rm 
Combining the arguments from Theorems \ref{thm-main2} and \ref{thm-main4}, we can derive a generalization bound for KANs in the more general case. This applies when the activation functions of certain layers belong to the class in Assumption \ref{as2}, while the activation functions of other layers have a low-rank structure, e.g., as specified in Remark \ref{rm1}.
}    
\end{remark}


\section{Numerical Studies}\label{sec-num}
We used both the simulated and real datasets to demonstrate the relationship between the excess loss (defined as the difference between the test loss and training loss) and the complexity of KANs. We constructed the KANs such that the output of each layer at $\mathbf{0}$ is $\mathbf{0}$, i.e., $\boldsymbol{\Psi}_i(\mathbf{0})=\mathbf{0}$. Therefore, we used $(\prod_{j=1}^L\rho_j)^{2/3}\sum^{L}_{i=1}(B_ic_i)^{2/3},$ 
which was proportional to $\tilde{\alpha}$, as the measure of complexity, where the Lipschitz constants \(\rho_j\)s were estimated by their upper bounds provided in Remark \ref{rem-lip}.

For the simulation, we consider two examples in \cite{liu2024kan}. Specifically, we generated $x_i$ from $\text{Unif}(-1,1)$ independently, and let
\begin{align*}
	&f_1(x_1,x_2,x_3,x_4)=\exp\left(\frac{1}{2}\left\{\sin(\pi(x_1^2+x_2^2))+\sin(\pi(x_3^2+x_4^2))\right\}\right),\\
	&f_2(x_1,\dots,x_{100})=\exp\left\{\frac{1}{100}\sum_{i=1}^{100}\sin^2\left(\frac{\pi x_i}{2}\right)\right\},
\end{align*}
for the low-dimensional and high-dimensional cases, respectively. We considered the following four setups:
\begin{align*}
    &\text{(i)}\; y=f_1(x_1,x_2,x_3,x_4)\times\exp(\epsilon),\quad \epsilon\sim N(-\log(1.04)/2,\log(1.04));\\
    &\text{(ii)}\; y=f_2(x_1,\dots, x_{100})\times\exp(\epsilon),\quad \epsilon\sim N(-\log(1.04)/2,\log(1.04));\\
    &\text{(iii)}\; P(y=1)=\frac{1}{1 + f_1(x_1,x_2,x_3,x_4)}, \quad P(y=0)=1-P(y=1);\\
    &\text{(iv)}\; P(y=1)=\frac{1}{1 + f_2(x_1,\dots,x_{100})}, \quad P(y=0)=1-P(y=1),
\end{align*}
where we have chosen the distribution of $\epsilon$ such that the mean and standard deviation of $\exp(\epsilon)$ are equal to 1 and 0.2, respectively.  We set the sample sizes of both the training set and test set to be 10,000 for all four datasets. The shape of KAN used for (i) and (iii) was $[4, 50, 100, 50, 1]$, and was $[100, 50, 100, 50, 1]$ for (ii) and (iv). We refer the readers to Section 2.2 of \cite{liu2024kan} for the definition of the shape of a KAN that is represented by an integer array.

We also investigated the MNIST and CIFAR-10 datasets. We used the features extracted from a pre-trained AlexNet model as input for the KANs. The extracted features were 1000-dimensional for both datasets. We employed the KAN with the shape $[1000, 50, 100, 50, 10]$.

We run 1000 epochs for each dataset. The results are shown in Figure \ref{fig:result}, where we normalize the values of the complexity measures so that the maximum value of the complexity measure is equal to the last value of the excess loss (see Section \ref{sec-add-num} for more details). The plots in Figure \ref{fig:result} illustrate that our proposed measure of complexity for the KAN networks,  $\tilde{\alpha}$, tightly correlates with the excess loss in all the cases. It is surprising to observe that the complexity measure curve closely follows the shape of the excess loss. These results highlight the practical relevance of the generalization bounds derived in Section \ref{sec:analysis}. We refer the readers to the appendix for additional numerical results and discussions.

\begin{figure}[h]
    \centering
    \includegraphics[width=0.8\linewidth]{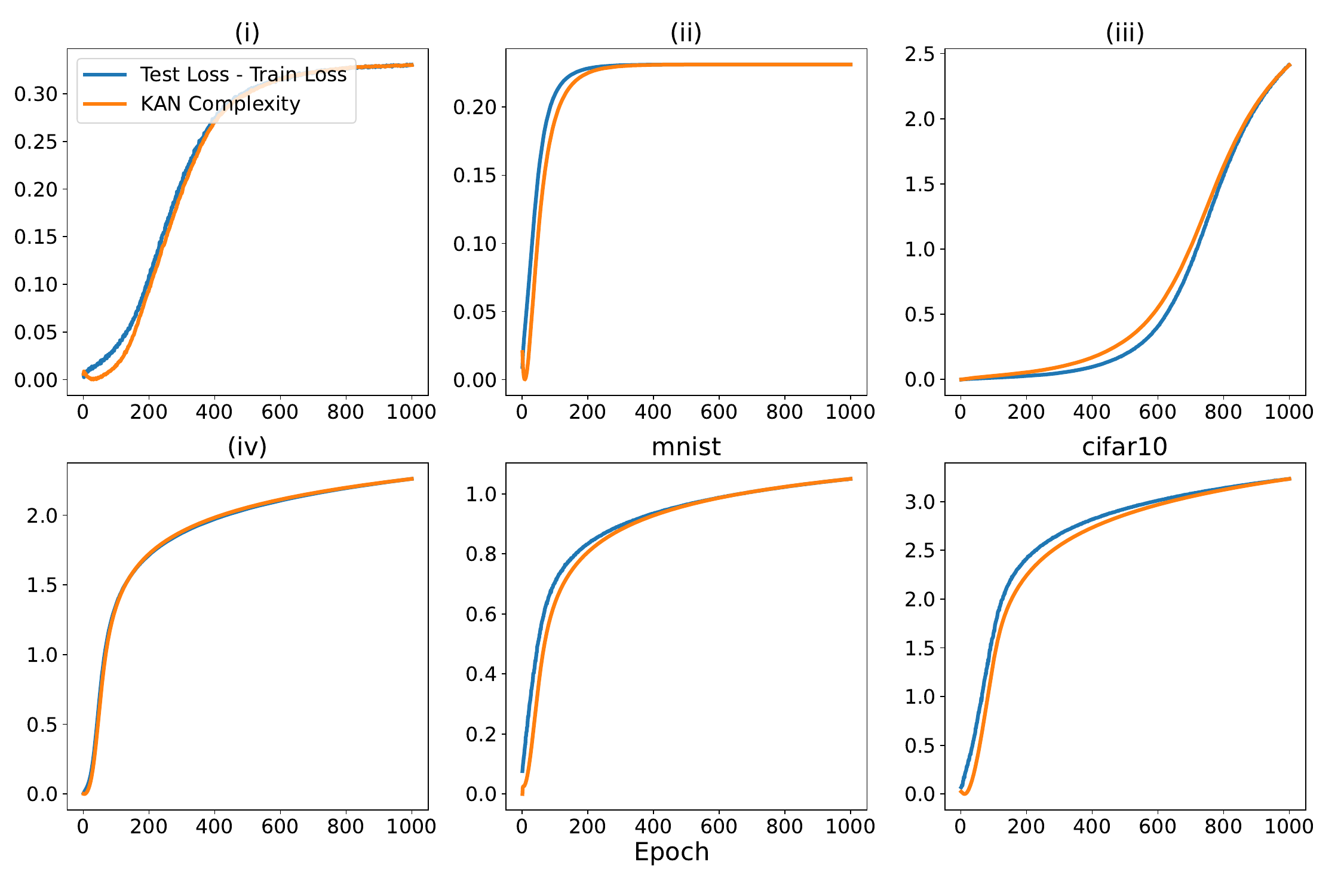}
    \caption{The excess loss and (normalized) complexity of KANs trained with SGD on the four simulated datasets (i--iv) and the MNIST and CIFAR10 datasets. The loss for (iii), (iv),  MNIST, and CIFAR10 is the cross-entropy loss, and that for (i) and (ii) is the mean squared error.
    }
    \label{fig:result}
\end{figure}

\section{Discussions}\label{sec-discuss}
We derive generalization bounds for KANs when the activation functions are either represented by linear combinations of basis functions or lying in a low-rank RKHS. These bounds are empirically investigated for the KAN networks trained with SGD on simulated and real data sets. The numerical results indicate a strong correlation between the excess loss and the complexity measure, demonstrating the practical relevance of the generalization bounds. 

To conclude, we mention a few future research directions. (i) It would be interesting to derive a lower bound and see how well it matches the upper bound to understand the tightness of the derived bounds. Also, additional work is needed to determine which norms are well-adapted to KANs as used in practice; (ii) A more careful analysis is needed to show that SGD applied to KAN results in a well-behaved predictor and leads to a refined generalization bound; (iii) Theorem \ref{thm-main2} may be improved with a relaxed restriction on the Lipschitz constants of the basis functions $\{g_{ij}^{(l)}\}$; (iv) It was observed that explicit regularization contributes little to the generalization performance of neural networks \citep{zhang2021understanding}. It is of interest to see if a similar conclusion holds for KANs and whether other types of regularization help to improve the generalization performance. Our theoretical and empirical results suggest a close connection between the derived complexity measure and the generalization error. It is of interest to develop (implicit) regularization techniques to control this complexity measure to achieve improved generalization performance.

\bibliographystyle{iclr2025_conference}

\newpage

\appendix

\section{Appendix}

\subsection{Comparison with MLPs}\label{sec-add-discuss}
We compare the derived bounds with those for MLPs, specifically focusing on the bounds discussed in \cite{bartlett2017spectrally}, which address the classification problem with ramp loss. The scenario described in that work closely aligns with the setting of Theorem \ref{thm-main}, particularly with \( C=0 \) and \( \max_i B(y_i) \) being finite. Ignoring smaller order terms, the bound for $R(f)-n^{1/2}\sum^{n}_{i=1}\mathcal{L}(f(\rvx_i),y_i)$
in Theorem \ref{thm-main} can be expressed as follows
\[O\left(
\frac{\|\mathbf{X}\|R_{\text{KAN}}}{n} \log(n)\log(2\tilde{d}\tilde{p}) + \sqrt{\frac{\log(1/\epsilon)}{n}}\right),
\]
where 
$$R_{\text{KAN}}=\left(\prod^{L}_{j=1}\rho_j\right)\left(\sum^{L}_{i=1}(B_ic_i)^{2/3}\right)^{3/2}$$
measures the KAN complexity. The corresponding bound for MLPs, derived in Theorem 1.1 of \cite{bartlett2017spectrally}, is given by
\begin{align*}
O\left(\frac{\|\mathbf{X}\|R_{\mathcal{A}}}{\gamma n}\log(n)\log(W)+\sqrt{\frac{\log(1/\epsilon)}{n}}\right),    
\end{align*}
where \( R_{\mathcal{A}} \) measures model complexity (similar to \( R_{\text{KAN}} \) as defined above), \( \gamma \) represents the margin in the ramp loss, and \( W \) denotes the network width (analogous to \( \tilde{d} \) in our case).

When \( R \) and \( R_{\mathcal{A}} \) are of the same order (denoted by \( R \asymp R_{\mathcal{A}} \)), both bounds are also of the same order, up to some logarithmic factors. This is expected since \( R \asymp R_{\mathcal{A}} \) indicates that both classes exhibit similar levels of complexity. Conversely, if \( R = o(R_{\mathcal{A}}) \) and \( R_{\mathcal{A}}/\sqrt{n} \to +\infty \), then the bound for KANs is of a smaller order than that for MLPs. This scenario arises when the true underlying regression function can be more accurately approximated using KANs. For instance, when the true regression function can be exactly represented by a KAN network, the complexity required for approximating such a function using MLPs is expected to be higher. As discussed in \cite{liu2024kan}, KANs may require fewer parameters than MLPs to approximate the same underlying function, as KANs utilize the inherently low-dimensional representation of the true function, unlike MLPs. We refer the readers to \cite{liu2024kan} for examples where KANs can outperform MLPs with the same model complexity, e.g., Figures 3.1-3.3 therein.

\subsection{Improved Bounds Under Sub-exponential tails}\label{sec-add-improve}
We derive an improved high probability bound for the generalization errors under the condition that $G(\mathbf{x},y)$ and $B(y)$ both have sub-exponential tails. We impose the following assumption.

\begin{assumption}\label{as4-2}
Given $v\in\mathcal{Y}$,
$\mathcal{L}(\cdot,v)$ is 
Lipschitz in the sense that
$|\mathcal{L}(u,v)-\mathcal{L}(u',v)|\leq B(v)\|u-u'\|_2$ for any $u,u'\in\mathcal{Y}$ and $B(\cdot):\mathcal{Y}\rightarrow \mathbb{R}_{>0}$. Further suppose $\sup_{f\in\mathcal{M}}|\mathcal{L}(f(\cdot),\cdot)|\leq G(\cdot,\cdot)$. 
Both $G(\rvx,y)$ and $B(y)$ have sub-exponential tails, i.e., $\mathbb{E}[\exp(\lambda_G G(\rvx,y))]\leq \exp(\lambda^2_G \sigma_G^2/2)$ and $\mathbb{E}[\exp(\lambda_B B(y))]\leq \exp(\lambda^2_B \sigma_B^2/2)$ for $\lambda_G,\lambda_B$ in a neighborhood of 0. 
\end{assumption}

\begin{theorem}\label{thm-main2-improve}
Under Assumptions \ref{as1},\ref{as2} and \ref{as4-2}, we have with probability greater than $1-\epsilon-\tau-\eta$,
\begin{align*}
R(f)\leq & \frac{1}{n}\sum^{n}_{i=1}\mathcal{L}(f(\mathbf{x}_i),y_i) + \frac{144\sqrt{\zeta_0}\{\log(C' n\log(n)/\sqrt{\zeta_0})\vee 1\}}{n}
\\&+\frac{C'\log(n)\sqrt{\log(2/\epsilon)}}{\sqrt{n}}+\frac{C'\log(n)\log(2/\epsilon)}{3n} + C'\sqrt{\frac{\log(1/\eta)}{n}},     
\end{align*}
for $\epsilon,\tau,\eta>0$ and any $f\in \mathcal{M}$, where $\zeta_0=\tilde{\alpha}^3\log(2\tilde{d}\tilde{p})(C'\log(nC'/\tau))^2$ and $C'$ is a large enough constant.  
\end{theorem}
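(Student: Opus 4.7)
The plan is to mirror the proof of Theorem \ref{thm-main2} but exploit the sub-exponential tails of $G(\rvx,y)$ and $B(y)$ to replace Markov-type polynomial moment controls with exponential concentration, which will trade the $n^{1/s}, n^{1/s'}$ factors for $\log n$ factors. The starting point is that Theorem \ref{thm-main} applies to any bounded loss, so I would use it after truncation at a threshold $T$ chosen to be logarithmic in $n$ rather than polynomial.

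The first step is to consider the truncated loss $\mathcal{L}_T(u,v) := \mathcal{L}(u,v)\wedge T$, which is still Lipschitz with the same constant $B(v)$ and satisfies $\mathcal{L}_T \in [0,T]$. Define the event $\cE_\tau = \{\max_{1\le i\le n} B(y_i) \le t_\tau\}$. By the sub-exponential Chernoff bound applied to $B(y_i)$ and a union bound, $\mathbb{P}(\cE_\tau) \ge 1 - \tau$ for a suitable $t_\tau = C'\log(nC'/\tau)$ (absorbing $\sigma_B$ into $C'$). On $\cE_\tau$, Theorem \ref{thm-main} applied to $\mathcal{L}_T$ yields, with probability at least $1-\epsilon$,
\begin{align*}
R_T(f) \le \frac{1}{n}\sum_{i=1}^n \mathcal{L}_T(f(\rvx_i),y_i) + \frac{144\sqrt{\zeta_T}\{\log(nT/(3\sqrt{\zeta_T}))\vee 1\}}{n} + \sqrt{\frac{4T^2\log(2/\epsilon)}{n}} + \frac{32T\log(2/\epsilon)}{3n},
\end{align*}
where $\zeta_T = \tilde\alpha^3\log(2\tilde d\tilde p)\, t_\tau^2$ and $R_T(f) = \mathbb{E}[\mathcal{L}_T(f(\rvx),y)]$. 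Setting $T = C'\log n$ gives $\zeta_T = \zeta_0$ as defined in the theorem (up to absorbing constants).

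The second step is to control the truncation gap. I would write $R(f) - R_T(f) = \mathbb{E}[(\mathcal{L}(f(\rvx),y)-T)_+]$ and use $(\mathcal{L}-T)_+ \le G\,\ind\{G>T\}$. By Cauchy--Schwarz and the sub-exponential moment bound $\mathbb{E}[G^2] \le C'$, together with $\mathbb{P}(G>T)\le \exp(-T/C')$, I obtain $R(f) - R_T(f) \lesssim \exp(-T/(2C')) \le n^{-1/2}$ for $T = C'\log n$. The sample-side gap $n^{-1}\sum_i \mathcal{L}_T(f(\rvx_i),y_i) \le n^{-1}\sum_i \mathcal{L}(f(\rvx_i),y_i)$ is automatic since truncation only decreases the loss. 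To handle the fact that $R_T$ on the right of the above display involves an unbounded-tail expectation in disguise, I would also invoke Bernstein's inequality for the sub-exponential random variable $\mathcal{L}(f(\rvx),y)$ (for a single fixed $f$, or here for the population mean), contributing the $C'\sqrt{\log(1/\eta)/n}$ term, valid with probability at least $1-\eta$.

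Combining the three events (bounded $\max B(y_i)$, the covering-number Rademacher bound applied to $\mathcal{L}_T$, and the tail control on $R-R_T$) via a union bound gives the stated bound with failure probability $\epsilon + \tau + \eta$, after absorbing constants into $C'$ and using $T\asymp\log n$ to produce the $\log(n)\sqrt{\log(2/\epsilon)/n}$ and $\log(n)\log(2/\epsilon)/n$ terms. The main obstacle will be the bookkeeping: one needs $t_\tau$ to sit inside $\zeta_0$ (hence inside the covering-number argument that drives $\sqrt{\zeta_0}/n$), while the truncation level $T$ must simultaneously match the $\log n$ scaling so that the $MT/n$ and $M\sqrt{\log(2/\epsilon)/n}$ contributions from Theorem \ref{thm-main} do not exceed the claimed rate. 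A subtle point is that $B(y_i)$ appearing in $\zeta_0$ must be replaced by its sub-exponential high-probability envelope uniformly over $i$, and one must verify the argument in Proposition \ref{prop-cover} (via Maurey) is unaffected by replacing a bounded quantity with one that is only bounded on $\cE_\tau$, which is immediate by conditioning on $\cE_\tau$ throughout the covering step.
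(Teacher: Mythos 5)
Your proposal is correct and follows essentially the same route as the paper: truncate the loss at a level $T\asymp\log n$, use the sub-exponential tail of $B(y)$ with a union bound to replace $\max_i B(y_i)$ by $C'\log(nC'/\tau)$ inside the covering-number bound, control the population truncation gap by $\mathbb{E}[G\ind\{G>T\}]\le C'e^{-cT}$, and use Bernstein's inequality to produce the $C'\sqrt{\log(1/\eta)/n}$ term. The only (harmless) divergence is that you observe the empirical-side truncation gap is free by monotonicity, whereas the paper bounds $n^{-1}\sum_i G(\rvx_i,y_i)\ind\{G(\rvx_i,y_i)>T\}$ explicitly via Bernstein; both yield the stated terms.
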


Note that the dependency $1/\eta$ in the original bound has been improved to $\sqrt{\log(1/\eta)}$ by using Bernstein's inequality. Additionally, the term \( n^{(1-s)/(2s)} \) has been refined to \( n^{-1/2} \) due to the exponential tails. In a similar fashion, we have the following result, which strengthens Corollary \ref{cor1}.

\begin{corollary}\label{cor1-improve}
Let $\widehat{f}\in \mathcal{M}$ satisfy that $\sum^{n}_{i=1}\mathcal{L}(\widehat{f}(\rvx_i),y_i)\leq \sum^{n}_{i=1}\mathcal{L}(f^*(\rvx_i),y_i)$. Suppose Assumptions \ref{as1},\ref{as2} and \ref{as4-2} hold. We have with probability greater than $1-\epsilon-\tau-2\eta$,
\begin{align*}
R(\widehat{f}) - R(f^*)\leq & \frac{144\sqrt{\zeta_0}\{\log(C' n\log(n)/\sqrt{\zeta_0})\vee 1\}}{n}
+\frac{C'\log(n)\sqrt{\log(2/\epsilon)}}{\sqrt{n}}
\\&+\frac{C'\log(n)\log(2/\epsilon)}{3n} + C'\sqrt{\frac{\log(1/\eta)}{n}},     
\end{align*}
for $\epsilon,\tau,\eta>0$, where $\zeta_0=\tilde{\alpha}^3\log(2\tilde{d}\tilde{p})(C'\log(nC'/\tau))^2$ and $C'$ is a large enough constant.  
\end{corollary}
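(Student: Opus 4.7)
The plan is to deduce the excess risk bound from Theorem \ref{thm-main2-improve} applied to $\widehat f$ together with a one-sided sub-exponential concentration inequality applied to the fixed comparator $f^*$. First I would invoke Theorem \ref{thm-main2-improve} with $f=\widehat f$, which yields that on an event $\mathcal{E}_1$ of probability at least $1-\epsilon-\tau-\eta$,
\[
R(\widehat f)-\frac{1}{n}\sum_{i=1}^{n}\mathcal{L}(\widehat f(\rvx_i),y_i)\;\le\; T_1,
\]
where $T_1$ denotes exactly the four-term right-hand side appearing in Theorem \ref{thm-main2-improve}. The hypothesis $\sum_i\mathcal{L}(\widehat f(\rvx_i),y_i)\le \sum_i\mathcal{L}(f^*(\rvx_i),y_i)$ then lets me replace the empirical risk of $\widehat f$ by that of $f^*$ on $\mathcal{E}_1$.

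Next, because $f^*=\argmin_{f\in\mathcal{M}}R(f)$ is deterministic, the random variables $Z_i:=\mathcal{L}(f^*(\rvx_i),y_i)$ are i.i.d.\ with mean $R(f^*)$. Under Assumption \ref{as4-2}, $|Z_i|\le G(\rvx_i,y_i)$ and $G$ has a sub-exponential tail with parameters $(\sigma_G,\cdot)$; in particular $Z_i-\mathbb{E} Z_i$ is also sub-exponential with parameters depending only on $\sigma_G$. A standard one-sided Bernstein inequality for sub-exponential sums then gives, on an event $\mathcal{E}_2$ of probability at least $1-\eta$,
\[
\frac{1}{n}\sum_{i=1}^{n}\mathcal{L}(f^*(\rvx_i),y_i)\;\le\; R(f^*)+C'\sqrt{\frac{\log(1/\eta)}{n}},
\]
for a constant $C'$ absorbing $\sigma_G$ and the sub-exponential constant. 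I would choose $n$ large enough so that the Gaussian-regime term in Bernstein dominates the linear-in-$1/n$ term, which is automatic when $\log(1/\eta)=O(n)$; otherwise the tail term is cheaper than the other pieces in $T_1$ and can be folded into $C'$.

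Finally, a union bound over $\mathcal{E}_1$ and $\mathcal{E}_2$ (total failure probability $\le \epsilon+\tau+2\eta$) combines the two displays to yield
\[
R(\widehat f)-R(f^*)\;\le\;T_1+C'\sqrt{\tfrac{\log(1/\eta)}{n}},
\]
and the four terms in $T_1$ together with this last $\sqrt{\log(1/\eta)/n}$ term reproduce exactly the stated bound (after absorbing the $\log n$-type factors and the constant from Theorem \ref{thm-main2-improve} into the same symbol $C'$). The main technical point—rather than a true obstacle—is checking that Bernstein's inequality in its sub-exponential form produces a deviation of order $\sqrt{\log(1/\eta)/n}$ uniformly in $\eta$ in the relevant range, which is where the improvement over the $1/\eta$ dependence of Corollary \ref{cor1} originates; this is exactly the same upgrade that Theorem \ref{thm-main2-improve} made relative to Theorem \ref{thm-main2}, so the two improvements compose cleanly.
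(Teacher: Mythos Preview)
Your proposal is correct and follows essentially the same route as the paper: invoke Theorem \ref{thm-main2-improve}, use the empirical-risk comparison to swap $\widehat f$ for $f^*$, then apply a sub-exponential Bernstein inequality to the i.i.d.\ losses $\mathcal{L}(f^*(\rvx_i),y_i)$ and union-bound. The paper's proof is even terser but identical in substance, with the extra $C'\sqrt{\log(1/\eta)/n}$ from Bernstein absorbed into the same term already present in $T_1$.
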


We remark that the bounds in Theorem \ref{thm-main4} and Corollary \ref{cor2} can also be enhanced under the condition of sub-exponential tails for \( G(\rvx,y) \) and \( B(y) \).

\subsection{A lower bound on the empirical Rademacher complexity}
We derive a lower bound on the empirical Rademacher complexity defined in (\ref{eq-rc}) for the network class specified in Assumption \ref{as2}. From the second layer, suppose the basis functions include the projection onto the first coordinate, i.e., there exists a linear combination of the basis such that $\sum_j \beta_{ij}^{(l)} g_{ij}^{(l)}(\mathbf{x})=x_1$ for $\mathbf{x}=(x_1,\dots,x_{d_{l-1}})$. For the first layer, we set $\psi_1^{(1)}(\mathbf{x})=\sum^{p_1}_{j=1}\beta_{1j}^{(1)}g_{1j}^{(1)}(\mathbf{x})$
and $\psi_i^{(1)}(\mathbf{x})=0$ for all $i\geq 2$. Let $\psi_1^{(l)}(\mathbf{x})=x_1$ and Let $\psi_i^{(l)}(\mathbf{x})=0$ for $i\geq 2$ and $1\leq l\leq L-1$. The activation function for the output layer is a real-valued function that maps the input to its first coordinate. Note that the condition $|\psi_1^{(1)}(\mathbf{x})-\psi_1^{(1)}(\mathbf{x}')|\leq \rho\|\mathbf{x}-\mathbf{x}'\|_2$ is implied by $\sum^{p_1}_{j=1}|\beta_{1j}^{(1)}|\leq \rho/\max_j c_{1j}^{(1)}$ as $|g_{1j}^{(1)}(\mathbf{x})-g_{1j}^{(1)}(\mathbf{x}')|\leq c_{1j}^{(1)}\|\mathbf{x}-\mathbf{x}'\|_2$. Therefore, 
$\boldsymbol{\Psi}_{L}\circ \cdots \circ \boldsymbol{\Psi}_1(\mathbf{x})=\sum^{p_1}_{j=1}\beta_{1j}^{(1)}g_{1j}^{(1)}(\mathbf{x})$ and the Rademacher complexity of the network class is lower bounded by that of the class
\begin{align*}
\mathfrak{G}_1=\left\{\psi(\mathbf{x})=\sum^{p_1}_{j=1}\beta_{1j}^{(1)}g_{1j}^{(1)}(\mathbf{x}): \left(\sum^{p_1}_{j=1}|\beta_{1j}^{(1)}|^2\right)^{1/2}\leq \widetilde{B}_1\right\}    
\end{align*}
for $\widetilde{B}_1=\min\{B_1,\rho/\max_j c_{1j}^{(1)}\}/\sqrt{p_1}$. Then we have 
\begin{align*}
\mathcal{R}(\mathfrak{G}_1(\mathbf{X}))=&\frac{1}{n}\mathbb{E}\left[\sup_{\psi\in\mathfrak{G}_1}\sum^{n}_{i=1}e_i\psi(\rvx_i)\Big| \rmX \right]
\\=&\frac{1}{n}\mathbb{E}\left[\sup_{\boldsymbol{\beta}:\|\boldsymbol{\beta}\|_2\leq \widetilde{B}_1}\sum^{n}_{i=1}e_i\sum^{p_1}_{j=1}\beta_{j}g_{1j}^{(1)}(\mathbf{x}_i)\Big| \rmX \right]
\\=&\frac{\widetilde{B}_1}{n}\mathbb{E}\left[\left\|\sum^{n}_{i=1}e_ig_{1j}^{(1)}(\mathbf{x}_i)\right\|_2\Big| \rmX \right]\geq \frac{C'\widetilde{B}_1\{\sum^{n}_{i=1}(g_{1j}^{(1)}(\mathbf{x}_i))^2\}^{1/2}}{n},
\end{align*}
for some constant $C'>0$, where we have used the Khintchine inequality to get the last inequality.

\subsection{Additional Numerical Results}\label{sec-add-num}

In our experiments, the activation functions in the implemented KAN are linear combinations of SiLU function and basis splines, as proposed by \cite{liu2024kan}. Specifically, the \(\psi_{i,k,j}(x_j)\) for \(i=1,\dots, L, k=1,\dots,d_i, j=1,\dots,d_{i-1}\) in (\ref{eq-matrix}) is 
\[\psi_{i,k, j}(x)=w^{(b)}_{i,k,j}b(x)+w^{(s)}_{i,k,j}\text{spline}(x),\]
where
\[b(x)=\text{silu}(x)=x/(1+e^{-x}),\quad\text{spline}(x)=\sum_i{c_i}B_i(x),\]
and \(B_i(x)\) are spline basis. See equations (2.10)-(2.12) of \cite{liu2024kan}. 

In Figure \ref{fig:result}, we display the curves of the KAN complexity as measured by \((\prod^{L}_{j=1}\rho_j)^{2/3}\sum^{L}_{i=1}(B_ic_i)^{2/3}\) and the excess loss on the same plot. Since the values of these two terms are on different scales, we normalize the complexity so that the maximum value of the complexity measure is equal to the last value of the excess loss. Specifically, let \(u=(u_1,\dots,u_N)\) be the values of the differences between the test losses and training losses, where \(N\) is the number of training epochs, and \(v=(v_1,\dots,v_N)\) be the model complexity corresponding to different training epoch. Let \(v_{\text{max}}=\max\{v_1,\dots,v_N\}\) and \(v_{\text{min}}=\min\{v_1,\dots,v_N\}\). The normalized complexity is \(v'=(v'_1,\dots,v'_N)\) with 
$$v'_i=\frac{(v_i-v_{\text{min}})u_N}{v_{\text{max}}-v_{\text{min}}}.$$

We present the curves of the KAN complexity, test loss, and training loss in Figure \ref{fig:result2}.
\begin{figure}[h]
    \centering
    \includegraphics[width=0.8\linewidth]{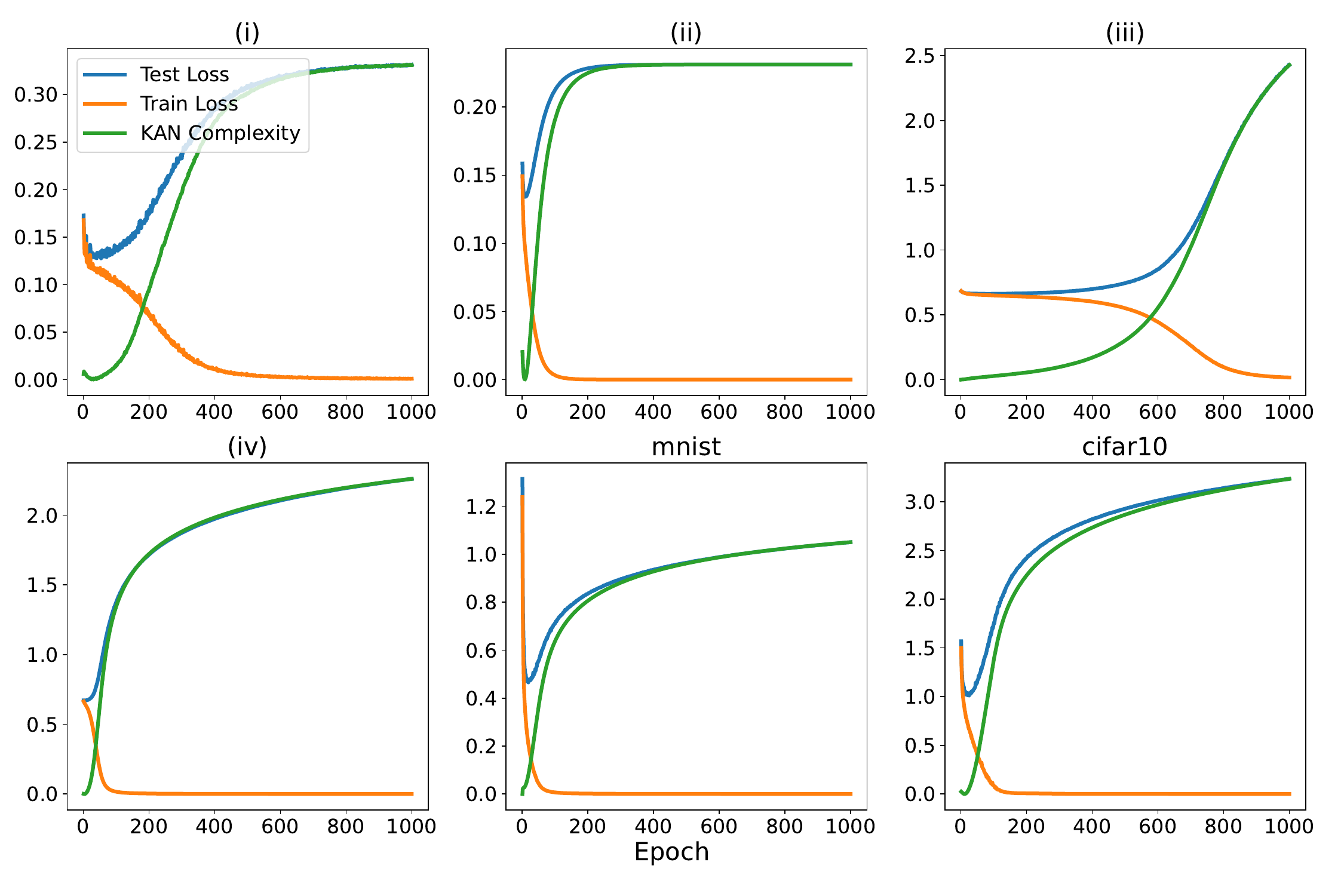}
    \caption{The training loss, test loss and complexity (normalized with respect to test loss) of KANs trained with SGD on the four simulated datasets (i--iv) and the MNIST and CIFAR10 datasets. The loss for (iii), (iv),  MNIST, and CIFAR10 is the cross-entropy loss, and that for (i) and (ii) is the mean squared error.
    }
    \label{fig:result2}
\end{figure}
The aim of our simulation study is to demonstrate the practical relevance of the generalization bound derived in Section \ref{sec:analysis}. Specifically, we hope to understand the relationship between model complexity, excess risk, and test loss. Our empirical results indicate a strong correlation between excess loss/test loss and model complexity. This finding motivates the need to develop regularization techniques, whether explicit or implicit (such as early stopping or penalization), to control this complexity measure, which can potentially enhance generalization performance.

We use the Setups (i) and (iii) for a simple demonstration, where the underlying regression and classification models are both based on the synthetic function \(f_1\) defined in Section \ref{sec-num}. We apply the dropout technique with a rate of 0.1 to the activation functions in KAN networks (referred to as regularized KAN) with the same shapes as we have used in Section \ref{sec-num} (i.e., [4, 50, 100, 50, 1] for both setups). The results are shown in the top two panels of Figure \ref{fig:result3}, where we also plot the results for non-regularized KANs, i.e., those results in Figure \ref{fig:result}, for comparison. The values of excess loss are depicted in their original scales, while the values of the KAN complexity are normalized with respect to their excess losses. We observe that with the dropout technique, the excess losses decrease significantly from 0.33 to 0.03 for Setup (i) and from 2.41 to 0.06 for Setup (iii). For both the regularized KAN and non-regularized KAN, we observe that the model complexity tightly correlates with the excess loss. As the two complexity curves are plotted with different normalization scales, they cannot be compared directly. Therefore, we plot the ratios of the complexities of the regularized KAN to the non-regularized KAN in the bottom two panels of Figure \ref{fig:result3}. We can see that the regularized KAN, which induces a lower excess loss than the non-regularized KAN, also has a lower complexity. 
\begin{figure}[h]
    \centering
    \includegraphics[width=0.8\linewidth]{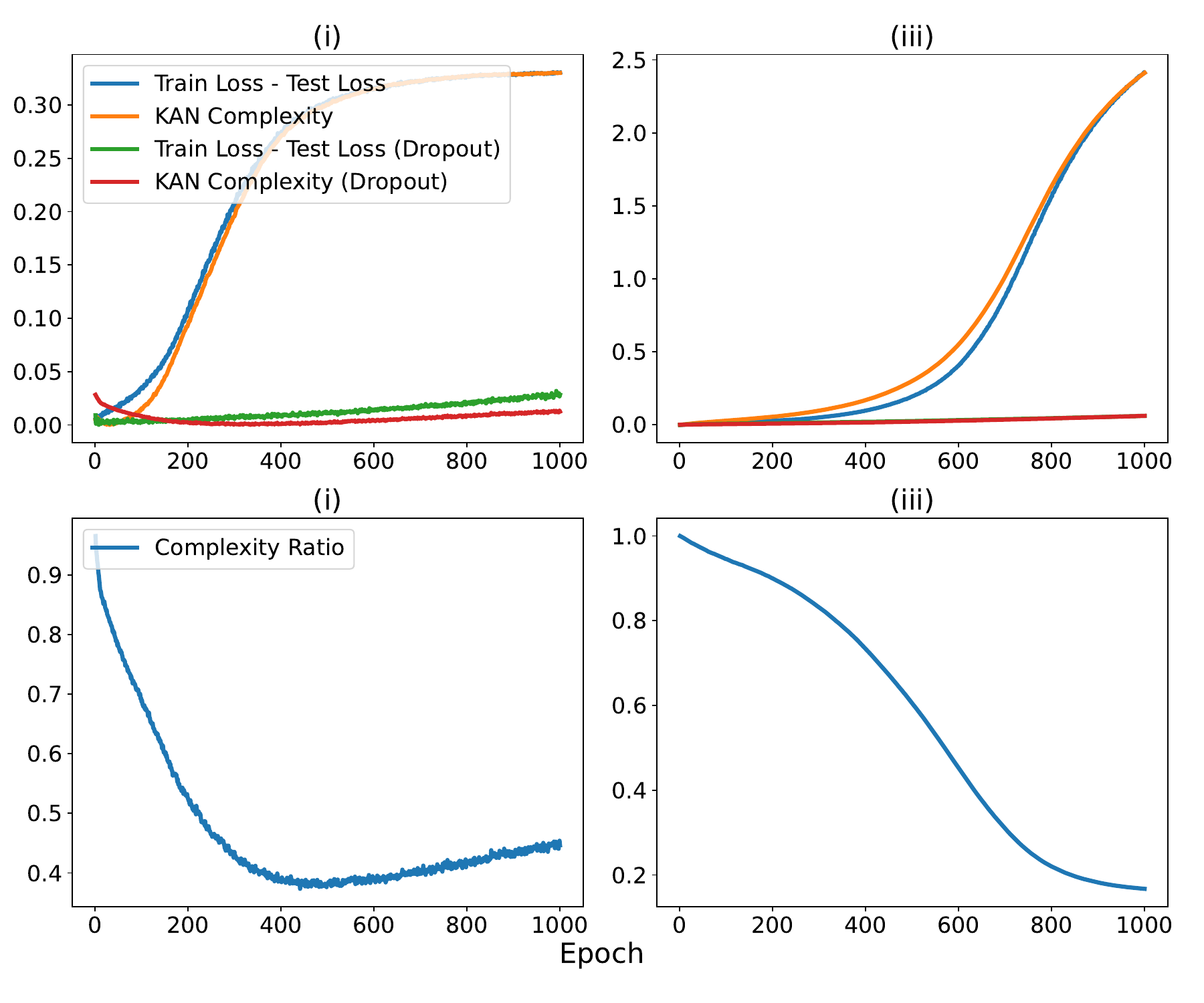}
    \caption{(Top) The excess loss and (normalized) complexity of KANs trained with SGD on the simulated datasets (i) and (iii). The loss for (i) is the mean squared error, and that for (iii) is the cross-entropy loss. The green and red curves in the top-right panel overlap. (Bottom) The ratio of the complexity of the regularized KAN to the non-regularized KAN.}
    \label{fig:result3}
\end{figure}

\subsection{Auxiliary lemmas}
The following lemma is useful in the proof of Proposition \ref{prop-cover}; see Lemma 1 of \cite{zhang2002covering} and Lemma A.6 of \cite{bartlett2017spectrally}. 

\begin{lemma}[Maurey's sparsification lemma]\label{lem-maurey}
For a Hilbert space $\mathcal{H}$ with the norm $\|\cdot\|$, let $U \in \mathcal{H}$ be given with the representation $U=\sum^{N}_{l=1}a_l V_l$, where $V_l\in\mathcal{H}$ and $a_l\geq 0$. Then, for any positive integer $k$, there exists a choice of nonnegative integers $(k_1,\dots,k_N)$ such that $\sum_{i=1}^{N} k_i = k$ and
\begin{align*}
\left\| U - \frac{\|\mathbf{a}\|_1}{k}\sum^{N}_{l=1}k_l V_l \right\|^2 \leq \frac{\|\mathbf{a}\|_1}{k}\sum^{N}_{l=1}a_l\|V_l\|^2\leq \frac{\|\mathbf{a}\|_1^2}{k}\max_i \|V_i\|^2, 
\end{align*}
where $\mathbf{a}=(a_1,\dots,a_N)$ and $\|\mathbf{a}\|_1=\sum^{N}_{l=1}a_l$.   
\end{lemma}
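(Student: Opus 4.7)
The plan is to prove Maurey's sparsification lemma by the probabilistic method, randomizing the selection of the $k$ terms according to the weights $a_l$ and then bounding the expected squared Hilbert-space error, after which the existence of a good realization follows immediately. Without loss of generality I can assume $\|\ba\|_1 > 0$ (otherwise $U=0$ and any choice $k_l$ works).

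First I would introduce normalized weights $p_l = a_l/\|\ba\|_1$, which form a probability distribution on $\{1,\dots,N\}$, and let $W$ be the $\cH$-valued random variable with $\bbP(W = V_l) = p_l$. Then $\bbE[W] = \sum_l p_l V_l = U/\|\ba\|_1$, so $U = \|\ba\|_1\,\bbE[W]$. Draw $W_1,\dots,W_k$ i.i.d.\ copies of $W$, set $\widehat W = k^{-1}\sum_{j=1}^k W_j$, and consider the random approximation $\|\ba\|_1 \widehat W$.

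Second, I would compute the expected squared error. Since the $W_j-\bbE[W]$ are i.i.d.\ centered and $\cH$ is a Hilbert space, cross terms vanish and
\begin{equation*}
\bbE\bigl\|\widehat W - \bbE[W]\bigr\|^2
= \tfrac{1}{k^2}\sum_{j=1}^k \bbE\|W_j-\bbE[W]\|^2
= \tfrac{1}{k}\bbE\|W-\bbE[W]\|^2
\leq \tfrac{1}{k}\bbE\|W\|^2,
\end{equation*}
using the usual variance-bounded-by-second-moment inequality. Multiplying by $\|\ba\|_1^2$ yields
\begin{equation*}
\bbE\bigl\|\,U - \|\ba\|_1\widehat W\,\bigr\|^2 \leq \tfrac{\|\ba\|_1^2}{k}\bbE\|W\|^2 = \tfrac{\|\ba\|_1^2}{k}\sum_{l=1}^N p_l\|V_l\|^2 = \tfrac{\|\ba\|_1}{k}\sum_{l=1}^N a_l\|V_l\|^2.
\end{equation*}

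Third, by the first-moment principle, there exists a particular realization $(W_1,\dots,W_k)=(V_{j_1},\dots,V_{j_k})$ of the sample for which the squared distance is at most this expectation. For that realization, let $k_l := |\{j : W_j = V_l\}|$; then $k_l\in\bbZ_{\geq 0}$, $\sum_l k_l = k$, and $\widehat W = k^{-1}\sum_l k_l V_l$, so
\begin{equation*}
\Bigl\|\,U - \tfrac{\|\ba\|_1}{k}\sum_{l=1}^N k_l V_l\,\Bigr\|^2 \leq \tfrac{\|\ba\|_1}{k}\sum_{l=1}^N a_l\|V_l\|^2.
\end{equation*}
The final inequality $\sum_l a_l\|V_l\|^2 \leq \|\ba\|_1 \max_i\|V_i\|^2$ is immediate from $a_l\geq 0$, giving the second bound stated in the lemma.

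There is no real obstacle here: the argument is the standard Maurey empirical-method argument, and the only subtlety is the Hilbert-space variance identity, which requires pairwise orthogonality of the centered summands — this is exactly where the Hilbert-space hypothesis is used. Everything else is bookkeeping, so I would keep the write-up short.
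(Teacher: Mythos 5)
Your argument is correct and complete: the paper itself does not prove this lemma but simply quotes it from Lemma 1 of Zhang (2002) and Lemma A.6 of Bartlett et al.\ (2017), and your probabilistic (empirical-method) derivation --- sampling indices i.i.d.\ from the distribution $p_l = a_l/\|\mathbf{a}\|_1$, using the Hilbert-space variance identity to bound the expected squared error by $k^{-1}\mathbb{E}\|W\|^2$, and invoking the first-moment principle --- is exactly the standard proof given in those references. The only cosmetic point is that the multiplicities $k_l$ should be defined via the sampled indices rather than via the equality $W_j = V_l$ (to avoid ambiguity when distinct indices share the same vector $V_l$), but this is trivial bookkeeping and does not affect the argument.
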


The proof of Theorem \ref{thm-main} makes use of the following two lemmas. The first lemma shows that $R(f)$ can be related to the empirical Rademacher complexity of the neural network function class. The second lemma connects the empirical Rademacher complexity with the corresponding covering number through the integral entropy.

\begin{lemma}[Theorem 2.1 in \cite{bartlett2005local}]\label{lem-bound}
{\rm 
Let $\mathcal{F}$ be a class of functions that maps $\mathcal{X}$ into $[-M,M]$ for some $M>0$. Assume that there is some $r\geq 0$ such that for every $f\in\mathcal{F}$, $\text{var}(f(\mathbf{x}))\leq r$. Then for every $\epsilon>0$, with probability at least $1-\epsilon$ over the data $\rmX=[\rvx_1,\dots,\rvx_n]^\top$, we have
\begin{align*}
\sup_{f\in\mathcal{F}}\left\{\mathbb{E}[f(\rvx)]-\frac{1}{n}\sum^{n}_{i=1}f(\rvx_i)\right\} \leq 6\mathcal{R}(\mathcal{F}(\rmX))+\sqrt{\frac{2r\log(2/\epsilon)}{n}}+\frac{32M\log(2/\epsilon)}{3n},     
\end{align*}
where $\mathcal{F}(\rmX)=\{(f(\rvx_1),\dots,f(\rvx_n)): f\in\mathcal{F}\}$ and $\mathcal{R}(\mathcal{F}(\rmX))$ is the empirical Rademacher complexity of $\mathcal{F}$, i.e.,
\begin{equation}\label{eq-rc}
\mathcal{R}(\mathcal{F}(\rmX))=\frac{1}{n}\mathbb{E}\left[\sup_{f\in\mathcal{F}}\sum^{n}_{i=1}e_if(\rvx_i)\Big| \rmX \right],
\end{equation}
with $e_i$ being a set of independent Rademacher random variables, i.e., $P(e_i=\pm 1)=1/2.$
}    
\end{lemma}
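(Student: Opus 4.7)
The plan is to prove Lemma~\ref{lem-bound} by combining Talagrand's concentration inequality for suprema of empirical processes with the standard symmetrization argument and a concentration bound for the data-dependent Rademacher complexity. Write $Z_n := \sup_{f\in\mathcal{F}}\{\mathbb{E}[f(\rvx)] - (1/n)\sum_{i=1}^n f(\rvx_i)\}$ and $R_n := \mathcal{R}(\mathcal{F}(\rmX))$. By assumption each $f$ is bounded in $[-M,M]$ (so each centred summand lies in $[-2M,2M]$) and $\Var(f(\rvx))\le r$. The overall strategy is to concentrate $Z_n$ around $\mathbb{E}[Z_n]$, replace $\mathbb{E}[Z_n]$ by $2\,\mathbb{E}[R_n]$ via symmetrization, and finally replace $\mathbb{E}[R_n]$ by $R_n$ via a second concentration step, using a union bound at the very end.

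The first step is to apply Bousquet's version of Talagrand's inequality to $nZ_n$. This gives, with probability at least $1-\epsilon/2$,
$$Z_n \le \mathbb{E}[Z_n] + \sqrt{\frac{2\,(r+2M\,\mathbb{E}[Z_n]/n)\log(2/\epsilon)}{n}} + \frac{2M\log(2/\epsilon)}{3n}.$$
Using AM--GM, $\sqrt{(4M\mathbb{E}[Z_n]/n)\log(2/\epsilon)}\le \mathbb{E}[Z_n] + M\log(2/\epsilon)/n$, and $\sqrt{a+b}\le\sqrt{a}+\sqrt{b}$, this collapses to a bound of the form
$$Z_n \;\le\; 2\,\mathbb{E}[Z_n] + \sqrt{\tfrac{2r\log(2/\epsilon)}{n}} + \tfrac{CM\log(2/\epsilon)}{n}.$$

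The second step is the symmetrization inequality $\mathbb{E}[Z_n]\le 2\,\mathbb{E}[R_n]$, obtained by introducing an i.i.d.\ ghost sample and Rademacher signs. For the third step, I would apply McDiarmid's bounded differences inequality to $R_n$; since replacing one $\rvx_i$ changes $R_n$ by at most $2M/n$, with probability at least $1-\epsilon/2$,
$$\mathbb{E}[R_n] \;\le\; R_n + M\sqrt{\tfrac{2\log(2/\epsilon)}{n}}.$$
Chaining these three inequalities via a union bound gives $Z_n \le 4\,R_n + \sqrt{2r\log(2/\epsilon)/n} + (\text{linear in }M\sqrt{\log(2/\epsilon)/n})$. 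Absorbing the McDiarmid deviation into either the Rademacher term (trading $4R_n$ for $6R_n$ once the remaining slack is balanced against $M\log(2/\epsilon)/n$) or into the $\sqrt{r\log(2/\epsilon)/n}$ term yields the stated form with coefficient $6$ on $R_n$, $\sqrt{2}$ on $\sqrt{r\log(2/\epsilon)/n}$, and $32/3$ on $M\log(2/\epsilon)/n$.

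The main obstacle is tracking the explicit constants $6$, $\sqrt{2}$, and $32/3$. These are sensitive to the precise form of Talagrand/Bousquet used (the tight variance proxy $r + 2M\,\mathbb{E}[Z_n]/n$ rather than the crude $r+M^2$), to how AM--GM is applied to peel $\mathbb{E}[Z_n]$ off the deviation term, and to how the McDiarmid slack for $R_n$ is reapportioned between the three output terms. Beyond this constant-tracking, the argument is entirely standard empirical-process theory; no KAN-specific structure enters, which is why the lemma serves as a generic concentration tool on which Theorem~\ref{thm-main} and Theorem~\ref{thm-main3} can then be built.
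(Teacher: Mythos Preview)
The paper does not supply a proof; the lemma is quoted as Theorem~2.1 of \cite{bartlett2005local} and used as a black box. Your three-step outline (Bousquet for $Z_n$, symmetrization $\mathbb{E}[Z_n]\le 2\mathbb{E}[R_n]$, then concentration of $R_n$ around its mean) is exactly the architecture of the original argument, but your third step as written does not deliver the stated bound.

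McDiarmid applied to $R_n$ gives $\mathbb{E}[R_n]\le R_n + M\sqrt{2\log(2/\epsilon)/n}$, a deviation of order $Mn^{-1/2}$. After Steps~1--2 this leaves
\[
Z_n \;\le\; 4R_n \;+\; 4M\sqrt{\tfrac{2\log(2/\epsilon)}{n}} \;+\; \sqrt{\tfrac{2r\log(2/\epsilon)}{n}} \;+\; C\,\tfrac{M\log(2/\epsilon)}{n}.
\]
The term $4M\sqrt{2\log(2/\epsilon)/n}$ cannot be absorbed as you claim. Trading it against $2R_n$ would require $R_n\gtrsim M\sqrt{\log(2/\epsilon)/n}$, which fails whenever the class is small relative to its envelope (e.g.\ $\mathcal{F}$ a single nonzero constant function, so $R_n=0$ but $M>0$); trading it against the $\sqrt{r}$ term would force $r\gtrsim M^2$, precisely the regime the variance hypothesis is designed to beat; and it strictly dominates the $M\log(2/\epsilon)/n$ term. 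With bounded differences alone you are left with an unavoidable additive $O\!\bigl(M\sqrt{\log(2/\epsilon)/n}\bigr)$, and the inequality as stated does not follow.

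The original proof closes this step not with McDiarmid but with a second Talagrand-type inequality, applied to the symmetrized class $\{(x,\sigma)\mapsto \sigma f(x):f\in\mathcal{F}\}$ on the i.i.d.\ pairs $(X_i,\sigma_i)$. The resulting Bernstein-form deviation again contains a cross term of order $\sqrt{(b-a)\,\mathbb{E}[R_n]\,x/n}$, which a second AM--GM peel converts into $\alpha\,\mathbb{E}[R_n]+C(\alpha)(b-a)x/n$. This yields $\mathbb{E}[R_n]\le\frac{1}{1-\alpha}R_n+C(\alpha)(b-a)x/n$, i.e.\ the required $M/n$ rate rather than $M/\sqrt{n}$; taking $\alpha=1/2$ in both peeling steps is what produces the specific constants $6$ and $32/3$.
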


\begin{lemma}[Lemma A.5 in \cite{bartlett2017spectrally}]\label{lem-integral}
{\rm 
Let $\mathcal{F}$ be a real-valued function class taking values in $[0,M]$ and assume that $\mathbf{0}\in\mathcal{F}$. Then we have
\begin{align*}
\mathcal{R}(\mathcal{F}(\rmX))\leq \inf_{a>0}\left(\frac{4a}{\sqrt{n}}+\frac{12}{n}\int^{M\sqrt{n}}_{a}\sqrt{\log\mathcal{N}(\mathcal{F}(\rmX),\epsilon,\|\cdot\|_{2})}d\epsilon\right).   
\end{align*}
}    
\end{lemma}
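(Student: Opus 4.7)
I would prove the bound by Dudley's classical chaining argument for Rademacher processes. Set $\epsilon_j = M\sqrt{n}\cdot 2^{-j}$ for $j \geq 0$; since every $f \in \mathcal{F}$ takes values in $[0,M]$, we have $\|f(\rmX)\|_2 \leq M\sqrt{n} = \epsilon_0$, and because $\mathbf{0}\in \mathcal{F}$ the singleton $\{\mathbf{0}\}$ is itself a valid $\epsilon_0$-cover of $\mathcal{F}(\rmX)$, giving a clean anchor for the chain. For each $j\geq 1$ let $\mathcal{V}_j \subseteq \mathcal{F}(\rmX)$ be a minimal $\epsilon_j$-cover of cardinality $\mathcal{N}(\mathcal{F}(\rmX),\epsilon_j,\|\cdot\|_2)$, and for $f\in\mathcal{F}$ write $v_j(f)$ for the closest element of $\mathcal{V}_j$ to $f(\rmX)$, so that $\|f(\rmX)-v_j(f)\|_2\leq \epsilon_j$; by convention $v_0(f)=\mathbf{0}$.

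\textbf{Telescope and apply Massart.} Fix a truncation index $N$ (to be chosen later) and use the telescoping identity
\begin{align*}
\sum_{i=1}^n e_i f(\rvx_i) = \sum_{j=1}^{N}\bigl\langle \mathbf{e},\, v_j(f)-v_{j-1}(f)\bigr\rangle + \bigl\langle \mathbf{e},\, f(\rmX)-v_N(f)\bigr\rangle,
\end{align*}
where $\mathbf{e}=(e_1,\dots,e_n)$. The tail term is controlled coordinate-freely by Cauchy--Schwarz: $\bigl\langle \mathbf{e}, f(\rmX)-v_N(f)\bigr\rangle \leq \|\mathbf{e}\|_2\cdot \epsilon_N \leq \epsilon_N\sqrt{n}$ almost surely, which handles everything below scale $\epsilon_N$. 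For each layer $j$, the set of possible increments $\{v_j(f)-v_{j-1}(f):f\in\mathcal{F}\}$ has cardinality at most $|\mathcal{V}_j|\cdot|\mathcal{V}_{j-1}|\leq |\mathcal{V}_j|^2$, and each such increment has Euclidean norm at most $\epsilon_j+\epsilon_{j-1}=3\epsilon_j$ by the triangle inequality. Massart's finite-class lemma for Rademacher sums then yields
\begin{align*}
\mathbb{E}\Bigl[\sup_{f\in\mathcal{F}}\bigl\langle \mathbf{e},\, v_j(f)-v_{j-1}(f)\bigr\rangle\Bigr] \leq 3\epsilon_j\sqrt{2\log|\mathcal{V}_j|^2}\leq 6\epsilon_j\sqrt{\log\mathcal{N}(\mathcal{F}(\rmX),\epsilon_j,\|\cdot\|_2)}.
\end{align*}

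\textbf{From discrete sum to Dudley integral.} Summing the layer bounds, adding the tail term, dividing by $n$, and using monotonicity of $\epsilon\mapsto\log\mathcal{N}(\mathcal{F}(\rmX),\epsilon,\|\cdot\|_2)$ to lower-bound each block $\int_{\epsilon_{j+1}}^{\epsilon_j}\sqrt{\log\mathcal{N}(\cdot)}\,d\epsilon$ by $(\epsilon_j/2)\sqrt{\log\mathcal{N}(\cdot,\epsilon_j,\cdot)}$ (since the integrand is $\geq$ its value at $\epsilon_j$ and the block has width $\epsilon_j/2$), the layer sum telescopes into a Riemann-integral bound
\begin{align*}
\mathcal{R}(\mathcal{F}(\rmX))\leq \frac{\epsilon_N}{\sqrt{n}} + \frac{12}{n}\int_{\epsilon_{N+1}}^{M\sqrt{n}}\sqrt{\log\mathcal{N}(\mathcal{F}(\rmX),\epsilon,\|\cdot\|_2)}\,d\epsilon.
\end{align*}
Given any $a>0$, I would pick $N$ so that $\epsilon_{N+1}\leq a < \epsilon_N$, which forces $\epsilon_N\leq 2a$ and extends the lower integration limit to $a$, producing the stated form (the leading $4a/\sqrt{n}$ absorbs the geometric-ratio factor, since $\epsilon_N/\sqrt{n}\leq 2a/\sqrt{n}$ and doubling again covers the slack from $\epsilon_{N+1}\leq a$). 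The only delicate part is constant tracking: Massart contributes a $\sqrt{2}$, the squared cover size $|\mathcal{V}_j|^2$ contributes another factor of $\sqrt{2}$ inside the square root, the sum-to-integral comparison and the dyadic rounding each contribute a $2$; together these give the constants $4$ and $12$. Everything else is routine once the chaining skeleton is in place.
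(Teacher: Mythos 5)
The paper itself gives no proof of this lemma---it is imported verbatim as Lemma A.5 of \cite{bartlett2017spectrally}---so the only question is whether your chaining argument is sound. The skeleton is the standard (and correct) one: dyadic scales $\epsilon_j=M\sqrt{n}\,2^{-j}$, the anchor $v_0(f)=\mathbf{0}$ justified by $\mathbf{0}\in\mathcal{F}$ and $\|f(\rmX)\|_2\le M\sqrt n$, increments of norm at most $3\epsilon_j$ ranging over at most $|\mathcal{V}_j|^2$ vectors, Massart giving $6\epsilon_j\sqrt{\log|\mathcal{V}_j|}=12(\epsilon_j-\epsilon_{j+1})\sqrt{\log|\mathcal{V}_j|}$ per layer, a Cauchy--Schwarz tail of $\sqrt n\,\epsilon_N$, and monotonicity of the entropy to convert the sum into $\tfrac{12}{n}\int_{\epsilon_{N+1}}^{M\sqrt n}\sqrt{\log\mathcal{N}(\cdot)}\,d\epsilon$. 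All of the constants check out up to that point.

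The one step that does not work as written is the final choice of $N$. You pick $N$ with $\epsilon_{N+1}\le a<\epsilon_N$ and then claim the leading term ``doubling again covers the slack from $\epsilon_{N+1}\le a$.'' It cannot: the slack is $\tfrac{12}{n}\int_{\epsilon_{N+1}}^{a}\sqrt{\log\mathcal{N}(\mathcal{F}(\rmX),\epsilon,\|\cdot\|_2)}\,d\epsilon$, an entropy integral over $[\epsilon_{N+1},a]$ whose size depends on the covering number at scale $\approx a/2$ and is in no way controlled by a constant multiple of $a/\sqrt n$. The repair is a one-line change: round $a$ \emph{up} to the dyadic grid rather than down, i.e.\ choose $N$ so that $a\le\epsilon_{N+1}<2a$. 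Then $\epsilon_N=2\epsilon_{N+1}<4a$, which is exactly the $4a/\sqrt n$ in the statement, and the integral's lower endpoint $\epsilon_{N+1}$ already sits at or above $a$, so $\int_{\epsilon_{N+1}}^{M\sqrt n}\le\int_{a}^{M\sqrt n}$ with no slack to absorb. (If $a\ge M\sqrt n/2$ no such $N\ge1$ exists, but then $4a/\sqrt n\ge 2M\ge\mathcal{R}(\mathcal{F}(\rmX))$ by Cauchy--Schwarz and the bound is trivial.) With that correction your proof is complete and matches the cited constants.
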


\subsection{Proofs of the main results}
\begin{proof}[Proof of Proposition \ref{prop-rate}]
We divide the proof into three steps.
\\ \textbf{Step 1:} Choose an $\epsilon_1$ cover $\mathcal{N}_1$ of 
$\{\boldsymbol{\Psi}(\rmX): \boldsymbol{\Psi}\in\mathcal{F}_1\}$. Then we have
\begin{align*}
|\mathcal{N}_1|\leq \mathcal{N}\left(\{\boldsymbol{\Psi}(\rmX): \boldsymbol{\Psi}\in\mathcal{F}_1\},\epsilon_1,|\cdot|_1\right):=N_1.  
\end{align*}

\noindent \textbf{Step 2:} For every $\boldsymbol{\Phi}\in \mathcal{N}_i$, we construct an $\epsilon_{i+1}$ cover $\mathcal{G}_{i+1}(\boldsymbol{\Phi})$ of 
$\{\boldsymbol{\Psi}(\boldsymbol{\Phi}): \boldsymbol{\Psi}\in \mathcal{F}_{i+1}\}$. Since the covers are proper, we have $\boldsymbol{\Phi}=\boldsymbol{\Psi}_{i}\circ \boldsymbol{\Psi}_{i-1}\circ \cdots \circ \boldsymbol{\Psi}_0(\rmX)$ for some $(\boldsymbol{\Psi}_i,\dots,\boldsymbol{\Psi}_1)\in \mathcal{F}_i\times \mathcal{F}_{i-1}\times \cdots \times \mathcal{F}_1$ and $\mathcal{G}_{i+1}(\boldsymbol{\Phi})\subset \mathcal{H}_{i+1}(\rmX)$. Thus, we must have
\begin{align*}
|\mathcal{G}_{i+1}(\boldsymbol{\Phi})|\leq \sup_{\boldsymbol{\Psi}_1,\dots,\boldsymbol{\Psi}_{i}}\mathcal{N}(\{\boldsymbol{\Psi}\circ \boldsymbol{\Psi}_{i}\circ \boldsymbol{\Psi}_{i-1}\circ \cdots \circ \boldsymbol{\Psi}_0(\rmX):\boldsymbol{\Psi}\in\mathcal{F}_{i+1}\},\epsilon_{i+1},|\cdot|_{i+1}):=N_{i+1}.    
\end{align*}
Finally, we form the cover 
$$\mathcal{N}_{i+1}=\cup_{\boldsymbol{\Phi}\in \mathcal{N}_i}\mathcal{G}_{i+1}(\boldsymbol{\Phi}),$$
which satisfies that 
$$|\mathcal{N}_{i+1}|\leq |\mathcal{N}_i| N_{i+1}\leq \prod^{i+1}_{j=1}N_j.$$

\noindent \textbf{Step 3:} 
By the definition of $s_i$, we have $s_{i}\rho_{i+1}+\epsilon_{i+1}=s_{i+1}$. We shall prove \eqref{eq-rate} using induction. By construction, we can find $\tilde{f}_1\in\mathcal{N}_1$ such that $$|\tilde{f}_1(\rmX)-\boldsymbol{\Psi}_1\circ \boldsymbol{\Psi}_0(\rmX)|_1\leq \epsilon_1=s_1.$$
Now suppose we can find $\tilde{f}_i\in\mathcal{N}_i$ such that
\begin{align*}
|\tilde{f}_i(\rmX)-\boldsymbol{\Psi}_{i}\circ \cdots \circ \boldsymbol{\Psi}_0(\rmX)|_i\leq s_i. 
\end{align*}
By the construction of $\mathcal{N}_{i+1}$, given $\tilde{f}_i\in\mathcal{N}_i$ and $\boldsymbol{\Psi}_{i+1}\in\mathcal{F}_{i+1}$, we can find $\tilde{f}_{i+1}\in\mathcal{N}_{i+1}$ such that
\begin{align*}
&|\tilde{f}_{i+1}(\rmX)-\boldsymbol{\Psi}_{i+1}\circ \tilde{f}_i(\rmX)|_{i+1} 
\leq \epsilon_{i+1}.
\end{align*}
Then, by the triangle inequality, we have
\begin{align*}
&|\tilde{f}_{i+1}(\rmX)-\boldsymbol{\Psi}_{i+1}\circ \boldsymbol{\Psi}_i\circ \cdots\circ\boldsymbol{\Psi}_0(\rmX)|_{i+1} 
\\ \leq& 
|\tilde{f}_{i+1}(\rmX)-\boldsymbol{\Psi}_{i+1}\circ \tilde{f}_i(\rmX)|_{i+1}  + |\boldsymbol{\Psi}_{i+1}\circ \tilde{f}_i(\rmX)-\boldsymbol{\Psi}_{i+1}\circ \boldsymbol{\Psi}_i\circ \cdots\circ\boldsymbol{\Psi}_0(\rmX)|_{i+1} 
\\ \leq& \epsilon_{i+1} + \rho_{i+1}| \tilde{f}_i(\rmX)- \boldsymbol{\Psi}_i\circ \cdots\circ\boldsymbol{\Psi}_0(\rmX)|_{i+1} 
\\ \leq& \epsilon_{i+1} + \rho_{i+1}s_i=s_{i+1}.
\end{align*}
The conclusion thus follows.
\end{proof}

\begin{proof}[Proof Proposition \ref{prop-cover}] 
Let $E_{ij}(\rmX)\in\mathbb{R}^{n\times m}$, where its $i$th column equals $(g_{ij}(\rvx_1)/b_{ij},\dots,g_{ij}(\rvx_n)/b_{ij})^\top$ 
with $b_{ij}=(\sum_{k=1}^n|g_{ij}(\rvx_k)|^v)^{1/v}$
and all the other elements are equal to zero. Further, define the set
\begin{align*}
\{V_1,\dots,V_N\}=\left\{\tau E_{ij}(\rmX):\tau\in \{-1,1\}, 1\leq i\leq m,1\leq j\leq p\right\},    
\end{align*}
where $N=2mp$. We can view $V_i$'s as elements in the Hilbert space $\mathbb{R}^{n\times m}$ equipped with the inner product $\langle A,\widetilde{A}\rangle=\text{trace}(A^\top \widetilde{A})$ and norm $\|A\|_2$ for $A,\widetilde{A}\in \mathbb{R}^{n\times m}$. We have
\begin{align*}
\boldsymbol{\Psi}(\rmX) = \sum^{m}_{i=1}\sum^{p}_{j=1}\tau_{ij}b_{ij}|\beta_{ij}|E_{ij}(\rmX) = \sum^{N}_{l=1} \alpha_l V_l,    
\end{align*}
for $\alpha_l\geq 0$ and $\tau_{ij}=\text{sign}(\beta_{ij})$. Note that 
\begin{align*}
\max_{1\leq i\leq N} \|V_i\|_2=\max_{1\leq i\leq m,1\leq j\leq p}\frac{(\sum_{k=1}^n|g_{ij}(\rvx_k)|^2)^{1/2}}{(\sum_{k=1}^n|g_{ij}(\rvx_k)|^v)^{1/v}}\leq 1,    
\end{align*}
and
\begin{align*}
\|\mathbf{a}\|_1=\sum^{N}_{l=1}|a_l|=\sum^{m}_{i=1}\sum^{p}_{j=1}b_{ij}|\beta_{ij}|\leq  \|\mathbf{G}(\rmX)\|_{v,s} \|\mathbf{B}\|_r\leq b_{m,p,n} c_{m,p,n},
\end{align*}
where $1/s+1/r=1$ and $s,r>0.$ Therefore, $\boldsymbol{\Psi}(\rmX)\in \{b_{m,p,n}c_{m,p,n}\sum^{N}_{l=1}a_lV_l: a_l\geq 0,\sum_l a_l= 1\}$ (this is because the convex hull of $\{V_1,\dots,V_N\}$ always contains the origin).
Now we define
\begin{align*}
\mathcal{C}_k=&\left\{\frac{b_{m,p,n}c_{m,p,n}}{k}\sum^{N}_{i=1}k_i V_i: k_i\geq 0,\sum^{N}_{i=1}k_i=k\right\}
\\=&\left\{\frac{b_{m,p,n}c_{m,p,n}}{k}\sum^{k}_{j=1}V_{i_j}: (i_1,\dots,i_k)\in [N]^k\right\}.    
\end{align*}
By construction, $|\mathcal{C}_k|\leq N^k$. Using Lemma \ref{lem-maurey}, there exists nonnegative integers $(k_1,\dots,k_N)$ with $\sum_{i=1}^N k_i =k$ such that
\begin{align*}
\left\|\boldsymbol{\Psi}(\rmX)- \frac{b_{m,p,n} c_{m,p,n}}{k}\sum^{N}_{i=1}k_i V_i\right\|_2^2 
\leq \frac{b_{m,p,n}^2 c_{m,p,n}^2}{k}\max_{1\leq i\leq N} \|V_i\|^2=\frac{b_{m,p,n}^2 c_{m,p,n}^2}{k}.
\end{align*}
By setting $\epsilon^2=b_{m,p,n}^2c_{m,p,n}^2/k$, $\mathcal{C}_k$ forms an $\epsilon$ cover of $\left\{\boldsymbol{\Psi}(\rmX)\in\mathbb{R}^{n\times m}:\boldsymbol{\Psi}\in\mathcal{F}\right\}$ and the result follows.
\end{proof}

\begin{proof}[Proof of Theorem \ref{thm-cover}]
Given $(\boldsymbol{\Psi}_1,\dots,\boldsymbol{\Psi}_{l-1})\in\mathcal{F}_1\times \cdots \times \mathcal{F}_{l-1}$,
we define $\rmX^{(l-1)}(\boldsymbol{\Psi}_1,\dots,\boldsymbol{\Psi}_{l-1})=[\rvx_1^{(l-1)},\dots,\rvx_n^{(l-1)}]^\top$, where $\rvx_k^{(l-1)}=\boldsymbol{\Psi}_{l-1}\circ \boldsymbol{\Psi}_{l-2}\circ \cdots \circ \boldsymbol{\Psi}_0(\rvx_k)$ for $1\leq k\leq n$.
Then, we have 
$$\|\mathbf{G}^{(l)}(\rmX^{(l-1)}(\boldsymbol{\Psi}_1,\dots,\boldsymbol{\Psi}_{l-1}))-\mathbf{G}^{(l)}(\mathbf{0})\|_{\infty,2}=\max_{i,j}\left(\sum^{n}_{k=1}|g_{ij}^{(l)}(\rvx_k^{(l-1)})-g_{ij}^{(l)}(\mathbf{0})|^2\right)^{1/2}$$ for $\mathbf{G}^{(l)}(\rmX)=(g_{ij}^{(l)}(\rvx_k))\in\mathbb{R}^{d_l\times p_l\times n}$. 
We note that
\begin{align*}
&\|\mathbf{G}^{(l)}(\rmX^{(l-1)}(\boldsymbol{\Psi}_1,\dots,\boldsymbol{\Psi}_{l-1}))-\mathbf{G}^{(l)}(\mathbf{0})\|_2
\\ \leq & c_l\|\boldsymbol{\Psi}_l(\boldsymbol{\Psi}_{l-1}\circ \cdots \circ \boldsymbol{\Psi}_0(\rmX))\|_2
\\  \leq & c_l\left(\|\boldsymbol{\Psi}_l(\boldsymbol{\Psi}_{l-1}\circ \cdots \circ \boldsymbol{\Psi}_0(\rmX))-\boldsymbol{\Psi}_l(\mathbf{0})\|_2+\|\boldsymbol{\Psi}_l(\mathbf{0})\|_2\right)
\\ \leq & c_l\left(\rho_l\|\boldsymbol{\Psi}_{l-1}\circ \cdots \circ \boldsymbol{\Psi}_0(\rmX)\|_2+\|\boldsymbol{\Psi}_l(\mathbf{0})\|_2\right)
\\ \leq & c_l\left(\sum^{l-1}_{j=0}\|\boldsymbol{\Psi}_{l-j}(\mathbf{0})\|_2\prod^{l}_{i=l-j+1}\rho_i+\|\rmX\|_2\prod^{l}_{i=1}\rho_i \right)
\\ \leq & c_l\left(C\sum^{l-1}_{j=0}\prod^{l}_{i=l-j+1}\rho_i+D\prod^{l}_{i=1}\rho_i \right).
\end{align*}
By Proposition \ref{prop-rate}, we have
\begin{align*}
&\log\mathcal{N}(\mathcal{H}_L(\rmX),s_L,\|\cdot\|_{2})
\\ \leq &\sum^{L-1}_{i=0} \sup_{\boldsymbol{\Psi}_1,\dots,\boldsymbol{\Psi}_{i}}\log\mathcal{N}(\{\boldsymbol{\Psi}\circ \boldsymbol{\Psi}_{i}\circ \boldsymbol{\Psi}_{i-1}\circ \cdots \circ \boldsymbol{\Psi}_0(\rvx): \boldsymbol{\Psi}\in\mathcal{F}_{i+1}\},\epsilon_{i+1},\|\cdot\|_{2})
\\ = &\sum^{L-1}_{i=0} \sup_{\boldsymbol{\Psi}_1,\dots,\boldsymbol{\Psi}_{i}}\log\mathcal{N}(\{\bar{\boldsymbol{\Psi}}\circ \boldsymbol{\Psi}_{i}\circ \boldsymbol{\Psi}_{i-1}\circ \cdots \circ \boldsymbol{\Psi}_0(\rvx): \boldsymbol{\Psi}\in\mathcal{F}_{i+1}\},\epsilon_{i+1},\|\cdot\|_{2})
\\ \leq & \sum^{L}_{i=1} 
 \frac{\|B_{i}\|_1^2 c_{i}^2\left(C\sum^{i-1}_{j=0}\prod^{i}_{k=i-j+1}\rho_k+D\prod^{i}_{k=1}\rho_k \right)^2}{\epsilon^2_{i}}\log(2d_{i}p_{i}) 
\end{align*}
where $\bar{\boldsymbol{\Psi}}(\cdot)=\boldsymbol{\Psi}(\cdot)-\boldsymbol{\Psi}(\mathbf{0})$. For any $\epsilon>0$, set
\begin{align*}
\epsilon_{i}=\frac{\alpha_{i}\epsilon}{\tilde{\alpha}\prod^{L}_{j=i+1}\rho_j}.
\end{align*}
Then, we have $s_L=\sum^{L}_{i=1}\left(\prod^{L}_{j=i+1}\rho_j\right)\epsilon_i=\epsilon$ and hence
\begin{align*}
&\log\mathcal{N}(\mathcal{H}_L(\rmX),\epsilon,\|\cdot\|_{2})
\leq \frac{\Tilde{\alpha}^3\log(2\tilde{d}\tilde{p})}{\epsilon^2}. 
\end{align*}
\end{proof}

\begin{proof}[Proof of Theorem \ref{thm-main}]
Define the class 
$\mathcal{M}_\mathcal{L}(\rmX)=\{(\mathcal{L}(f(\rvx_1),y_1),\dots,\mathcal{L}(f(\rvx_n),y_n)):f\in\mathcal{M}\}$. By Assumption \ref{as3}, we have
\begin{align*}
\log\mathcal{N}(\mathcal{M}_\mathcal{L}(\rmX),\epsilon,\|\cdot\|_{2})\leq  \frac{\Tilde{\alpha}^3\log(2\tilde{d}\tilde{p})\max_i B^2(y_i)}{\epsilon^2}=\frac{\zeta}{\epsilon^2}.  
\end{align*}
Lemma \ref{lem-integral} implies that
\begin{align*}
\mathcal{R}(\mathcal{M}_{\mathcal{L}}(\rmX))\leq& \inf_{a>0}\left(\frac{4a}{\sqrt{n}}+\frac{12}{n}\int^{\sqrt{n}M}_{a}\sqrt{\frac{\zeta}{\epsilon^2}}d\epsilon\right)
\\ = & \inf_{a>0}\left(\frac{4a}{\sqrt{n}}+\frac{12\sqrt{\zeta}}{n}\log(M\sqrt{n}/a) \right)
\\ \leq & \frac{12\sqrt{\zeta}}{n} + \frac{12\sqrt{\zeta}\log(nM/(3\sqrt{\zeta}))}{n}
\\ \leq & \frac{24\sqrt{\zeta}\{\log(nM/(3\sqrt{\zeta}))\vee 1\}}{n}.
\end{align*}
Using Lemma \ref{lem-bound} with $r=2M^2$, we have with probability greater than $1-\epsilon$,
\begin{align*}
R(f)\leq &\frac{1}{n}\sum^{n}_{i=1}\mathcal{L}(f(\rvx_i),y_i) + \frac{144\sqrt{\zeta}\{\log(nM/(3\sqrt{\zeta}))\vee 1\}}{n}
\\&+\sqrt{\frac{4M^2\log(2/\epsilon)}{n}}+\frac{32M\log(2/\epsilon)}{3n},    
\end{align*}
for any $f\in \mathcal{M}$.
\end{proof}

\begin{proof}[Proof of Theorem \ref{thm-main2}]
Define the class 
$\mathcal{M}_{\mathcal{L},M}(\rmX)=\{(\mathcal{L}(f(\rvx_1),y_1)\wedge M,\dots,\mathcal{L}(f(\rvx_n),y_n)\wedge M):f\in\mathcal{M}\}$. 
Following the arguments in the proof of Theorem \ref{thm-main} with $\mathcal{M}_{\mathcal{L}}(\rmX)$ replaced by $\mathcal{M}_{\mathcal{L},M}(\rmX)$, we can show that
with probability greater than $1-\epsilon$,
\begin{align*}
\mathbb{E}[\mathcal{L}(f(\rvx),y)\wedge M]\leq& \frac{1}{n}\sum^{n}_{i=1}\mathcal{L}(f(\rvx_i),y_i)\wedge M + \frac{144\sqrt{\zeta}\{\log(nM/(3\sqrt{\zeta}))\vee 1\}}{n}
\\&+\sqrt{\frac{4M^2\log(2/\epsilon)}{n}}+\frac{32M\log(2/\epsilon)}{3n}.     
\end{align*} 
We first note that
\begin{align*}
P(\max_{i}B^2(y_i)>\varepsilon)\leq \sum^{n}_{i=1}P(B(y_i)>\sqrt{\varepsilon}) \leq \frac{n C''}{\varepsilon^{s'/2}}.   
\end{align*}
Setting $\varepsilon=(nC''/\tau)^{2/s'}$, we have $\zeta\leq \zeta_0$ with probability greater than $1-\tau$. 
Next, under Assumption \ref{as4}, we have 
\begin{align*}
|R(f)-\mathbb{E}[\mathcal{L}(f(\rvx),y)\wedge M]|\leq& \mathbb{E}[(\mathcal{L}(f(\rvx),y)-M)\mathbf{1}\{\mathcal{L}(f(\rvx),y)>M\}]    
\\ \leq& \mathbb{E}[G(\rvx,y)\mathbf{1}\{G(\rvx,y)>M\}] \leq \frac{C'}{M^{s-1}}.  
\end{align*}
Moreover, 
\begin{align*}
&\left|\frac{1}{n}\sum^{n}_{i=1}\mathcal{L}(f(\rvx_i),y_i)-\frac{1}{n}\sum^{n}_{i=1}\mathcal{L}(f(\rvx_i),y_i)\wedge M\right|
\leq  \frac{1}{n}\sum^{n}_{i=1}Z_i,
\end{align*}
where $Z_i=G(\rvx_i,y_i)\mathbf{1}\{G(\rvx_i,y_i)>M\}$. As
\begin{align*}
P\left(\frac{1}{n}\sum^{n}_{i=1}Z_i>\varepsilon \right)\leq & \frac{C'}{\varepsilon M^{s-1}},
\end{align*}
the conclusion follows by setting $\varepsilon=C'/(\eta M^{s-1})$ and $M=n^{1/(2s)}$.
\end{proof}

\begin{remark}\label{rm2}
When $s\geq 2$, we note that $\text{var}(\mathcal{L}(f(\rvx),y))\leq \mathbb{E}[G^2(\rvx,y)]\leq (\mathbb{E}[G^s(\rvx,y)])^{2/s}\leq (C')^{2/s}$.
Thus by Lemma \ref{lem-bound} with $r=(C')^{2/s}$, the term $\frac{2\sqrt{\log(2/\epsilon)}}{n^{(s-1)/(2s)}}$ in Theorem \ref{thm-main2} can be replaced by  $\sqrt{\frac{2(C')^{2/s}\log(2/\epsilon)}{n}}$,
leading to the risk-bound
\begin{align*}
R(f)\leq & \frac{1}{n}\sum^{n}_{i=1}\mathcal{L}(f(\rvx_i),y_i) + \frac{144\sqrt{\zeta_0}\{\log(n^{(2s+1)/(2s)}/(3\sqrt{\zeta_0}))\vee 1\}}{n}
\\&+\sqrt{\frac{2(C')^{2/s}\log(2/\epsilon)}{n}}+\frac{32\log(2/\epsilon)}{3n^{(2s-1)/(2s)}} + \frac{2C'}{\eta n^{(s-1)/(2s)}}.     
\end{align*}
\end{remark}

\begin{proof}[Proof of Corollary \ref{cor1}]
By Remark \ref{rm2} and the definition of $\widehat{f}$, 
we have with probability greater than $1-\epsilon-\tau-\eta$,
\begin{align*}
R(\widehat{f})\leq & \frac{1}{n}\sum^{n}_{i=1}\mathcal{L}(f^*(\rvx_i),y_i) + \frac{144\sqrt{\zeta_0}\{\log(n^{(2s+1)/(2s)}/(3\sqrt{\zeta_0}))\vee 1\}}{n}
\\&+\sqrt{\frac{2(C')^{2/s}\log(2/\epsilon)}{n}}+\frac{32\log(2/\epsilon)}{3n^{(2s-1)/(2s)}} + \frac{2C'}{\eta n^{(s-1)/(2s)}}. 
\end{align*}
Note that
\begin{align*}
P\left(\left|\frac{1}{n}\sum^{n}_{i=1}\mathcal{L}(f^*(\rvx_i),y_i)-R(f^*)\right|>\varepsilon\right)\leq  \frac{\text{var}(\mathcal{L}(f^*(\rvx),y))}{n\epsilon^2}  
\leq  \frac{\mathbb{E}[G^2(\rvx,y)]}{n\epsilon^2}  
\leq  \frac{(C')^{2/s}}{n\epsilon^2}.  
\end{align*}
The conclusion follows by setting $\varepsilon=\sqrt{(C')^{2/s}/(n\eta)}$.
\end{proof}

\begin{proof}[Proof of Proposition \ref{prop-rate2}]
The results follow from Propositions \ref{prop-rate} and \ref{prop-wang}. In particular,
by Proposition \ref{prop-rate}, we have
\begin{align*}
&\log\mathcal{N}(\mathcal{H}_L(\rmX),s_L,\|\cdot\|_{2})
\\ \leq & \sum^{L-1}_{i=0} \sup_{\boldsymbol{\Psi}_1,\dots,\boldsymbol{\Psi}_{i}}\log\mathcal{N}(\{\boldsymbol{\Psi}\circ \boldsymbol{\Psi}_{i}\circ \boldsymbol{\Psi}_{i-1}\circ \cdots \circ \boldsymbol{\Psi}_0(\rmX): \boldsymbol{\Psi}\in \mathcal{F}_{i+1}\},\epsilon_{i+1},\|\cdot\|_{2}),
\end{align*}
where $\mathcal{F}_{i+1}$ is defined in Assumption \ref{as5}.
By Proposition \ref{prop-wang},
\begin{align*}
&\log\mathcal{N}(\{\boldsymbol{\Psi}\circ \boldsymbol{\Psi}_{i}\circ \boldsymbol{\Psi}_{i-1}\circ \cdots \circ \boldsymbol{\Psi}_0(\rmX): \boldsymbol{\Psi}\in A_{r_{i+1}}(R_{i+1})\},\epsilon_{i+1},\|\cdot\|_{2})
\\ \leq & d_{i+1} r_{i+1} \log\left(1 + \frac{\widetilde{C} R_{i+1}\sqrt{r_{i+1}n}}{\epsilon_{i+1}}\right) + r_{i+1}\left(\frac{\widetilde{C} R_{i+1}\sqrt{r_{i+1}n}}{\epsilon_{i+1}}\right)^{d_i/\nu}
\end{align*}
uniformly over all $\boldsymbol{\Psi}_1,\dots,\boldsymbol{\Psi}_i$.
For any $\epsilon>0$, set
\begin{align*}
\epsilon_{i}=\frac{b_{i}\epsilon}{\tilde{b}\prod^{L}_{j=i+1}\rho_j}.
\end{align*}
Then, we have $s_L=\sum^{L}_{i=1}\left(\prod^{L}_{j=i+1}\rho_j\right)\epsilon_i=\epsilon$ and for small enough $\epsilon$,
\begin{align*}
\log\mathcal{N}(\mathcal{H}_L(\rmX),\epsilon,\|\cdot\|_{2})
\leq  & \sum^{L}_{i=1} \left\{d_{i} r_{i} \log\left(1 + \frac{\tilde{b}\prod^{L}_{j=i+1}\rho_j}{\epsilon}\right) + r_{i}\left(\frac{\tilde{b}\prod^{L}_{j=i+1}\rho_j}{\epsilon}\right)^{d_{i-1}/\nu}\right\}
\\ \leq & \sum^{L}_{i=1} d_i r_{i}\left(\frac{\tilde{b}\prod^{L}_{j=i+1}\rho_j}{\epsilon}\right)^{(d_{i-1}/\nu)\vee 1},
\end{align*}
where we have used the fact that $\log(1+x)\leq x$ for $x\geq 0.$
\end{proof}

\begin{proof}[Proof of Theorem \ref{thm-main3}]
Recall 
$\mathcal{M}_\mathcal{L}(\rmX)=\{(\mathcal{L}(f(\rvx_1),y_1),\dots,\mathcal{L}(f(\rvx_n),y_n)):f\in\mathcal{M}\}$. By Assumption \ref{as3}, we have
\begin{align*}
\log\mathcal{N}(\mathcal{M}_\mathcal{L}(\rmX),\epsilon,\|\cdot\|_{2})\leq  \sum^{L}_{i=1} d_i r_{i}\left(\frac{\max_iB(y_i)\tilde{b}\prod^{L}_{j=i+1}\rho_j}{\epsilon}\right)^{(d_{i-1}/\nu)\vee 1}\leq \frac{\xi}{\epsilon^{\tilde{d}/\nu}}.  
\end{align*}
Lemma \ref{lem-integral} implies that
\begin{align*}
\mathcal{R}(\mathcal{M}_{\mathcal{L}}(\rmX))\leq& \inf_{a>0}\left(\frac{4a}{\sqrt{n}}+\frac{12}{n}\int^{\sqrt{n}M}_{a}\frac{\xi}{\epsilon^{\tilde{d}/\nu}} d\epsilon\right)
\\ = & \inf_{a>0}\left(\frac{4a}{\sqrt{n}}+\frac{12\xi}{n}\frac{a^{1-\tilde{d}/\nu}-(\sqrt{n}M)^{1-\tilde{d}/\nu}}{\tilde{d}/\nu-1} \right)
\\ \leq & \frac{\widetilde{C}'(\xi)^{\nu/\tilde{d}}}{n^{(\nu/\tilde{d}+1)/2}(\tilde{d}/\nu-1)^{\nu/\tilde{d}}}
\end{align*}
for some constant $\widetilde{C}'>0.$
Using Lemma \ref{lem-bound} with $r=2M^2$, we have with probability greater than $1-\epsilon$,
\begin{align*}
R(f)\leq& \frac{1}{n}\sum^{n}_{i=1}\mathcal{L}(f(\rvx_i),y_i) + \frac{6\widetilde{C}'(\xi)^{\nu/\tilde{d}}}{n^{(\nu/\tilde{d}+1)/2}(\tilde{d}/\nu-1)^{\nu/\tilde{d}}}
\\&+\sqrt{\frac{4M^2\log(2/\epsilon)}{n}}+\frac{32M\log(2/\epsilon)}{3n}    
\end{align*}
for any $f\in \mathcal{M}$.    
\end{proof}

\begin{proof}[Proof of Theorem \ref{thm-main4}]
Define the class 
$\mathcal{M}_{\mathcal{L},M}(\rmX)=\{(\mathcal{L}(f(\rvx_1),y_1)\wedge M,\dots,\mathcal{L}(f(\rvx_n),y_n)\wedge M):f\in\mathcal{M}\}$. 
Following the arguments in the proof of Theorem \ref{thm-main3} with $\mathcal{M}_{\mathcal{L}}(\rmX)$ replaced by $\mathcal{M}_{\mathcal{L},M}(\rmX)$, we have
with probability greater than $1-\epsilon$,
\begin{align*}
\mathbb{E}[\mathcal{L}(f(\rvx),y)\wedge M]\leq& \frac{1}{n}\sum^{n}_{i=1}\mathcal{L}(f(\rvx_i),y_i)\wedge M + \frac{6\widetilde{C}'(\xi)^{\nu/\tilde{d}}}{n^{(\nu/\tilde{d}+1)/2}(\tilde{d}/\nu-1)^{\nu/\tilde{d}}}
\\&+\sqrt{\frac{4M^2\log(2/\epsilon)}{n}}+\frac{32M\log(2/\epsilon)}{3n}.    
\end{align*} 
The rest of the proof is similar to those in the proof of Theorem \ref{thm-main2}, and we omit the details.
\end{proof}

\begin{proof}[Proof of Corollary \ref{cor2}]
The proof is similar to the one for Corollary \ref{cor1}. We omit the details.    
\end{proof}

\begin{proof}[Proof of Theorem \ref{thm-main2-improve}]
We only highlight the steps where the arguments differ from those for Theorem \ref{thm-main2}. We let $C'$ be a generic constant that can differ from place to place. Note that for some small enough $\lambda>0$,
\begin{align*}
P(\max_{i}B^2(y_i)>\varepsilon)\leq \sum^{n}_{i=1}P(B(y_i)>\sqrt{\varepsilon}) \leq nC' \exp(-\lambda \sqrt{\varepsilon}).   
\end{align*}
Setting $\varepsilon=\{\log(nC'/\tau)/\lambda\}^2$, we have $\zeta\leq \zeta_0:=\tilde{\alpha}^3\log(2\tilde{d}\tilde{p})\{\log(nC'/\tau)/\lambda\}^2$ with probability greater than $1-\tau$. 
Next, under Assumption \ref{as4-2}, we have for some small enough $\lambda>0$ and large enough $M$, 
\begin{align*}
|R(f)-\mathbb{E}[\mathcal{L}(f(\rvx),y)\wedge M]|\leq& \mathbb{E}[(\mathcal{L}(f(\rvx),y)-M)\mathbf{1}\{\mathcal{L}(f(\rvx),y)>M\}]    
\\ \leq& \mathbb{E}[G(\rvx,y)\mathbf{1}\{G(\rvx,y)>M\}] 
\\ \leq& C'\exp(-\lambda M).  
\end{align*}
Moreover, 
\begin{align*}
&\left|\frac{1}{n}\sum^{n}_{i=1}\mathcal{L}(f(\rvx_i),y_i)-\frac{1}{n}\sum^{n}_{i=1}\mathcal{L}(f(\rvx_i),y_i)\wedge M\right|
\leq  \frac{1}{n}\sum^{n}_{i=1}Z_i,
\end{align*}
where $Z_i=G(\rvx_i,y_i)\mathbf{1}\{G(\rvx_i,y_i)>M\}$. Let $\mu_Z=\mathbb{E}[Z_i]$. By Bernstein's inequality, we have for $\varepsilon-\mu_Z>0$ sufficiently small,
\begin{align*}
P\left(\frac{1}{n}\sum^{n}_{i=1}Z_i>\varepsilon \right) = & P\left(\frac{1}{n}\sum^{n}_{i=1}(Z_i-\mu_Z)>\varepsilon - \mu_Z \right) 
\\ \leq & \exp\left(-cn\min\left\{\frac{|\varepsilon - \mu_Z|}{C'},\frac{(\varepsilon - \mu_Z)^2}{C'^2}\right\}\right),
\end{align*}
where $c$ is a positive constant. Set $M=k\log(n)$ for some large enough $k$. We have $\mu_Z=O(n^{-\lambda K})=o(n^{-1})$. Letting $\varepsilon=C'\{\log(1/\eta)/(cn)\}^{1/2}+\mu_Z$ with, we obtain
\begin{align*}
P\left(\frac{1}{n}\sum^{n}_{i=1}Z_i>\varepsilon \right)\leq  \eta.   
\end{align*}
Thus, the result follows.
\end{proof}

\begin{proof}[Proof of Corollary \ref{cor1-improve}]
By Theorem \ref{thm-main2-improve} and the definition of $\widehat{f}$, 
we have with probability greater than $1-\epsilon-\tau-\eta$,
\begin{align*}
R(\widehat{f})\leq & \frac{1}{n}\sum^{n}_{i=1}\mathcal{L}(f^*(\mathbf{x}_i),y_i) + \frac{144\sqrt{\zeta_0}\{\log(C' n\log(n)/\sqrt{\zeta_0})\vee 1\}}{n}
\\&+\frac{C'\log(n)\sqrt{\log(2/\epsilon)}}{\sqrt{n}}+\frac{C'\log(n)\log(2/\epsilon)}{3n} + C'\sqrt{\frac{\log(1/\eta)}{n}},     
\end{align*}
By Bernstein's inequality, we have
\begin{align*}
P\left(\left|\frac{1}{n}\sum^{n}_{i=1}\mathcal{L}(f^*(\rvx_i),y_i)-R(f^*)\right|>\varepsilon\right)
\leq  \exp\left(-cn\min\left\{\frac{\varepsilon}{C'},\frac{\varepsilon^2}{C'^2}\right\}\right).  
\end{align*}
The conclusion follows by setting $\varepsilon=C'\{\log(1/\eta)/(cn)\}^{1/2}$.
\end{proof}

\end{document}